\DeclareMathAlphabet{\mathsf}{OT1}{cmss}{m}{n}
\SetMathAlphabet{\mathsf}{bold}{OT1}{cmss}{bx}{n}
\providecommand{\norm}[1]{\|#1\|}
\newcommand\redout{\bgroup\markoverwith {\textcolor{red}{\rule[.5ex]{2pt}{0.4pt}}}\ULon}
\newcommand{\ours}{{ENIGMA}}
\begin{document}

\title{Towards Automatic Evaluation of Dialog Systems: A Model-Free Off-Policy Evaluation Approach}

\author{Haoming Jiang, Bo Dai, Mengjiao Yang, Tuo Zhao, Wei Wei \thanks{Work was done during Haoming Jiang's internship at Google Cloud AI. Haoming Jiang and Tuo Zhao are affiliated with Georgia Institute of Technology. Bo Dai and Mengjiao Yang are affiliated with Google Brain. Wei Wei is affiliated with Google Cloud AI. Emails: \texttt{jianghm@gatech.edu}, \texttt{\{bodai,sherryy\}@google.com}, \texttt{tourzhao@gatech.edu},  \texttt{wewei@gatech.edu}. } \thanks{Code on Github: \url{https://github.com/google-research/google-research/tree/master/dialogue_ope}}}

\date{}

\maketitle


\begin{abstract}
Reliable automatic evaluation of dialogue systems under an {\it interactive} environment has long been overdue. An ideal environment for evaluating dialog systems, also known as the Turing test, needs to involve human interaction, which is usually not affordable for large scale experiments. Though researchers have attempted to use metrics (e.g., perplexity, BLEU) in language generation tasks or some \textit{model-based} reinforcement learning methods (e.g., self-play evaluation) for automatic evaluation, these methods only show very weak correlation with the actual human evaluation in practice. To bridge such a gap, we propose a new framework named {\ours} for estimating human evaluation scores based on recent advances of off-policy evaluation in reinforcement learning. {\ours} only requires a handful of pre-collected experience data, and therefore does not involve human interaction with the target policy during the evaluation, making automatic evaluations feasible. More importantly, {\ours} is {\it model-free} and {\it agnostic to the behavior policies} for collecting the experience data (see details in Section~\ref{sec:background}), which significantly alleviates the technical difficulties of modeling complex dialogue environments and human behaviors. Our experiments show that {\ours} significantly outperforms existing methods in terms of correlation with human evaluation scores. 


\end{abstract}









\section{Introduction}
\label{sec:intro}




Building dialog systems that can communicate unhindered with humans in natural language has been one of the most important goals of artificial general intelligence research since the 1950's \citep{turing1950computing}. 
One of the fundamental research bottlenecks for developing such dialog systems falls in evaluation, namely how to measure the performance of these systems in an automatic and scalable manner. 
Different from supervised natural language understanding tasks (e.g., text classification and machine translation), an ideal environment for evaluating dialog systems, also known as the Turing test, involves multi-turn human interaction~\citep{turing1950computing,liu2016not,ghandeharioun2019approximating,see2019what}. While online platforms such as Amazon Mechanical Turk can provide human-based evaluation, they are often expensive and not scalable\citep{lowe2017towards}. 


Researchers have adopted language quality metrics for single-turn response generation given a fixed context (e.g., BLEU score and perplexity) to automatically evaluate dialog systems~\citep{devault2011toward,xiang2014problematic,higashinaka2014evaluating,gandhe2016semi,lowe2017towards}. However, 
these metrics only weakly correlate to human evaluation in practice~\citep{liu2016not,ghandeharioun2019approximating}. One cause of such weak correlation is that language quality metrics rely on the exact match between generated text and ground-truth, which generally do not fully overlap. While certain embedding-based metrics have been developed to combat this lack of coverage \citep{mitchell2008vector,dziri2019evaluating}, they are only post-hoc judgments based on static experience data, and does not necessarily reflect the dynamic quality of multi-turn interactive dialog well \citep{ghandeharioun2019approximating}. Moreover, evaluation of goal-oriented dialog systems should be based on how well dialog systems collect information from users and whether the goal is completed; language quality metrics are thus unable to meet these requirements.



To overcome the limitations of the aforementioned static evaluation methods, another line of work has proposed to model the interactive process of a conversation as a Markov decision process~(MDP)~\citep{moller2006memo,li2016deep,yu2016strategy,shah2018bootstrapping,ghandeharioun2019approximating,jaques2019way}. Accordingly, automatic evaluation of dialog systems can be formulated as an off-policy evaluation (OPE) problem, where a human subject is the so-called ``environment"  in the reinforcement learning~(RL) literature. For instance, \citet{wei2018airdialogue} propose a model-based approach for goal-oriented dialog systems. They first learn an environment/human model from the experience data consisting of human response, and then evaluate a dialog agent/policy 
by executing the policy within the learned environment. This procedure is known as ``self-play evaluation''. Such a model-based approach requires accurate estimation of an environment/human when both input and output are in a \emph{combinatorially} large space, i.e., the trained model needs to be able to mimic complex human behavior of generating meaningful sentences from huge vocabulary. Unfortunately, such a requirement is far beyond the current capability of model-based reinforcement learning algorithms. As a result, evaluations that rely on accurate modeling of the environment is often unreliable. A similar model-based approach is proposed \citep{ghandeharioun2019approximating} to evaluate open-domain chit-chat dialog systems. In addition to modeling human behavior, they also model the reward function (for mimicking the complex mechanism behind human ratings) based on handcrafted features, which makes evaluation even more unreliable.




In this paper, we propose a general OPE framework named {\ours} (\underline{E}valuati\underline{N}g d\underline{I}alo\underline{G} syste\underline{M}s \underline{A}utomatically) for estimating human evaluation score (i.e., how a human would rate a dialog system). 
Different from the aforementioned model-based approaches, which rely on complex modeling of human behavior given combinatorially large vocabulary, {\ours} takes advantage of recent advances in model-free OPE and avoids direct modeling of dynamic transitions and reward functions in a complex environment. 
Moreover, {\ours} overcomes several limitations of existing OPE methods in order to evaluate dialog systems: {\bf{(I)}} Existing OPE methods only apply to infinite or fixed horizon settings (where horizon length corresponds to number of turns in a conversation), while conversations, on the other hand, often have {varying} horizon lengths; {\bf{(II)}} Existing OPE methods require experience data to sufficiently cover states and actions a target policy might visit. Due to limited experience data and the combinatorial nature of languages, such a requirement can hardly be satisfied in dialog evaluation; {\bf{(III)}} Certain OPE methods rely on accurate estimation of the behavior policies used to collect the experience data. 
Unfortunately, such behavior policies are humans or complex dialog systems, and estimating their probabilistic model is essentially a challenging imitation learning problem. \footnote{Note that even though some of the model-free OPE estimators still require modeling behavior policies, they are still significantly easier than model-based OPE, which has to model the underlying dialog environment.} 

To address {\bf{(I)}}, we propose a pseudo state padding method, which augments each conversation into infinitely many turns and yet preserves the original policy value; to address {\bf{(II)}}, we leverage pre-trained language models \cite{devlin2018bert}, which essentially transfer knowledge from open-domain data to alleviate the coverage issue; to address {\bf{(III)}}, we adopt a stationary distribution correction estimation approach \cite{nachum2019dualdice}, which directly models the state-action density ratio between the experience data and the target policy \cite{liu2018breaking}, and is therefore agnostic to the behavior policy. We summarize {\ours} in comparison to existing works in Table \ref{tab:comparison}.\footnote{We only present a compact table due to space limit. More details can be found in Appendix \ref{app:dialog_cmp}.}


We conduct thorough experiments on evaluating goal-oriented (AirDialog, \citet{wei2018airdialogue}) and chit-chat (ConvAI2, \citet{dinan2020second}) dialog systems to demonstrate the superiority of {\ours}. Specifically, we follow the experimental settings similar to \citet{ghandeharioun2019approximating,see2019what} (See details in Section \ref{sec:exp}), and show {\ours} significantly outperforms the existing static evaluation and self-play evaluation methods in both domains.



The rest of this paper is organized as follows: Section~\ref{sec:background} introduces the background of dialog systems and model-free OPE; Section~\ref{sec:method} presents the ENIGMA framework for automatically evaluating dialog systems; Section~\ref{sec:exp} presents the experimental results and detailed analysis; more discussions on {\ours} and related work are presented in Section~\ref{sec:discussions}.

\begin{table*}[tb!]
\centering

\caption{Comparison of existing works on Automatic Evaluation of Dialog Systems.}
\begin{tabular}{c | c | c | c | c }
\toprule\hline
	Method & Criterion & Dynamic
	& Model-Free &  Behavior-Policy \\
	\hline
	 BLEU, PPL & Language Quality & No  & N/A & Human \\  
	  \citet{lowe2017towards} & Language Quality & No  & N/A & Model \& Human \\  
	 \citet{wei2018airdialogue} & Task Completion & Yes  & No &  Human \\
	 \citet{ghandeharioun2019approximating} &Language Score & Yes  & No  & Model \\
	{\ours} & Both & Yes  & Yes  & Model \\
\hline\bottomrule
\end{tabular}
\label{tab:comparison}
\end{table*}


\section{Background}
\label{sec:background}









\noindent $\bullet$ {\bf Dialog Generation as Markov Decision Process}. A conversation is generated through interactions alternating between an agent $\pi$ (i.e., a dialog system) and an environment $\cE$ (i.e., a human). We denote the dialog as $h=\{e_0,a_1,e_1,...,a_T\}$, where $a_i$ and $e_i$ are sentences generated by $\pi$ and $\cE$ respectively, and $T$ is the number of turns in the conversation. Dialog can be naturally described as a Markov decision process (MDP)~\citep{puterman1995markov} $\cM = \langle \cS, \cA, P, R, \mu_0 \rangle$. Specifically, at the $t$-th turn, state $s_t \in \cS$ captures the previous conversation history $s_t=\{e_0,a_1,e_1,...,a_{t-1},e_{t-1}\}$. An action $a_t \in \cA$ is an agent's response given this context. Conversation can then be represented by the last state and action, i.e., $h=\{s_T,a_T\}$. An agent $\pi$ is essentially a policy that maps $\cS$ to $\cP(\cA)$, where $\cP(\cdot)$ denotes the set of probability measures over the action space. A transition kernel $P(\cdot|s_t, a_t)$ returns $s_{t+1}$ as the state at turn $t+1$, and an environment $\cE$ generates a reward $r_t=R(s_t,a_t)\in [0,1]$. Note that $s_{t+1}$ essentially concatenates $s_{t}$ and $a_t$ with $e_t$, where $e_t$ is a response from the environment (i.e., human) at
\noindent the $t$-th turn. The initial state $s_1 = \{e_0\}$ is randomly sampled from some distribution $\mu_0$. An illustrative example of the dialog on booking a flight ticket is shown in Figure~\ref{fig:dialog}  \citep{wei2018airdialogue}.

\begin{figure}[htb!]
  \centering
  \includegraphics[width=0.7\textwidth]{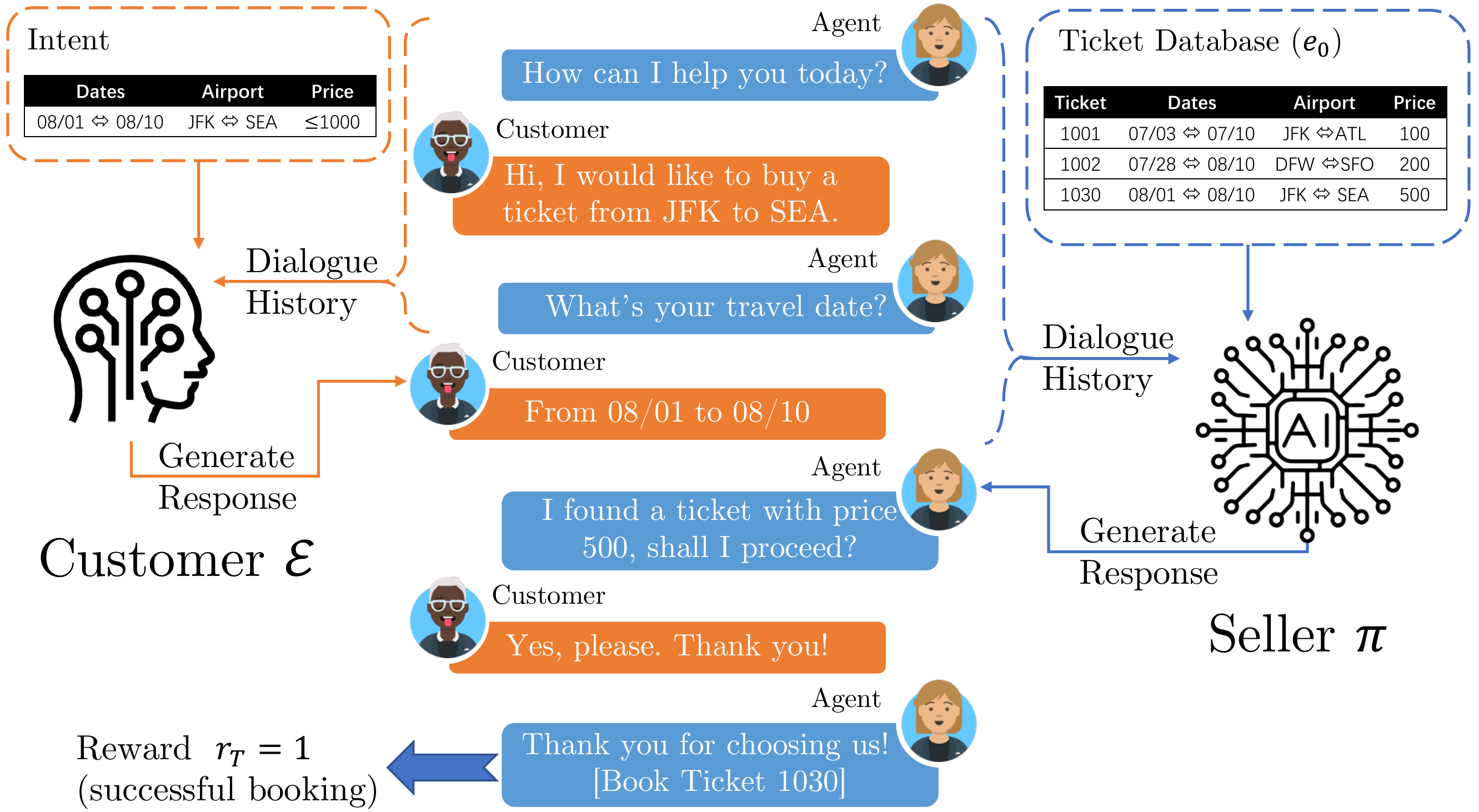}
 \caption{Dialog for booking a flight ticket (Airdialog). }
 \label{fig:dialog}
\end{figure}

Note that the reward $r_t=R(s_t,a_t)$ generated by the environment depends on the purpose of the dialog system: for open-domain chit-chat dialog, $R$ measures language quality; for goal-oriented agents, $R$ measures task completion scores.  
In particular, we follow the \textit{sparse reward} setting, where each conversation is only evaluated at the ending state, i.e., $r_t = 0$ for $t < T$ \citep{wei2018airdialogue}.

\vskip-3pt
\noindent $\bullet$ {\bf Automatic Dialog Evaluation as Off-Policy Evaluation}. Dialog evaluation can be naturally viewed as computing the expected reward of the above MDP defined as 
\begin{align}
	\rho(\pi) = \EE_{h \sim \mu_0,\pi,\cE} [R(s_T,a_T)],
	\label{eq:dialog-value}
\end{align}
where $h=\{s_T,a_T\}$ is sampled from the initial distribution $\mu_0$ and the interaction between $\pi$ and $\cE$. When the environment (i.e., human) is accessible, $\rho(\pi)$ can be directly estimated by interaction with the environment, which is known as \textit{on-policy evaluation}~\citep{sutton2018reinforcement}. 

In dialog systems, however, interaction with human is expensive or prohibitive in practice, so human-free automatic evaluation is desired. \textit{Off-policy evaluation} (OPE)~\citep{precup2000eligibility} is an appealing choice when access to the environment is limited or unavailable. In particular, OPE can estimate $\rho(\pi)$ based solely on pre-collected tuples $\cbr{\rbr{s, a, r, s'}_i}_{i=1}^N$ from behavior policies that are different from $\pi$. 

OPE has been considered as one of the most fundamental problems in RL. A straightforward approach is to first directly learn an environment model ($R$ and $P$) from experience data and then estimate $\rho(\pi)$ by executing the policy within the learned environment. Such \textit{model-based} OPE exactly corresponds to the so-called ``self-play evaluation'' in the dialog system literature \citep{wei2018airdialogue,ghandeharioun2019approximating}.
Unfortunately, it is notoriously difficult to specify a proper model for highly complicated environments such as a dialog environment (i.e., a human), where the state and action spaces are combinatorially large due to huge vocabulary size and complex transitions. As a result, the estimation error of the environment accumulates as interaction proceeds, and model-based self-play evaluation of dialog systems often becomes unreliable \citep{voloshin2019empirical}.

To address the challenge above, many \textit{model-free} OPE methods that avoid direct modeling of the environment have been proposed. Model-free OPE can be categorized into \textit{behavior-aware} and \textit{behavior-agnostic} methods.
Specifically, behavior-aware methods rely on either knowing or accurately estimating the probabilistic model of behavior policies used for collecting the experience data (e.g., inverse propensity scoring,~\citet{horvitz1952generalization}). Unfortunately, behavior policies are often unknown in practice. Estimating their probabilistic models is also quite challenging, as it requires modeling human behaviors or complex dialog systems.
Behavior-agnostic methods, on the other hand, do not require explicit knowledge or direct modeling of behavior policies, and are therefore more favorable when experience data is collected by multiple (potentially unknown) behavior policies.

Unfortunately, most of the existing model-free behavior-agnostic OPE methods focus on either infinite-horizon~\citep{nachum2019dualdice,zhang2020gendice,yang2020off} or fixed-horizon settings~\citep{yin2020asymptotically,duan2020minimax}, and cannot be applied to evaluating dialog systems whose horizon (number of turns) vary between conversations. While LSTDQ~\citep{lagoudakis2003least} can be adopted to handle varying horizons, it has been shown to not work well under the sparse reward setting \cite{lagoudakis2003least,mataric1994reward}.

\section{{\ours}}
\label{sec:method}

We present the {\ours} framework for automatically evaluating dialog systems using experience data. In particular, {\ours} is model-free and agnostic to behavior policies for generating the experience data. {\ours} has three major ingredients: 
{\bf (1)} pseudo-state padding for converting a dialog into an infinite-horizon MDP,
{\bf (2)} distribution-correction estimation (DICE, \citet{nachum2019dualdice}) with post-normalization for estimating the value of the target policy based on experience data,
and 
{\bf (3)} function approximation and representation learning with pre-trained language models. 



\subsection{Pseudo-State Padding}

As mentioned in Section 2, existing model-free behavior-agnostic OPE methods cannot handle varying horizon lengths in conversations under the sparse reward setting. 
To address this issue, we design a special padding scheme, so that the policy value can be estimated by OPE methods from the resulting padded MDP. We first pad conversation sequences with pseudo states, which leads to a padded MDP with a fixed horizon length $T_{\rm max}$. We then convert such a fixed horizon MDP into infinite horizon by augmentation, i.e., we repeatedly concatenate the ending state of the fixed horizon MDP to its initial state. 

More specifically, as illustrated in Figures \ref{fig:aug_mdp}, the policy takes a deterministic action at all pseudo states, i.e., $\pi(a={\rm NextPad}|s={\rm Pad}_{k})=1$. The transition kernel of the new process can be defined as 
\begin{center}
\begin{tcolorbox}[width = 0.7\columnwidth]
\vspace{-0.25in}
\begin{align*}
    &{\rm Conversation~Transition:} 
    \\ 
    &P(s'=s\cup a \cup e|s,a,{\rm incomplete~conv.})=\cE(e|s,a), \\
    &{\rm Jump~into~Pseudo~States:} 
    \\
    &P(s'={\rm Pad}_{T+1}|s, a, {\rm complete~conv.~with~}T{\rm ~turns})\hspace{-1mm}=\hspace{-1mm}1,\\
    &{\rm Jump~between~Pseudo~States:} \\
    & P(s'={\rm Pad}_{k+1}|s={\rm Pad}_{k},a={\rm NextPad},k < T_{\rm max})\hspace{-1mm}=\hspace{-1mm}1,\\
    &{\rm Jump~out~of~Pseudo~States:} \\
    & P(s'|s={\rm Pad}_{T_{\rm max}},a={\rm NextPad})=\mu_0\rbr{s'}.
\end{align*}
\end{tcolorbox}
\end{center}

This new process is still a valid MDP, as its transition kernel satisfies the Markov property. For notational simplicity, we refer to such an augmented MDP with infinite horizon as ``the augmented MDP''. 


Accordingly, the policy value of $\pi$ for the augmented MDP can be defined as 
\begin{align}
	\rho_A(\pi) \textstyle=\lim_{N \rightarrow \infty}&{\EE}_{(h_1,h_2,...,h_N) \sim \mu_0,\pi,\cE} \textstyle[\frac{1}{N T_{\rm max}} \sum_{i=1}^{N}\sum_{t=1}^{T_{\rm max}} R(s_{t}^{(i)},a_{t}^{(i)})],
	\label{eq:aug-dialog-value}
\end{align}
where $h_i$'s are padded conversations sampled from interactions between $\pi$ and $\cE$. Since there is only one non-zero reward for every $T_{\rm max}$ steps, rewards in the augmented MDP are also sparse.

\begin{figure}[htb!]
	\centering
	\includegraphics[height=1.8in]{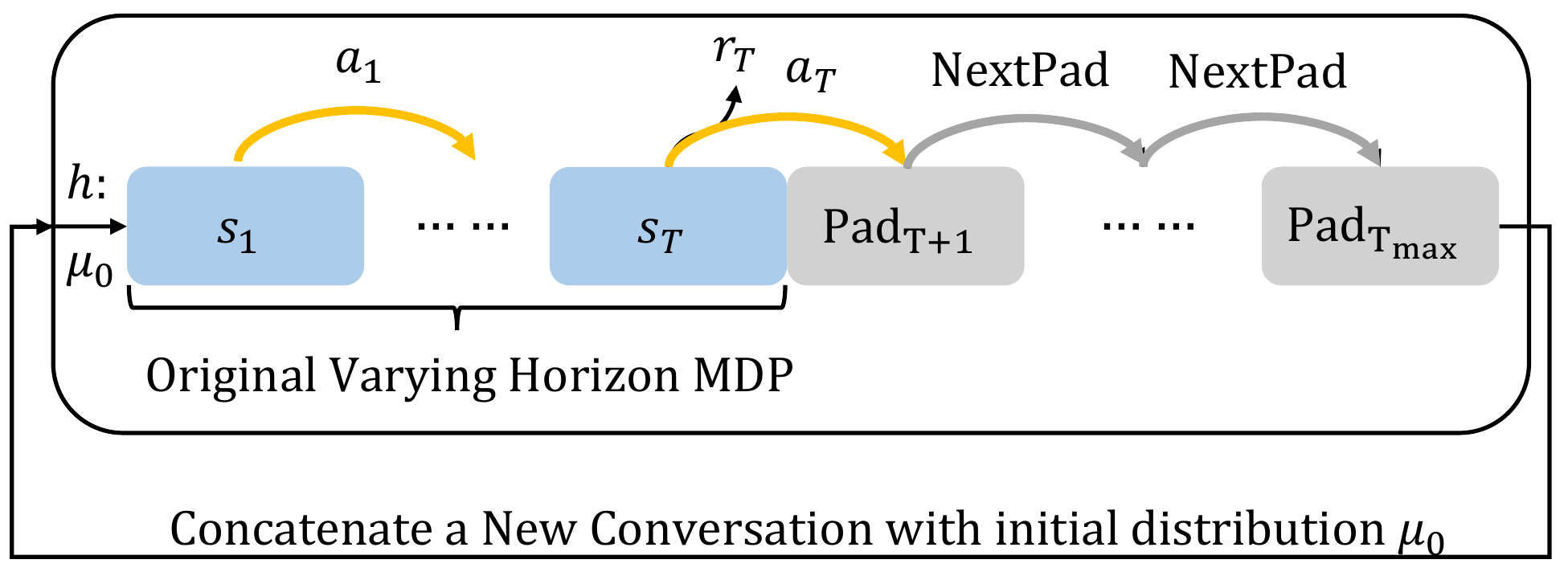}
	\caption{Augmented MDP with Infinite Horizon. }
	\label{fig:aug_mdp}
\end{figure}

We justify such a padding scheme in the following theorem showing that the augmented MDP has a unique stationary distribution, and guarantees any policy $\pi$ to have a finite policy value. 
Moreover, the policy value of $\pi$ under the augmented MDP is proportional to its counterpart under the original MDP without augmentation. Due to space limit, we defer the proof to Appendix~\ref{app:thm}. 

\begin{theorem}
	\label{thm:pad}
	The augmented MDP with infinite horizon satisfies the following properties:
	\noindent $\bullet$ It has a unique stationary state-action visitation distribution $d^\pi(s,a)$;
	
		\noindent $\bullet$   For the state-action pair $(s_t,a_t)$ in a conversation $h$ with padded pseudo states, we have
        \begin{align}
        d^\pi(s_t,a_t) = &\textstyle\frac{1}{T_{\rm max}}\sum_{\{(s_k,a_k)\}_{k=1}^{t-1}}[ \mu_0(s_1)\pi(a_1|s_1)\notag\\
        &\hspace{-0.3in}P(s_2|a_1,s_1)\cdots P(s_t|a_{t-1},s_{t-1})\pi(a_t|s_t)],
        \label{eq:stationary_dist}
        \end{align}
        where $\{(s_k,a_k)\}_{k=1}^{t-1}$ are the state-action pairs in the same conversation as $(s_t,a_t)$;
		
		\noindent $\bullet$  The policy value can be computed by sampling from $d^\pi(s,a)$, and we have
        \begin{align}
		    \rho_{A}(\pi) = {\EE}_{(s,a) \sim d^\pi(s,a)} [R(s,a)] = \rho(\pi)/T_{\rm max}.
	    \label{eq:aug-dialog-value-2}
        \end{align}
\end{theorem}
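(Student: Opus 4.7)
The plan is to exploit the renewal structure that the padding induces: every trajectory of the augmented MDP passes through $\mathrm{Pad}_{T_{\max}}$ exactly once every $T_{\max}$ steps, and at that point the chain is restarted via $\mu_0$ independently of the past. This regenerative structure both forces uniqueness of the stationary distribution and pins down its value.

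First I would establish existence and uniqueness of $d^\pi(s,a)$. The key observation is that the augmented chain is a regenerative process with renewal epochs at visits to $\mathrm{Pad}_{T_{\max}}$: starting from any state, within at most $T_{\max}$ steps the chain deterministically reaches some $\mathrm{Pad}_{k}$ (either because the conversation completes or because we are already in a pseudo state) and then walks deterministically through $\mathrm{Pad}_{k+1},\dots,\mathrm{Pad}_{T_{\max}}$ before resetting according to $\mu_0$. The inter-renewal times are bounded (hence finite in expectation), and the distribution of the trajectory between consecutive renewals depends only on $\pi$, $\mu_0$, and $\cE$ — not on the current state. By the standard renewal-reward / regenerative-process argument (e.g., the ergodic theorem for regenerative processes), the empirical state-action distribution converges almost surely to a unique invariant distribution $d^\pi$, and time averages of any bounded function converge to its $d^\pi$-expectation.

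Second I would derive the explicit formula \eqref{eq:stationary_dist}. Within one renewal cycle of length $T_{\max}$, a state-action pair $(s_t,a_t)$ appearing at the $t$-th turn of a conversation is visited at most once, with probability equal to the joint probability of the trajectory $s_1,a_1,\dots,s_t,a_t$ under $\mu_0$, $\pi$, $P$. Marginalizing over the prefix $\{(s_k,a_k)\}_{k=1}^{t-1}$ gives the bracketed sum in \eqref{eq:stationary_dist}. Dividing by the cycle length $T_{\max}$ (the expected number of steps per renewal) yields the long-run visitation frequency, which is $d^\pi(s_t,a_t)$. For pseudo states the analogous computation gives $d^\pi(\mathrm{Pad}_k,\mathrm{NextPad}) = \frac{1}{T_{\max}}\Pr(\text{conversation ends in }\le k-1\text{ turns})$, and one can verify that the sum of $d^\pi$ over all state-action pairs (genuine plus pseudo) equals one, confirming that $d^\pi$ is a valid probability distribution.

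Finally I would prove \eqref{eq:aug-dialog-value-2}. By the regenerative ergodic theorem invoked above, the Cesàro average in \eqref{eq:aug-dialog-value} equals $\EE_{(s,a)\sim d^\pi}[R(s,a)]$. Because of the sparse reward assumption, $R(s,a)=0$ at all pseudo states and at all non-terminal conversation states, so only terminal pairs $(s_T,a_T)$ contribute. Summing the formula from \eqref{eq:stationary_dist} against $R$ over all $T$ and all terminal trajectories gives
\begin{align*}
\EE_{(s,a)\sim d^\pi}[R(s,a)] \;=\; \tfrac{1}{T_{\max}}\,\EE_{h\sim \mu_0,\pi,\cE}[R(s_T,a_T)] \;=\; \rho(\pi)/T_{\max},
\end{align*}
which is the desired identity.

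The main obstacle I expect is not the formula in part two, which is essentially bookkeeping, but the rigorous justification of uniqueness and of the ergodic limit over the combinatorially large (and in principle unbounded) state space $\cS$. A clean way around it is to note that the renewal-cycle decomposition reduces all relevant computations to a single cycle of bounded length $T_{\max}$, so measure-theoretic complications from the vocabulary-scale state space never actually need to be unpacked — every expectation can be written as an expectation over one cycle, which is well-defined under the given MDP.
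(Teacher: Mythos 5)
Your proposal is correct, but it reaches the result by a different route than the paper. The paper explicitly avoids ergodic-theoretic machinery (noting the chain is periodic with period $T_{\rm max}$) and instead solves the stationarity balance equations directly: it groups state-action pairs by turn index, shows the total mass $S_t$ assigned to each turn group is constant and hence equals $1/T_{\rm max}$, pins down $d^\pi({\rm Pad}_{T_{\rm max}},{\rm NextPad})=1/T_{\rm max}$, and then unrolls the balance equation by induction on $t$ to get \eqref{eq:stationary_dist} as the unique solution; the value identity is then obtained by expanding both $\rho$ and $\rho_A$ with the sparse-reward indicator. You instead treat visits to ${\rm Pad}_{T_{\rm max}}$ as regeneration epochs of a renewal process with deterministic cycle length $T_{\rm max}$ and i.i.d.\ cycles drawn from $(\mu_0,\pi,\cE)$, and read off $d^\pi(s_t,a_t)$ as (expected visits per cycle)$/T_{\rm max}$ via the renewal-reward theorem, with uniqueness and the Cesàro limit delegated to standard regenerative-process theory. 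The trade-off: the paper's argument is self-contained and purely algebraic, proving uniqueness of the balance-equation solution from scratch, but it passes somewhat quickly over why the Cesàro-average definition of $\rho_A$ in \eqref{eq:aug-dialog-value} equals the expectation under $d^\pi$ for a periodic chain; your cycle decomposition justifies exactly that step cleanly (time averages only, so periodicity is harmless) and also yields the nice sanity check $d^\pi({\rm Pad}_k,{\rm NextPad})=\frac{1}{T_{\rm max}}\Pr(T\le k-1)$ with total mass one, at the cost of invoking external theorems rather than elementary manipulation. Two small points to tighten: both arguments implicitly require every conversation to terminate within $T_{\rm max}$ turns (otherwise the cycle length is not bounded by $T_{\rm max}$), and your phrase that the chain ``deterministically'' reaches a pad state is loose --- the path through the conversation is stochastic; what matters is only that the hitting time of ${\rm Pad}_{T_{\rm max}}$ is bounded and the post-renewal law is independent of the past.
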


\begin{remark}
Some OPE methods, e.g., LSTDQ \cite{lagoudakis2003least}, can handle fixed horizons, therefore only applying the fixed-horizon padding would suffice. DICE estimators \citep{nachum2019dualdice}, on the other hand, can only handle infinite horizons, therefore the infinite-horizon augmentation is necessary.
\end{remark}
\begin{remark}
 Note that in practice, we do not actually need to concatenate infinitely many conversations for computing $\rho_A(\pi)$. As suggested by \eqref{eq:aug-dialog-value-2}, $\rho_A(\pi)$ can be computed based on $d^\pi(s_t,a_t)$ defined in \eqref{eq:stationary_dist}, which is the product of only finite terms.
\end{remark}

\subsection{Model-Free Behavior-Agnostic DICE Estimator}
\label{sec:method-dice}
With the proposed augmentation, we obtain an infinite horizon MDP from which the policy value of the original MDP can be recovered. We then apply DICE \citep{nachum2019dualdice,yang2020off} to estimate $\rho_A(\pi)$ based on pre-collected experience data $\cD = \cbr{\rbr{s, a, r, s'}_i}_{i=1}^N$ without interacting with $\cE$ (i.e., a human), where $(s,a) \sim d^\cD$ are samples from some unknown distribution $d^\cD$. 
We slightly abuse the notations and use $(s,a,r,s') \sim d^\cD$ as a shorthand for $(s,a) \sim d^\cD,r=R(s,a),s' \sim P(\cdot|s,a)$, 
which simulates sampling form the dataset $\cD$.

DICE is a model-free policy evaluation method (without explicitly modeling $\cE$) and does not require knowledge of behavior policies for generating the experience data, which provides a more reliable estimation of $\rho_A(\pi)$ than other OPE methods. Specifically, DICE decomposes $\rho_A(\pi)$ into:
\begin{align}
	\rho_A(\pi) = \EE_{ (s,a,r) \sim d^\cD } [\zeta(s,a) r],
	\label{eq:dice_goal}
\end{align}
where $\zeta(s,a) := d^\pi(s,a)/d^\cD(s,a)$ is the \textit{ distribution correction ratio}. Then DICE estimates $\zeta$ by solving the following regularized minimax optimization problem:
\begin{align}\label{eq:dice_minmax_obj}
	\max_{\zeta \geq 0} \min_{\nu,\lambda} &L_D(\zeta, \nu, \lambda) 
	=  \EE_{ (s,a,r,s') \sim d^\cD,a' \sim \pi(s')}[ \zeta(s,a)\notag\\
	&\cdot(\nu(s',a') - \nu(s,a)) ]+ \EE_{ (s,a) \sim d^\cD} [ \lambda(\zeta(s,a)\notag\\
	& - 1) ]  - \alpha_\zeta \cdot \EE_{(s,a) \sim d^\cD}[f(\zeta(s,a))].
\end{align}
where $\nu(s,a)$'s are auxiliary variables, $f$ is a convex regularizer (e.g., $f(x)=x^2$), and $\alpha_\zeta$ is a tuning parameter. Due to the space limit, we omit the details of deriving the DICE estimator. Please refer to \citet{yang2020off} for more technical details.

\noindent $\bullet$ \textbf{Post-Normalization}. Note that \eqref{eq:dice_minmax_obj} handles the constraint $\EE_{(s,a) \sim d^\cD} \zeta(s,a) = 1$ by Lagrange multipliers $\lambda$, which cannot guarantee that the constraint is \emph{exactly} satisfied when solving \eqref{eq:dice_minmax_obj} using alternating SGD-type algorithms \citep{dai2017boosting,chen2018landscape}. To address this issue, we propose a post-normalization step that explicitly enforces the constraint:
\begin{align}
    \textstyle
	\rho_{n}(\pi) = \sum_{(s,a,r) \sim d^\cD} {\zeta}(s,a) r \Big/ \sum_{(s,a) \sim d^\cD} {\zeta}(s,a).
	\label{eq:dice_est_sn}
\end{align}
As we will see in our experiments in Section~\ref{sec:exp}, the post-normalization step is crucial for DICE to attain good estimation accuracy in practice; without the post-normalization, we observe potential divergence in terms of policy value estimation.

\noindent $\bullet$ {\bf Why do we prefer DICE?} Deep Q-learning and its variants are another popular model-free and behavior-agnostic approach to off-policy evaluation. However, due to the sparse rewards in dialogs, fitting the state-action value function (i.e., the $Q$-function) in deep Q-learning is notoriously difficult~\citep{mataric1994reward}. We observe in Section~\ref{sec:exp} that deep Q-learning is computationally unstable.

In contrast, DICE only needs to estimate the density correction ratio $\zeta$, which is decoupled from the rewards associated with the policy value as shown from \eqref{eq:dice_minmax_obj}. This significantly alleviates the computational challenge incurred by sparse rewards. Moreover, DICE also applies the post-normalization, additional regularization (i.e., $\EE_{(s,a) \sim d^\cD}[f(\zeta(s,a))]$), and constraints on $\zeta$  (i.e., $\zeta \geq 0$ and $\EE_{ (s,a) \sim d^\cD} [\zeta(s,a)]=1$), all of which further stabilize training. These features allow DICE achieve better estimation performance than deep Q-learning in dialog systems evaluation.


Recent progresses in OPE based on density ratio estimation are remarkable~\citep{liu2018breaking,nachum2019dualdice,xie2019towards,uehara2019minimax}, however, there exists a statistical limit in off-policy evaluation. Specifically, the Cramer-Rao lower bound of the MSE has been established in~\citet{jiang2016doubly}, which is proportional to the square of the density ratio. This implies that we can only obtain accurate estimation of policy value only if the ratio $\zeta$ is not too large. While the ratio-based minimax algorithms should have achieved the asymptotic lower bound~\citep{kallus2019efficiently,yin2020asymptotically}, even better estimation results can be obtained when behavior and target policies are more similar. 
We thus introduce an experience data collection protocol in Section~\ref{new-protocol} which satisfies the bounded ratio requirement and ensures the success of OPE methods.

\subsection{Function Approximation with RoBERTa}
\label{sec:method-bert}


Despite the apparent advantages of DICE estimators, directly training DICE from scratch will fall short due to the bounded ratio requirement being quickly broken in the large combinatorial state-action space in dialog.

We alleviate this issue by learning reliable representations from an enormous amount of pre-collected data. We resort to the domain transfer learning technique, also known as language model pre-trained and fine-tuning \cite{devlin2018bert}. For example, RoBERTa\citep{liu2019roberta} is an extremely large bidirectional transformer model \citep{vaswani2017attention} pre-trained using huge amounts of open-domain text data in a self-supervised/unsupervised manner. RoBERTa is particularly attractive to the dialog evaluation task due to the following merits: (1) the pre-training process does not require any labelled data; (2) the pre-trained models are publicly available; (3) the massive model sizes (usually with hundreds of millions or billions of parameters) allow these models to effectively capture rich semantic and syntactic information of natural language (rather than enumerating the original combinatorial language space).



To transfer the knowledge from the pre-trained RoBERTa model to dialog evaluation, we parameterize $\zeta$ and $\nu$ as follows: We keep the pre-trained RoBERTa encoder layer and replace the original mask language modeling head by a two-layer fully connected network with a scalar output. For simplicity, we denote the corresponding parametric forms of $\zeta$ and $\nu$ as RoBERTa-$\zeta$ and RoBERTa-$\nu$, respectively. Note that we only need RoBERTa-$\zeta$ and RoBERTa-$\nu$ to share the same encoder, as illustrated in Figure~\ref{fig:func_approx}. We then use RoBERTa-$\zeta$ and RoBERTa-$\nu$ as the initial solution to solve \eqref{eq:dice_minmax_obj}, which is also known as the fine-tuning step~\citep{devlin2018bert,liu2019roberta}.

\begin{figure}[htb!]
	\centering
	\includegraphics[width=0.6\textwidth]{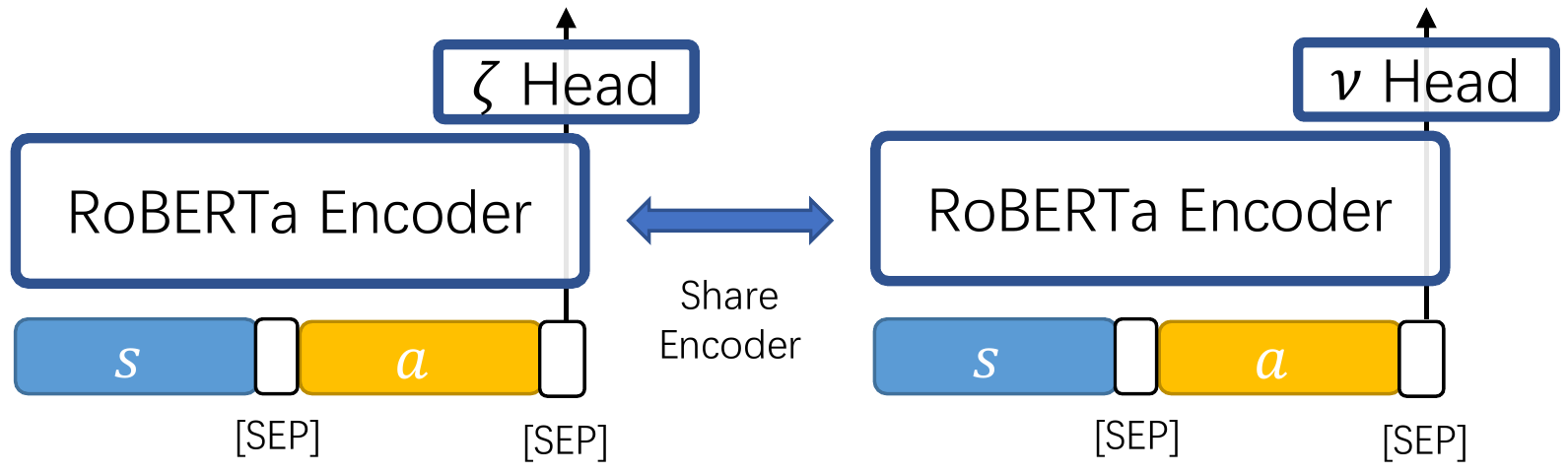}
	\caption{Function Approximation with RoBERTa.}
	\label{fig:func_approx}
\end{figure}

With a properly designed mask, the self-attention mechanism in the bi-direction transformer architecture allows us to efficiently compute $\zeta(s,a)$ and $\nu(s,a)$ for all state-action pairs in the same dialog simultaneously. Due to the space limit, we defer the mask design details to Appendix~\ref{app:bert}.

\subsection{Summary}
\label{sec:method-procedure}

We summarize ENIGMA  using SGD with batch-size 1 in Algorithm~\ref{algo:summary}. We defer the details of {\ours} with mini-batch SGD to Appendix~\ref{app:algo} (Algorithm~\ref{algo:main}).

\begin{algorithm}[htb]
	\caption{ENIGMA}\label{algo:summary}
	\begin{algorithmic}[1]
		\INPUT Experience conversations $\cD = \{(h_i= \{e^{(i)}_0,a^{(i)}_1,e^{(i)}_1,...,a^{(i)}_{T_i} \}, r^{(i)})\}_{i=1}^{N}$, Target Policy $\pi$, Padding Length $T_{\rm max}$, Regularization function $f$, DICE hyper-parameters $\alpha_\zeta$, $\alpha_R$ 
		\OUTPUT Performance Estimation $\hat \rho_{n}(\pi)$
		\PARAMETER  $\zeta=\{$RoBERTa-$\zeta, [\zeta_{{\rm pad},t}]_{t=1}^{T_{\rm max}}\}$, $\nu=\{$RoBERTa-$\nu, [\nu_{{\rm pad},t}]]_{t=1}^{T_{\rm max}}\}$, $\lambda$
		\While{ Sample $({h}, r)$ in $\cD$ }
		\State $\zeta_{0} = \zeta_{{\rm pad},T_{\rm max}}$,~~ $\nu_{0} = \nu'_{0} = \nu_{{\rm pad},T_{\rm max}}$ 
		\For{$t$ in $1, \cdots, T$}
		\State $\zeta_{t} = $ RoBERTa-$\zeta(e_0,a_1,...,e_{t-1},a_{t})$
		\State $\nu_{t} = $ RoBERTa-$\nu(e_0,a_1,...,e_{t-1},a_{t})$
		\State $\tilde{a}_{t} \sim  \pi(e_0,a_1,...,e_{t-1})$
		\State $\nu'_{t} = $ RoBERTa-$\nu(e_0,a_1,...,e_{t-1},{\color{red}\tilde{a}_{t}})$
		\EndFor
		\For{$t$ in $T + 1, \cdots, T_{\rm max}$}
		\State $\zeta_{t} = \zeta_{{\rm pad},t}$,~~ $\nu_{t} = \nu'_{t} = \nu_{{\rm pad},t}$ 
		\EndFor
		\State $L_D(\zeta, \nu, \lambda) = \frac{1}{T_{\rm max}} [ \sum_{t=0}^{ T_{\rm max} -1 } [\zeta_t (\nu'_{t+1} - \nu_t ) + \lambda (\zeta_t-1)  -\alpha_\zeta f(\zeta_t) ]]$
		\State SGD update based on $\frac{\partial L_D}{\partial \nu}$,$\frac{\partial L_D}{\partial \lambda}$,$\frac{\partial -L_D}{\partial \zeta}$.
		\EndWhile
		
		\For{ $({h}_i, r^{(i)})$ in $\cD$}
		\State $\zeta_i = $ RoBERTa-$\zeta({h}_i)$
		\EndFor
		\State \textbf{Return} $\hat{\rho}_{n}(\pi) = \sum_{i} \zeta_i r^{(i)} \big/  \sum_{i} \zeta_i$
	\end{algorithmic}
\end{algorithm}

\section{Experiments}
\label{sec:exp}

We empirically evaluate {\ours} on two dialog datasets: AirDialog \citep{wei2018airdialogue} for goal-oriented tasks and ConvAI2 \citep{dinan2020second} for open-domain chit-chat respectively. See details of experimental setup in Appendix~\ref{app:exp-setup}.
\footnote{We release our source code for ENIGMA algorithm on Github: https://sites.google.com/view/eancs/shared-task.}

\subsection{Policy Training Data and Experience Data}\label{new-protocol}


As mentioned in Section \ref{sec:method-dice}, there exists an information theoretic limit for all off-policy evaluation methods: no method can perform well when the state-action density ratio between the target and behavior policy is too large. To avoid such a circumstance, we need to ensure that the experience data collected by a behavior policy do not deviate too much from data induced by the target policy. Unfortunately, both datasets used in our experiments do not satisfy such a requirement. AirDialog, for example, consists of dialog between humans, which are near-perfect golden samples as human agents almost always successfully book tickets for customers. Dialog system agents, on the other hand, have many failure modes (i.e., the target policy/agent does not book the correct ticket for a human customer). Hence, directly using human dialog as the behavior data to evaluate dialog agents is subject to limitations.

In order to properly evaluate an imperfect target policy in the presence of the information theoretic limit, we refer to \citet{lowe2017towards,ghandeharioun2019approximating,see2019what}, and collect experience data using behavior policies similar to the target policy. To avoid confusion, we call data collected by the behavior policy ``experience data'' and data used to train an agent ``policy training data''. More details are elaborated below for each dataset.

It is worth noting that existing work on dialog systems evaluation also enforces similar requirements. For example, \citet{lowe2017towards} show higher Pearson correlation coefficient ($0.37$) between automatic metrics and human ratings when behavior policies contain the target policy. When the target policy is excluded from behavior policies, however, the correlation is only $0.13$, even lower than the meaningless correlation between dialog lengths and human ratings ($0.27$). Another example is \citet{ghandeharioun2019approximating}, where the studied agents are similar to each other in their hierarchical architectures, hyperparameters, and training data.

We compare the experience data used in this paper with existing works in Table~\ref{tab:cmp_prior}.

\begin{table}[htb!]
    \centering
    \caption{Number of Dialogues/Agents in Experience Data. $^\dagger$: 4,104 dialog turns and the number of dialogs is not avaliable.}
    \begin{tabular}{c|cc}
    \toprule
    \hline
        Experience Data & Agents & Dialogs \\
        \hline
       \citet{lowe2017towards}                  & 4 & N/A $^\dagger$ \\
       \citet{ghandeharioun2019approximating}  &  12 & 512 \\
       Ours-Airdialog & 24 & 2400 \\
       Ours-ConvAI2 & 29 & 2616 \\ \hline
       \bottomrule
    \end{tabular}
    \label{tab:cmp_prior}
\end{table}

\subsection{Goal-Oriented Systems}
\label{sec:exp-air}

We first test \ours~ for evaluating goal-oriented dialog systems on a flight ticket booking task.

$\bullet$ \textbf{Policy Training Data}. We use the {\bf \emph{AirDialog}} dataset\footnote{\url{https://github.com/google/airdialogue}} for policy training \cite{wei2018airdialogue}. It contains 402,038 pieces of dialog from human sellers and human customers collaborating on buying flight tickets. We use different proportions of the dataset and different hyperparameters to train 24 seller agents using behavior cloning (See Appendix~\ref{app:air-trans} for details) \footnote{We also demonstrate that {\ours} can be applied to rule based agent in Appendix~\ref{app:rule-air}.}. 

$\bullet$ \textbf{Experience Data}. We invite 20 people to evaluate the 24 seller agents.
Specifically, each of the 20 human customers interacts with a seller agent 5 times to generate 100 pieces of dialog, and gives each piece an evaluation score between 0 and 1. The final score an agent receives is the average of the 100 scores. We consider three types of scores: flight score, status score, and overall reward used in \citet{wei2018airdialogue}.


\begin{table*}[htb!]
\scriptsize
\caption{The correlation between two metrics. Each column is a task completion score obtained by interacting human customers (``Selected Agents'' denotes only evaluating agents with reasonably good performance).}
\label{tab:air_r2}
\begin{center}
\begin{tabular}{c|l|cccc|cccc}
\toprule \hline 
\multicolumn{1}{c|}{\multirow{2}{*}{\bf Setting}}&\multicolumn{1}{c|}{\multirow{2}{*}{\bf Method}} & \multicolumn{4}{c|}{\bf Pearson Correlation}& \multicolumn{4}{c}{\bf Spearman's Rank Correlation } \\
\cline{3-10}
 &  &\multicolumn{1}{c}{\bf Flight Score } &\multicolumn{1}{c}{\bf Status Score } &\multicolumn{1}{c}{\bf Reward } &\multicolumn{1}{c|}{\bf Average } &\multicolumn{1}{c}{\bf Flight Score } &\multicolumn{1}{c}{\bf Status Score } &\multicolumn{1}{c}{\bf Reward } &\multicolumn{1}{c}{\bf Average }
\\ \hline 
& BLEU        & ~0.1450 & -0.1907 & -0.0709 &-0.0389 & ~0.0370 & -0.1453 & -0.1472 &-0.0852\\
All & PPL         & -0.1598 & ~0.1325 & ~0.0195 &-0.0026 & -0.1817 & ~0.0090 & -0.0039 &-0.0649\\
Agents & SPE   & ~0.6450 & ~0.7926 & ~0.7482 &~0.7286 & ~0.3539 & ~0.8004 & ~0.7400 &~0.6314\\
& {\ours}         & ~\textbf{0.9255} & ~\textbf{0.9854} & ~\textbf{0.9672} & ~\textbf{0.9593} & ~\textbf{0.8948} & ~\textbf{0.9839} & ~\textbf{0.9435} &~\textbf{0.9407}\\
\hline 
& BLEU        & -0.0621 & -0.1442 & ~0.2944 &~0.0294 & -0.1273 & -0.2208 & ~0.1793 &-0.1758\\
Selected & PPL         & -0.0197 & -0.1775 & ~0.0460 &-0.0504 & -0.1146 & -0.4652 & -0.0404 &-0.2067\\
Agents & SPE   & ~0.0970 & ~0.5203 & ~0.4777 &~0.3650 & ~0.1368 & ~0.5304 & ~0.4943 &~0.3872\\
& {\ours}         & ~\textbf{0.8640} & ~\textbf{0.9031} & ~\textbf{0.8952} &~\textbf{0.8874} & ~\textbf{0.8496} & ~\textbf{0.9414} & ~\textbf{0.8782} &~\textbf{0.8686} \\
\hline 
\bottomrule
\end{tabular}
\end{center}
\end{table*}

We evaluate {\ours}, BLEU/PPL \citep{papineni2002bleu} and Self-Play Evaluation (SPE) based on the correlation between estimated reward and true reward. The results are summarized in Table~\ref{tab:air_r2}. {\ours} uses the experience data of the other 23 agents to evaluate each agent (i.e., leave-one-bot-out). Note that SPE~\citep{wei2018airdialogue} needs to train a customer agent in addition to the seller agent being evaluated. For a fair comparison, we train the SPE customer agent on both experience data and policy training data (See Appendix~\ref{app:air-trans} for details). Our empirical observations are as follows:
 

\noindent $\bullet$ {\bf {\ours} vs. BLEU/PPL}. {\ours} significantly outperforms BLEU/PPL. As mentioned earlier, BLEU and PPL are well-known metrics for evaluating language quality. For goal-oriented systems whose goal is to complete a specific task, however, BLEU and PPL scores show little correlation with task completion scores.

\noindent $\bullet$ {\bf {\ours} vs. SPE}. {\ours} significantly outperforms SPE. To better understand their performance, we also 
present the regression plots between estimated and true rewards in Figure~\ref{fig:air_ope_vs_selfpaly}. 
Both {\ours} and SPE can easily identify agents with extremely poor rewards. However, for selected good agents whose flight score, status score, and overall reward are better than $0.5$, $0.7$, and $0.65$ respectively, SPE performs worse than {\ours} by a much larger margin (especially for flight score). Additional regression plots are shown in Appendix~\ref{app:exp_air}.




\begin{figure*}[htb!]
\begin{center}
\begin{subfigure}{0.49\textwidth}
\includegraphics[width=\textwidth]{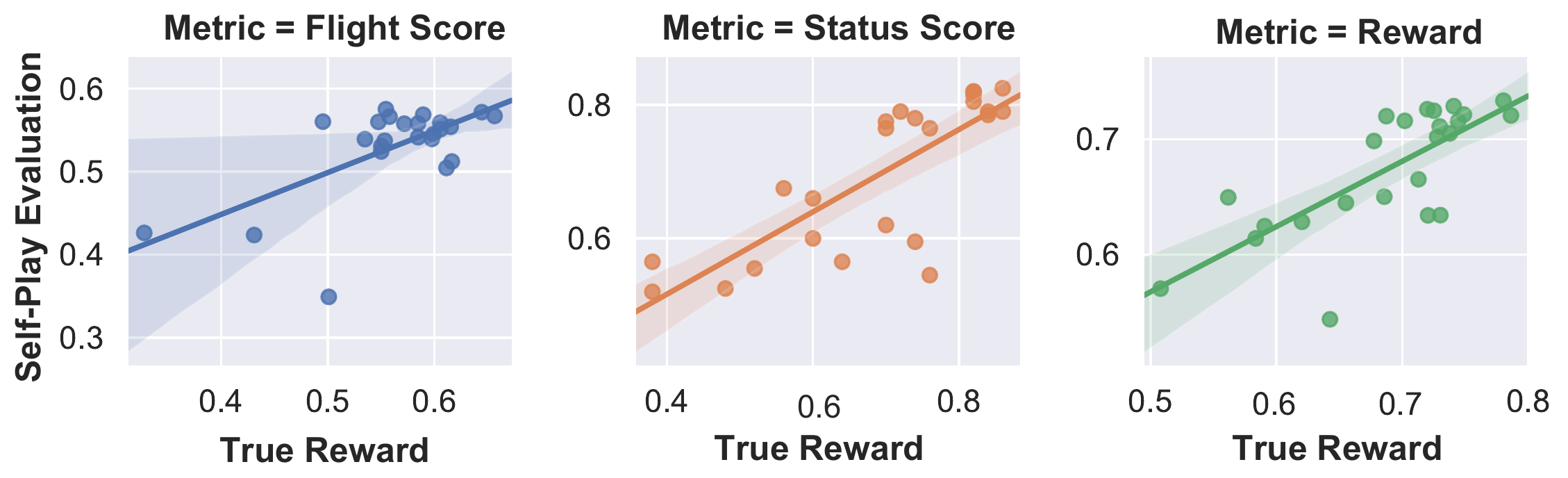}
\caption{SPE vs. Human Evaluation}
\end{subfigure}
\begin{subfigure}{0.49\textwidth}
\includegraphics[width=\textwidth]{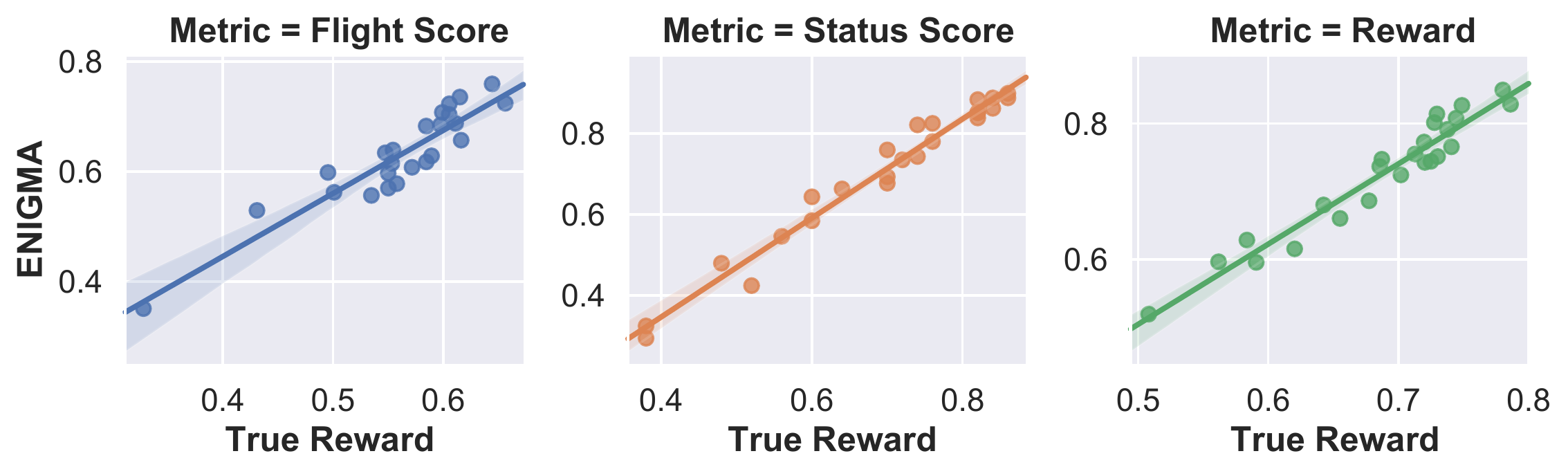}
\caption{{\ours} vs. Human Evaluation}
\end{subfigure}
\end{center}
\caption{Regression Plots.  The x-axis is the average reward obtained by chatting with human. The y-axis is the reward estimated by SPE / {\ours}. Different colors denote different types of rewards (flight score, status score, and overall reward).  The solid line is obtained by linear regression and the shaded region indicates $95\%$ confidence interval.}
\label{fig:air_ope_vs_selfpaly}
\end{figure*}





\begin{figure}[htb!]
	\begin{center}
		\includegraphics[width=0.6\textwidth]{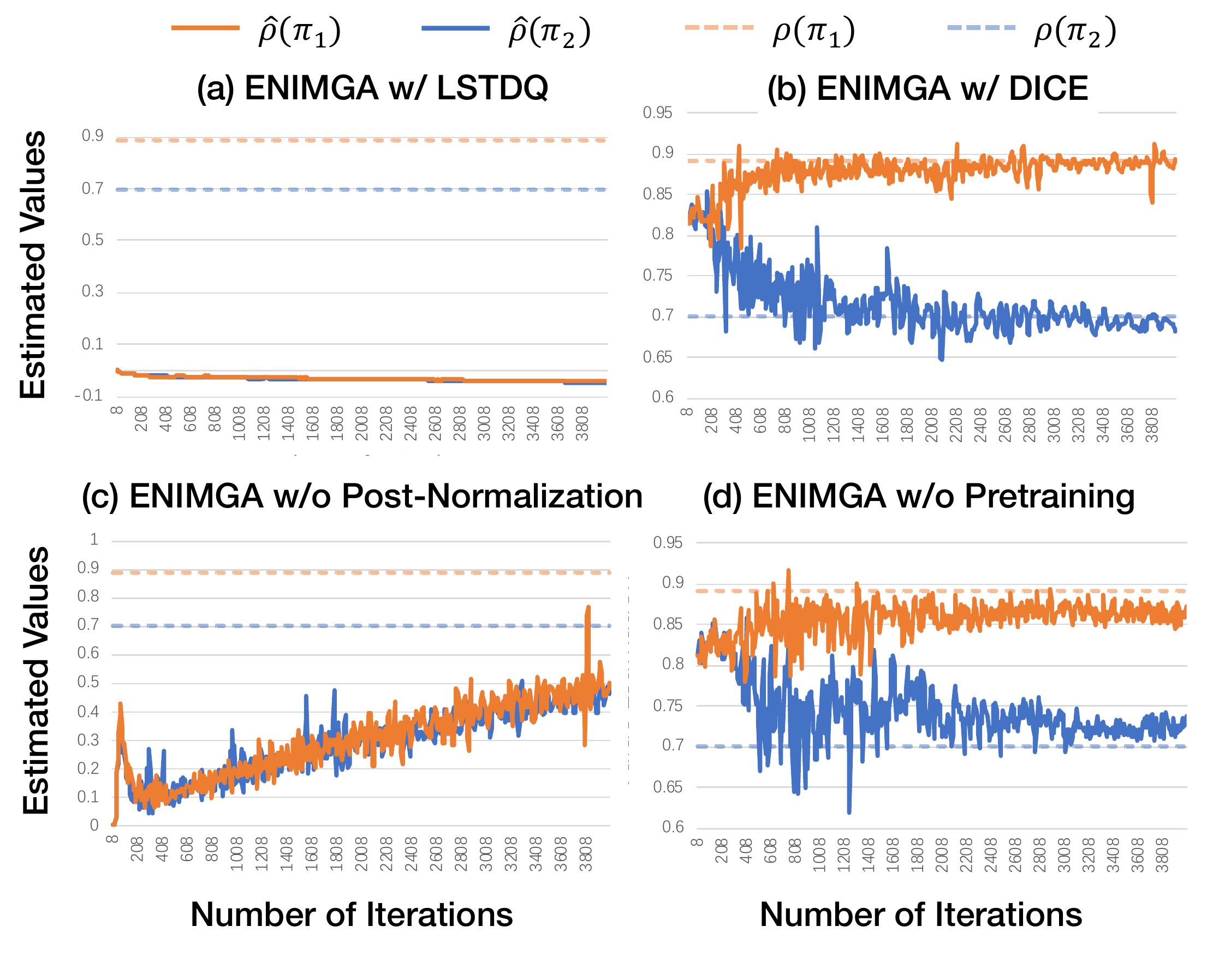}
	\end{center}
	\caption{Value estimation using different methods for two target agents ($\pi_1$ and $\pi_2$) vs. \# of iterations. Dotted lines denote the true rewards.}
	\label{fig:prelim_exp}
\end{figure} 

\begin{figure}[!htb]
	\begin{center}
		\includegraphics[width=0.6\textwidth]{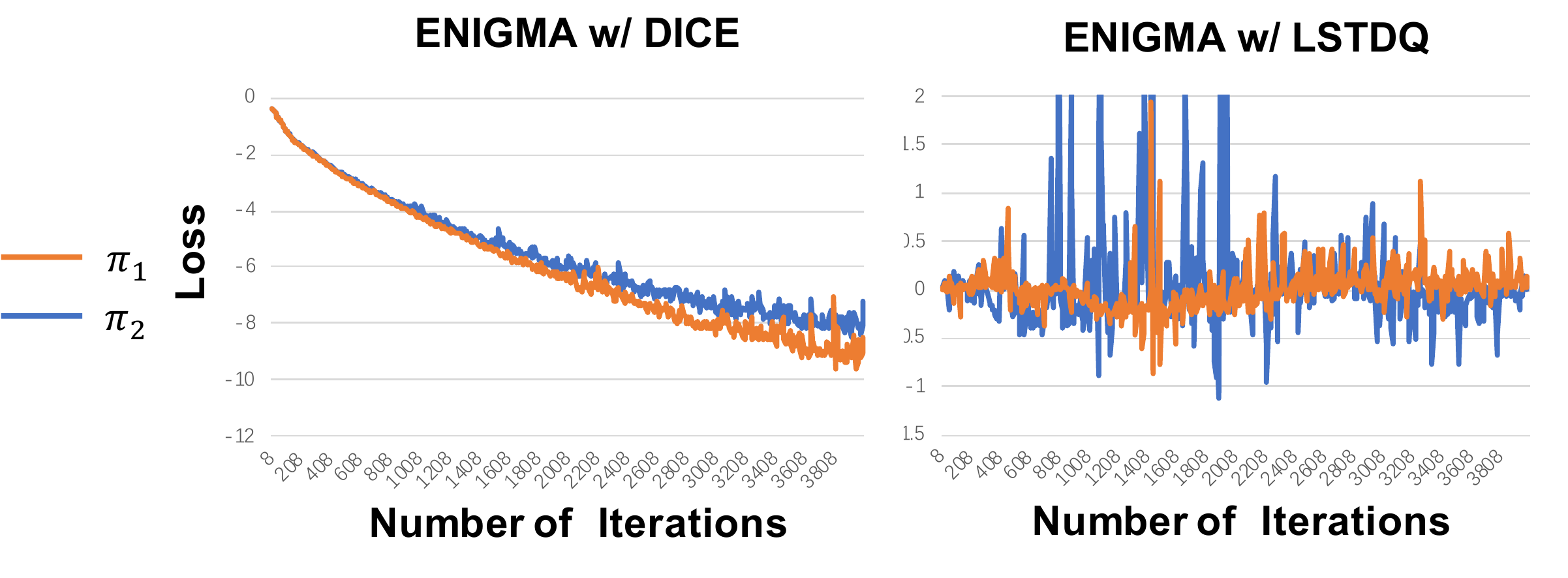}
	\end{center}
	\caption{Training Objectives vs. Number of Iterations for two target agents.}
	\label{fig:prelim_loss_exp}
\end{figure} 

\noindent $\bullet$ {\bf Ablation Study}. We select 2 out of the 24 agents to illustrate the importance of each component in {\ours}.

\noindent $\star$ {\bf \emph{DICE vs. LSTDQ}}. Figure~\ref{fig:prelim_exp}(a) and Figure~\ref{fig:prelim_exp}(b) show the estimated values of LSTDQ (only fitting the $Q$-function) and DICE respectively: estimates of LSTDQ are stuck at 0 whereas estimates of DICE approach the true rewards (dotted lines) as training progresses. Figure~\ref{fig:prelim_loss_exp} additionally shows that the training objectives of LSTDQ oscillates as DICE stably converges.

\noindent $\star$ {\bf \emph{Post-normalization}}. Figure~\ref{fig:prelim_exp}(c) shows the performance of {\ours} without post-normalization: The estimated values fail to approach the true rewards.

\noindent $\star$ {\bf \emph{Pretrained Encoder}}. Figure~\ref{fig:prelim_exp}(d) shows the performance of {\ours} without the pretrained encoder: The estimated values can approach the true rewards, but are less stable and less accurate than the counterpart with the pretrained encoder.


\begin{figure*}[!htb]
\begin{center}
\begin{subfigure}{0.48\textwidth}
\includegraphics[width=\textwidth]{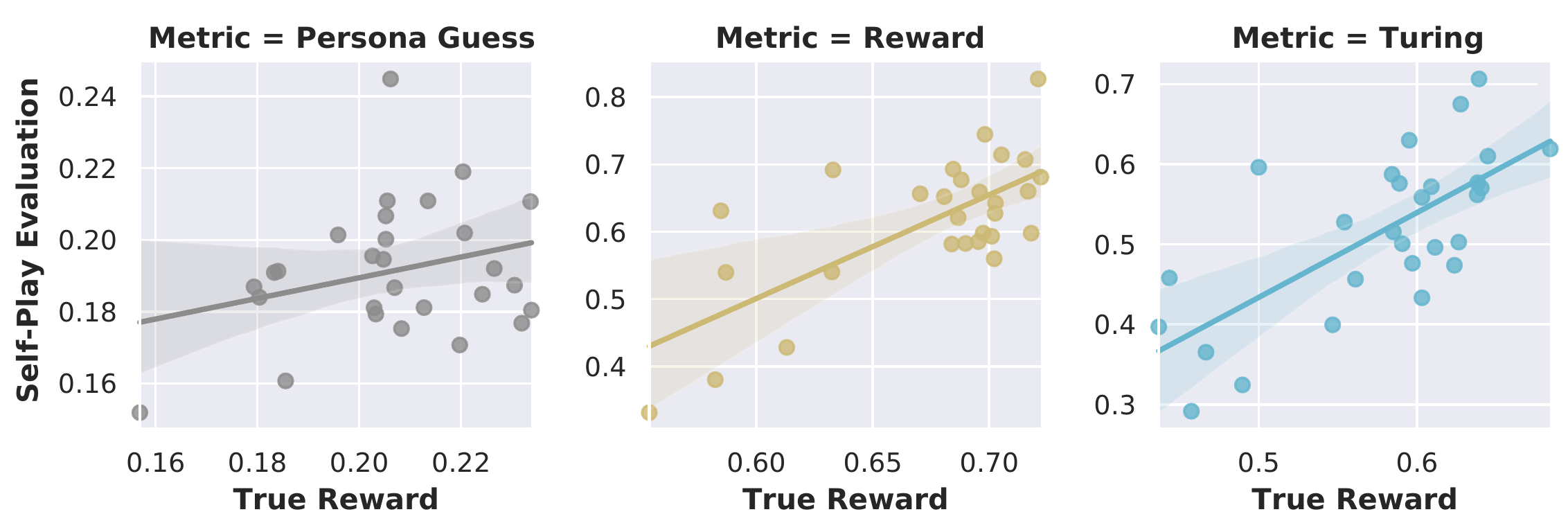}
\caption{SPE vs. Human Evaluation}
\end{subfigure}
\begin{subfigure}{0.48\textwidth}
\includegraphics[width=\textwidth]{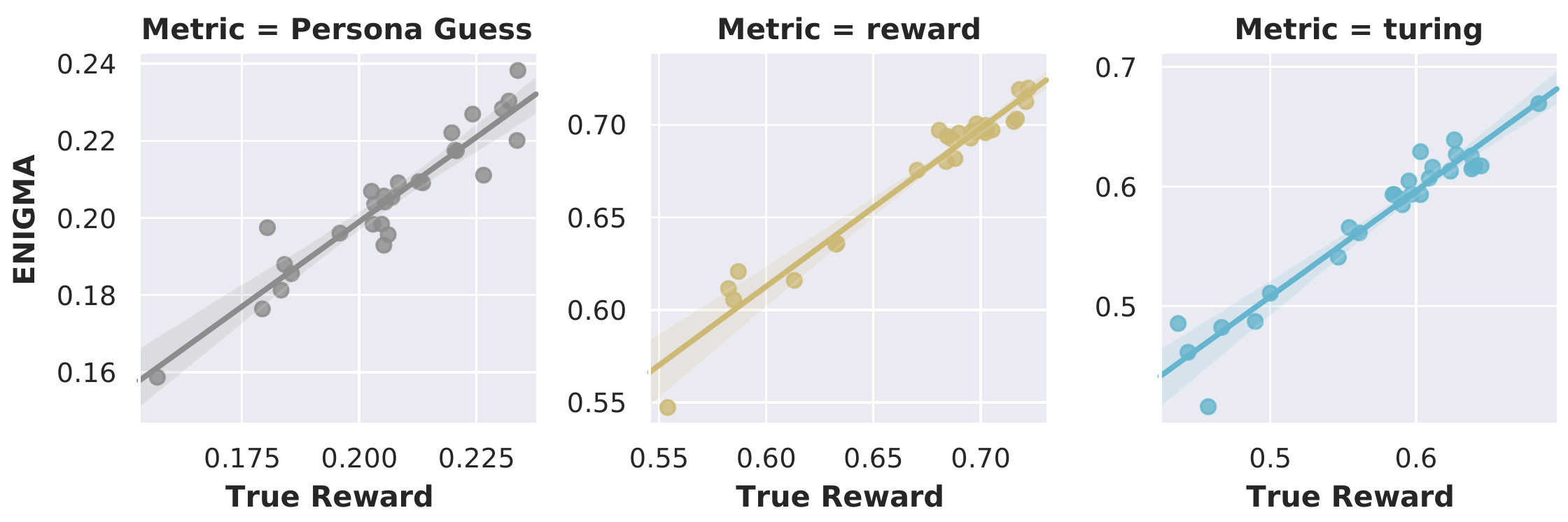}
\caption{{\ours} vs. Human Evaluation}
\end{subfigure}
\end{center}
\caption{Regression Plots. Only three metrics are presented. Please refer to Appendix~\ref{app:exp_convai2} for all plots.}
\label{fig:convai2_opevsselfplay}
\end{figure*}

\subsection{Open-Domain Chit-chat Systems}

We now test \ours~ for evaluating open-domain chit-chat dialog systems.

$\bullet$ \textbf{Policy Training Data}. We use 29 pre-trained agents\footnote{\url{https://github.com/facebookresearch/ParlAI/tree/master/projects/controllable_dialogue}} provided by \citet{see2019what}. These agents are trained using behavior cloning on the {\bf \emph{ConvAI2}} dataset\footnote{\url{https://parl.ai/projects/convai2/}} \cite{zhang2018personalizing,dinan2020second}. The dataset contains 8,939 pieces of dialog, where participants are instructed to chat naturally using given personas.

$\bullet$ \textbf{Experience Data}. We use the experience dataset provided by \citet{see2019what}. The dataset contains 3,316 agent-human evaluation logs and 10 different language quality metrics for each log.

\begin{table*}[htb!]
\scriptsize
\caption{The correlation between automatic metrics and language score obtained by interacting with human. We only present the average/min/max correlations to all 10 different language metrics in this table. For detailed numbers, please refer to Appendix~\ref{app:exp_convai2}, Figure~\ref{fig:heatmap_convai2}.}
\label{tab:enigma_r2}
\begin{center}
\begin{tabular}{c|c|ccc|ccc}
\toprule \hline 
\multicolumn{1}{c|}{\multirow{2}{*}{\bf Method}} &\multicolumn{1}{c|}{\multirow{2}{*}{\bf Experience Data}}& \multicolumn{3}{c|}{\bf Pearson Correlation}& \multicolumn{3}{c}{\bf Spearman's Rank Correlation } \\
\cline{3-8}
 &  &\multicolumn{1}{c}{\bf Average} &\multicolumn{1}{c}{\bf Min } &\multicolumn{1}{c|}{\bf Max } &\multicolumn{1}{c}{\bf Average } &\multicolumn{1}{c}{\bf Min } &\multicolumn{1}{c}{\bf Max } 
\\ \hline 
Best of 8 HCDFs & Human-Human & ~0.6045&	0.4468&	0.9352&	0.3384&	0.1724&	0.7526\\
8 HCDFs + Regression & Human-Human & ~0.5387&	-0.0348&	0.7519&	0.4740&	0.2784&	0.7880\\
BLEU & Human-Human & 0.4127 &	0.0671 &	0.6785&	0.2965 &	0.0482 &	0.7236\\
BLEURT & Human-Human & 0.4513	& 0.1557	& 0.6572&	0.4389	&0.0055	&0.6864\\
BERTscore F-1 & Human-Human & 0.5365	&0.0609	&0.8385&	0.5293	&0.2852	&0.7044\\
SPE & Human-Model& ~0.5907&	0.0962&	0.8820&	0.4350&	0.1363&	0.6405 \\
SPE & Human-Model (Challenging) &~0.3559&	-0.1679&	0.6900&	0.1429&	-0.0777&	0.3216 \\
{\ours} & Human-Model & {\bf 0.9666}&	{\bf 0.9415}&	{\bf 0.9792}&	{\bf 0.9167}&	{\bf 0.8717}&	{\bf 0.9485} \\
{\ours} & Human-Model (50\% data)                              &~0.9126&	0.8506&	0.9585&	0.7790&	0.6651&	0.8647 \\
{\ours} & Human-Model (10\% data)                               &~0.7327&	0.4544&	0.9266&	0.5214&	0.3651&	0.6492 \\
{\ours} & Human-Model (Challenging)                        &~0.6505&	0.5394&	0.7762&	0.5190&	0.3168&	0.6672 \\
\hline 
\bottomrule
\end{tabular}
\end{center}
\end{table*}

We follow the setups from Section 4.2 to evaluate, {\ours}, SPE, BLEU, BLEURT \citep{sellam2020bleurt}, BERTscore \citep{zhang2019bertscore}, and 8 Hand-Crafted Dialog Features (HCDFs) based on Pearson and Spearman's rank correlations between the estimated rewards and the true rewards. Here the true rewards are human evaluation scores under 10 different language quality metrics. More details of HCDFs and language quality metrics can be found in \citet{see2019what}. The average, minimum and maximum of the 10 correlations (under different language quality metrics) of each method are summarized in Table \ref{tab:enigma_r2}. Moreover, we also consider using 8 HCDFs to fit the true rewards using linear regression, and the results are also included in Table \ref{tab:enigma_r2}.

Note that since we are considering a chit-chat dialog system, SPE does not train an additional agent but asks two identical target agents to chat with each other. However, SPE needs to train an additional model to predict the reward of each dialog. Specifically, we fine-tune the pre-trained RoBERTa encoder with an output layer over the experience data (an additional sigmoid function is applied to ensure an output between 0 and 1). For automatic evaluation of each agent using {\ours}, we use the experience data of the other 28 agents (i.e., leave-one-bot-out).

\noindent $\bullet$ {\bf {\ours} vs. SPE/BLEU/BLEURT/BERTscore/HCDFs}. {\ours} significantly outperforms SPE, BLEU, BLEURT, BERTscore and HCDFs in both Pearson and Spearman's rank correlations. Moreover, we compare the correlations between estimated rewards and human evaluation scores under each language quality metric. Due to space limit, we only show the plots of {\ours} and SPE under 3 out of 10 language quality metrics in Figure~\ref{fig:convai2_opevsselfplay}. Additional plots and detailed results can be found in Appendix~\ref{app:exp_convai2}. We see that {\ours} outperforms SPE and HCDFs under all language equality metrics.

\noindent $\bullet$ {\bf Sample Efficiency of {\ours}}. To demonstrate that {\ours} is sample efficient, we test {\ours} on randomly sub-sampled (10\% and 50\%) experience data. We found that even using only 10\% of the experience data, {\ours} still outperforms SPE and HCDFs.

\noindent $\bullet$ {\bf Evaluation under Challenging Experience Data}. To make the evaluation more challenging, we further test {\ours} by excluding the experience data obtained by the behavior policies similar to the target policy (see more details in Appendix~\ref{app:exp_convai2}). We see that even with such challenging experience data, {\ours} still outperforms SPE with trained on full data and HCDFs under almost all language quality metrics.

\section{Discussions}
\label{sec:discussions}


Existing research on automatic evaluation of dialog systems can be categorized into static vs. dynamic evaluation. Most of existing research falls into static evaluation with a focus on language quality of single-turn response or on task-completion given fixed dialog, while few literature emphasizes dynamic properties of an interactive environment. Different from static evaluation, dynamic evaluation considers the sequential interaction between a human and an agent, and thus it is more challenging.

We note that in both static and dynamic evaluations, algorithms rely on the assumption of sufficient data coverage (explicitly or implicitly) to ensure reliable evaluation.
For example, in static evaluation, BLEU score requires all reasonably good responses to be exactly covered by the experience data. More recently, \citet{lowe2017towards} show that their method only works when the behavior policies include the target policy.
Dynamic evaluation also assumes the sufficient coverage. We emphasize that it is the information-theoretic limit of all OPE methods~\citep{jiang2016doubly}, which requires the experience data to cover sufficient target policy behaviors to ensure accurate estimation. Therefore, we 
suggest the broader research community to release human-model interaction evaluation data to further promote research in automatic dialog systems evaluation.

\section{Conclusion}
\label{sec:conclusion}

We develop a model-free OPE framework, {\ours}, for evaluating dialog systems using experience data. By adopting the current state-of-the-art OPE method in reinforcement learning, we tackle several challenges in modeling dialog systems. Different from existing single-turn language quality metrics and model-based reinforcement learning methods, {\ours} naturally takes into consideration the interactive and dynamic nature of conversations, while avoiding the difficulty of modeling complex human conversational behaviors. Our thorough experimental results demonstrate that {\ours} significantly outperforms existing methods in terms of correlation with human evaluation scores. One potential future direction is to extend {\ours} from off-policy evaluation to off-policy improvement, which aims to learn a dialog system based on experience data~\citep{nachum2019algaedice,kallus2020statistically}.

\bibliographystyle{ims}
\bibliography{dialogue_ope}

\clearpage


\appendix
\section{{\ours}}
\label{app:dialog-dice}

\subsection{Supporting Theorem}
\label{app:thm}

\setcounter{theorem}{0}
\begin{theorem}
	The augmented MDP with infinite horizon satisfies the following properties:
	\begin{itemize}
		\item It has a unique stationary state-action visitation distribution $d^\pi(s,a)$;
		\item For the station-action pair $(s_t,a_t)$ in a conversation $h$ with padded pseudo states, we have
        \begin{align}
        d^\pi(s_t,a_t) = \frac{1}{T_{\rm max}}\sum_{\{(s_k,a_k)\}_{k=1}^{t-1}}[ \mu_0(s_1)\pi(a_1|s_1)P(s_2|a_1,s_1)\cdots P(s_t|a_{t-1},s_{t-1})\pi(a_t|s_t)],
        \label{eq:stationary_dist_app}
        \end{align}
        where $\{(s_k,a_k)\}_{k=1}^{t-1}$ are the state-action pairs in the same conversation as $(s_t,a_t)$; 
		\item The policy value can be computed by sampling from $d^\pi(s,a)$, and we have
        \begin{align}
		    \rho_{A}(\pi) = {\EE}_{(s,a) \sim d^\pi(s,a)} [R(s,a)] = \rho(\pi)/T_{\rm max}.
	    \label{eq:aug-dialog-value-2_app}
        \end{align}
	\end{itemize}
\end{theorem}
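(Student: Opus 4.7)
The plan is to exploit the regenerative structure of the augmented MDP. By construction, every $T_{\rm max}$ steps the chain deterministically reaches ${\rm Pad}_{T_{\rm max}}$ (a real conversation of length $T \le T_{\rm max}$ followed by $T_{\rm max} - T$ pseudo padding states), and then jumps to a fresh initial state $s' \sim \mu_0$ independently of the past. This gives an i.i.d.\ cycle structure of deterministic length $T_{\rm max}$, on which all three claims can be grounded.

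For uniqueness and existence of $d^\pi$, I would invoke standard results for positive recurrent Markov chains with an i.i.d.\ regeneration structure: because the post-regeneration distribution is always $\mu_0$ and cycle length is deterministically $T_{\rm max}$, the chain is positive Harris recurrent, and its stationary distribution is uniquely determined by the expected occupation measure within a single cycle divided by $T_{\rm max}$. To obtain the explicit formula, I would compute the expected occupation of $(s_t,a_t)$ within a single cycle: position $t$ is visited exactly once per cycle, and the joint probability that the pair at position $t$ equals $(s_t,a_t)$ is the sum over all preceding histories $\{(s_k,a_k)\}_{k=1}^{t-1}$ of the path probability $\mu_0(s_1)\pi(a_1|s_1)P(s_2|a_1,s_1)\cdots \pi(a_t|s_t)$. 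Dividing by $T_{\rm max}$ yields the claimed formula.

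For the policy value, the ergodic theorem gives $\rho_A(\pi)=\EE_{(s,a)\sim d^\pi}[R(s,a)]$ as the long-run average reward, which matches the original time-average definition of $\rho_A(\pi)$. Expanding via the formula for $d^\pi$ and using the sparse-reward assumption — rewards vanish except at the terminal turn $T$ of a real conversation, and pseudo states carry zero reward — the sum over $t$ collapses to $\frac{1}{T_{\rm max}}\EE_{h\sim \mu_0,\pi,\cE}[R(s_T,a_T)] = \rho(\pi)/T_{\rm max}$, establishing the final equality.

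The main obstacle will be careful bookkeeping for pseudo states. The stationary distribution is supported on both real conversation states and pseudo states, but the theorem writes down only the real-state portion; I would need to compute the pseudo-state mass $\sum_k d^\pi({\rm Pad}_k, {\rm NextPad}) = (T_{\rm max}-\EE_\pi[T])/T_{\rm max}$ separately and verify that the two pieces together normalize to one. A related subtlety is the varying horizon: since different conversations end at different turns $T$, the event $\{t \le T\}$ versus $\{t > T\}$ must be handled carefully when identifying cycle-position $t$ with the $t$-th step of a real conversation, so that the sum over histories in the formula is truly taken only over trajectories in which the conversation has not yet terminated by turn $t$.
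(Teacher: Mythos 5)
Your proposal is correct, but it reaches the theorem by a genuinely different route than the paper. You treat $({\rm Pad}_{T_{\rm max}},{\rm NextPad})$ as a regeneration atom: every trajectory hits it within at most $T_{\rm max}$ steps, the post-regeneration state is drawn afresh from $\mu_0$, and cycles have deterministic length $T_{\rm max}$, so existence/uniqueness of $d^\pi$ and the explicit formula follow from the standard cycle (renewal--reward) representation, i.e.\ $d^\pi(s_t,a_t)$ equals the expected per-cycle occupation of $(s_t,a_t)$ divided by the cycle length, and the value identity $\rho_A(\pi)=\EE_{d^\pi}[R]=\rho(\pi)/T_{\rm max}$ drops out because the expected reward per cycle is exactly $\rho(\pi)$ under the sparse-reward convention. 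The paper instead works entirely at the level of the balance equations: it groups state-action pairs by turn index, shows the group masses $S_t$ satisfy $S_t=S_{t-1}$ and hence all equal $1/T_{\rm max}$, pins down $d^\pi({\rm Pad}_{T_{\rm max}},{\rm NextPad})=1/T_{\rm max}$ since that is the only pair in the last group, and then unrolls the stationarity recursion forward to obtain \eqref{eq:stationary_dist_app} as the \emph{unique} solution, before verifying \eqref{eq:aug-dialog-value-2_app} by direct expansion. Your approach buys a conceptual explanation (and handles the periodicity issue that the paper explicitly flags as blocking a naive ergodicity argument, since the regenerative/Ces\`aro form of the limit theorem does not require aperiodicity), together with the value identity essentially for free; the paper's approach buys a self-contained, elementary verification that never invokes Harris recurrence or regenerative-process theory. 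Two refinements you should make explicit if you write this up: cite the time-average (Ces\`aro) ergodic theorem or argue directly over the deterministic cycles, since distributional convergence fails for this periodic chain; and phrase uniqueness via the accessible atom (any stationary distribution must charge the atom and, after at most $T_{\rm max}$ steps, is supported on states reachable from it), which also disposes of unreachable conversation histories. Your pseudo-state bookkeeping, giving total pad mass $(T_{\rm max}-\EE_\pi[T])/T_{\rm max}$ and overall normalization to one, is consistent with the paper's formula and is a useful sanity check the paper itself does not spell out.
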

\begin{proof} 


\textbf{First}, we prove that the augmented MDP  has a unique stationary state-action visitation distribution shown in \eqref{eq:stationary_dist_app}.

    As the augmented MDP is periodic with period $T_{\rm max}$, the uniqueness and stationary distribution can not be immediately obtained by ergodicity of the MDP (the first two points of the Theorem).
    
    To obtain the stationary state-action visitation distribution, we essentially need to solve the following equations:
    \begin{align}
        d^\pi(s,a) = \sum_{(s',a')} d^\pi(s',a')P(s|s',a')\pi(a|s),~~{\rm for~all}~~ (s,a)
        \label{eq:stationary_transition}
    \end{align}
    with $d^\pi(s,a)$ is  a probability measure on the state-action space, i.e., $\sum_{(s,a)} d^\pi(s,a)=1$.
    
    We first group the state-action pairs by their dialog turns $t$. More specifically, we define $\cS_t := \{s_t: s_t {\rm ~contains~} t {\rm ~dialog~turns}\}$, $\cA_t := \{a_t: a_t {\rm ~is~the~response~at~the~} t {\rm -th~dialog~turn}\}$ and $ \cQ_t =  \cS_{t} \times \cA_{t}$. We have the state space is the direct sum of state groups $\cS_0\oplus\cS_1\cdots \oplus \cS_{T_{\rm max}}=\cS$ and the action space is the union of all action groups $\bigcup_{t=1}^{T_{\rm max}}\cA_t=\cA$. We further have $\cQ_0\oplus\cQ_1\cdots \oplus \cQ_{T_{\rm max}}=\cS\times\cA=\cQ$. Notice that $t$ is the number of dialog turns in original MDP, not the time step for the augmented MDP. 
     
     We then consider the quantity $S_{t} = \sum_{(s_t,a_t) \in \cQ_t} d^\pi(s_{t},a_{t}) $, which sum over the LHS of the Eq.\eqref{eq:stationary_transition} for each group of $(s_t,a_t)$. We now expand the corresponding sum of the RHS of \eqref{eq:stationary_transition}:
     \begin{align*}
         &S_t = \sum_{(s_t,a_t) \in \cQ_t}d^\pi(s_{t},a_{t}) \\
         &= \sum_{(s_t,a_t) \in \cQ_t }\sum_{(s_{t-1},a_{t-1})  \in \cQ_{t-1}} d^\pi(s_{t-1},a_{t-1})P(s_{t}|s_{t-1},a_{t-1})\pi(a_{t}|s_{t}) \\
         &= \sum_{(s_{t-1},a_{t-1})  \in \cQ_{t-1}}\sum_{(s_t,a_t)\in \cQ_t} d^\pi(s_{t-1},a_{t-1})P(s_{t}|s_{t-1},a_{t-1})\pi(a_{t}|s_{t}) \\
         &= \sum_{(s_{t-1},a_{t-1})  \in \cQ_{t-1} } d^\pi(s_{t-1},a_{t-1}) \\
         &= S_{t-1} ~~(t>1).
     \end{align*}
     We have $S_1=S_2=\cdots=S_{T_{\rm max}}=S$. As $d^\pi(s,a)$ is a probability measure, we have the following unique solution for $S_t$'s
     \begin{align*} 
         S =  \frac{1}{T_{\rm max}} \sum_{t=1}^{T_{\rm max}} S_t = \frac{1}{T_{\rm max}} \sum_{t=1}^{T_{\rm max}} \sum_{(s_t,a_t)\in \cQ_t} d^\pi(s_{t},a_{t}) =  \frac{1}{T_{\rm max}} \sum_{(s,a)\in \cQ} d^\pi(s,a) = \frac{1}{T_{\rm max}} .
     \end{align*}
     As there is only one possibility for the last state-action pairs in all possible conversation that $s_{T_{\rm max}}={\rm Pad}_{T_{\rm max}},a_{T_{\rm max}}={\rm NextPad}$,  we have $d^\pi({\rm Pad}_{T_{\rm max}},{\rm NextPad}) = S_{T_{\rm max}} = \frac{1}{T_{\rm max}}$. We now consider $(s_1,a_1)$, which is the first state-action pair of a conversation. We have 
     \begin{align}
         &d^\pi(s_1,a_1) = \sum_{(s',a')} d^\pi(s',a')P(s_1|s',a')\pi(a_1|s_1) \notag\\
         &=d^\pi({\rm Pad}_{T_{\rm max}},{\rm NextPad}) P(s_1|{\rm Pad}_{T_{\rm max}},{\rm NextPad})\pi(a_1|s_1) = \frac{1}{T_{\rm max}} \mu_0(s_1)\pi(a_1|s_1),
         \label{eq:app-proof-init}
     \end{align}
     which is the unique solution. For any $(s_t,a_t)\in\cQ_t (t>1)$, the previous state-action pairs in the same conversation must be in $\cQ_{t-1}$. We have 
     \begin{align}
         &d^\pi(s_t,a_t) = \sum_{(s',a')} d^\pi(s',a')P(s_t|s',a')\pi(a_t|s_t) \notag\\
         &= \sum_{(s_{t-1},a_{t-1}) \in \cQ_{t-1}} d^\pi(s_{t-1},a_{t-1}) P(s_t|s_{t-1},a_{t-1})\pi(a_{t}|s_{t}).
         \label{eq:app-proof-induce}
     \end{align}
     
     Based on \eqref{eq:app-proof-init} and \eqref{eq:app-proof-induce}, we can obtain the unique solution for \eqref{eq:stationary_transition}:
    \begin{align*}
    &d^\pi(s_t,a_t) \\
    &= \sum_{(s_{t-1},a_{t-1}) \in \cQ_{t-1}} d^\pi(s_{t-1},a_{t-1}) P(s_t|s_{t-1},a_{t-1})\pi(a_{t-1}|s_{t-1}) \\
    & = \sum_{(s_{t-1},a_{t-1})  \in \cQ_{t-1}} \sum_{(s_{t-2},a_{t-2}) \in \cQ_{t-2}} d^\pi(s_{t-2},a_{t-2}) P(s_{t-1}|s_{t-2},a_{t-2})\pi(a_{t-1}|s_{t-1})  P(s_t|s_{t-1},a_{t-1})\pi(a_{t}|s_{t}) \\
    & \cdots \\
    & = \frac{1}{T_{\rm max}}\sum_{\{(s_k,a_k)\}_{k=1}^{t-1}}[ \mu_0(s_1)\pi(a_1|s_1)P(s_2|a_1,s_1)\cdots P(s_t|a_{t-1},s_{t-1})\pi(a_t|s_t)],
    \end{align*}
    where we omit the constraint of $\cQ_t$ as the transition kernel $P$ naturally satisfies the constraints.  
     
     Till now, we have shown that the augmented MDP  has a unique stationary state-action visitation distribution shown in \eqref{eq:stationary_dist_app} (the first two points of the Theorem).
     
     \textbf{Next}, we show that the policy value of the policy $\pi$ under the augmented MDP is proportional to its counterpart under the original MDP without the augmentation (the third point of the Theorem). 
     
    Recall that the expected reward of original MDP \eqref{eq:dialog-value} is defined as
    \begin{align*} 
	&\rho(\pi) = \EE_{h \sim \mu_0,\pi,\cE} [R(s_T,a_T)] = \sum_{T=1}^{T_{\rm max}}\sum_{h} Pr(h, h{\rm ~has~}T{\rm~turns})R(s_T,a_T)  \\
	&= \sum_{T=1}^{T_{\rm max}}\sum_{\{(s_k,a_k)\}_{k=1}^{T-1}} \mu_0(s_1)\pi(a_1|s_1)P(s_2|a_1,s_1)\cdots P(s_T|a_{T-1},s_{T-1})\pi(a_T|s_T) \\
	& \times \ind(a_T {\rm~ End~Conversation}) R(s_T,a_T),
    \end{align*}
    where $T$ is the number of turns in the original dialog before padding. Recall that, the MDP only obtain non-zero reward when the dialog ends, (i.e., when $a {\rm~ End~Conversation}$). On the other hand, 
    Due to the existence of unique stationary distribution, the policy value of $\pi$ for the augmented MDP \eqref{eq:aug-dialog-value} can written as:
    \begin{align*}
    	\textstyle\rho_A(\pi) & = {\EE}_{(s,a) \sim d^\pi(s,a)} [R(s,a)]  = {\EE}_{(s,a) \sim d^\pi(s,a)} [\ind(a {\rm~ End~Conversation})R(s,a)] \\
    	& = \frac{1}{T_{\rm max}} \sum_{t=1}^{T_{\rm max}} \sum_{\{(s_k,a_k)\}_{k=1}^t}  \mu_0(s_1)\pi(a_1|s_1)P(s_2|a_1,s_1)\cdots P(s_t|a_{t-1},s_{t-1})\pi(a_t|s_t) \\
    	& \times \ind(a_t {\rm~ End~Conversation})R(s_t,a_t) \\
    	& = \frac{1}{T_{\rm max}}\rho(\pi).
    \end{align*} 
	
\end{proof}

\textbf{Can we directly apply infinite-horizon augmentation without padding?} 
The answer is \textit{NO}. Here we use an example to illustrate the difference between $\rho_{\rm A}$ and $\rho$ and why we need to pad every dialog to have the same length for using OPE:

\begin{example}
Suppose you have two experience dialogs $a_0 \rightarrow \cdots \rightarrow a_{t_1}$ and $b_0 \rightarrow \cdots \rightarrow b_{t_2}$ with rewards $0$ and $1$ respectively. 
For the target policy, dialogs has per-episode density $0.2$ and $0.8$ respectively. The true value of such policy is $0\times 0.2 + 1 \times 0.8 = 0.8$. The corresponding per-state density of $[a_0,\cdots,a_{t_1}]$ is $\frac{0.2}{0.2 \times t_1 + 0.8 \times t_2}$ and the one for $[b_0,\cdots,b_{t_1}]$ is $\frac{0.8}{0.2 \times t_1 + 0.8 \times t_2}$. 
The value in the new augmented MDP is $\frac{0.2 * 0 + 0.8*1}{0.2 \times t_1 + 0.8 \times t_2}$, which depends on the dialog turns and can not be directly turned into policy value in the original MDP.
\end{example} 

\subsection{ Function Approximation with Pre-Trained Language Models }
\label{app:bert}

We can compute all state-action pairs for the same dialog in a parallel way as shown in Figure~\ref{fig:bert_par}. The input to the RoBERTa encoder consists of three parts, word tokens, position ids, and token types. 

\textit{Notation}: an experience dialog $h=\{e_0,a_1,e_1,...,a_T\}$, and the corresponding response generated by the target policy $\pi$, $\{a'_t = \pi(s_t)\}_{t=1}^T$. 

\textbf{Word Tokens}. The input token is the concatenation of responses $\{e_0,a_1,a'_1,e_1,...,e_{T-1},a_T,a'_T\}$.

\textbf{Position Ids}. The position ids is separately calculated for each response. For 
$e_i$, the position ids is from $l_{2i}=\sum_{j<i} {\rm len}(e_j)+ \sum_{j\leq i}{\rm len}(a_j)$ to $l_{2i+1} = l_{2i} + {\rm len}(e_i)$, where ${\rm len}(\cdot)$ denotes the number of tokens of a given response. For $a_i$, the position ids is from $l_{2i-1}$ to $l_{2i}$. For $a'_i$, the position ids is from $l_{2i-1}$ to $l'_{2i} = l_{2i-1}+{\rm len}(a'_i)$. 

\textbf{Token types}. For $e_i$'s, the token types are $0$ which
 denotes human responses. For $a_i$'s and $a'_i$'s, the token types are $1$ which denotes agent responses. 
 
\textbf{Attention Masking}. We need to modify the attention masks to prevent tokens from attending future responses. Specifically, the attention masks make sure:
\begin{enumerate}
    \item The tokens in each response can be mutually attended;
    \item $e_i$ attends to $\{e_0,a_1,e_1,a_2,...,e_{i-1},a_i\}$;
    \item $a_i$ attends to $\{e_0,a_1,e_1,a_2,...,a_{i-1},e_{i-1}\}$;
    \item $a'_i$ attends to $\{e_0,a_1,e_1,a_2,...,a_{i-1},e_{i-1}\}$;
\end{enumerate}

\begin{figure}[!htb]
  \centering
  \includegraphics[width=0.8\textwidth]{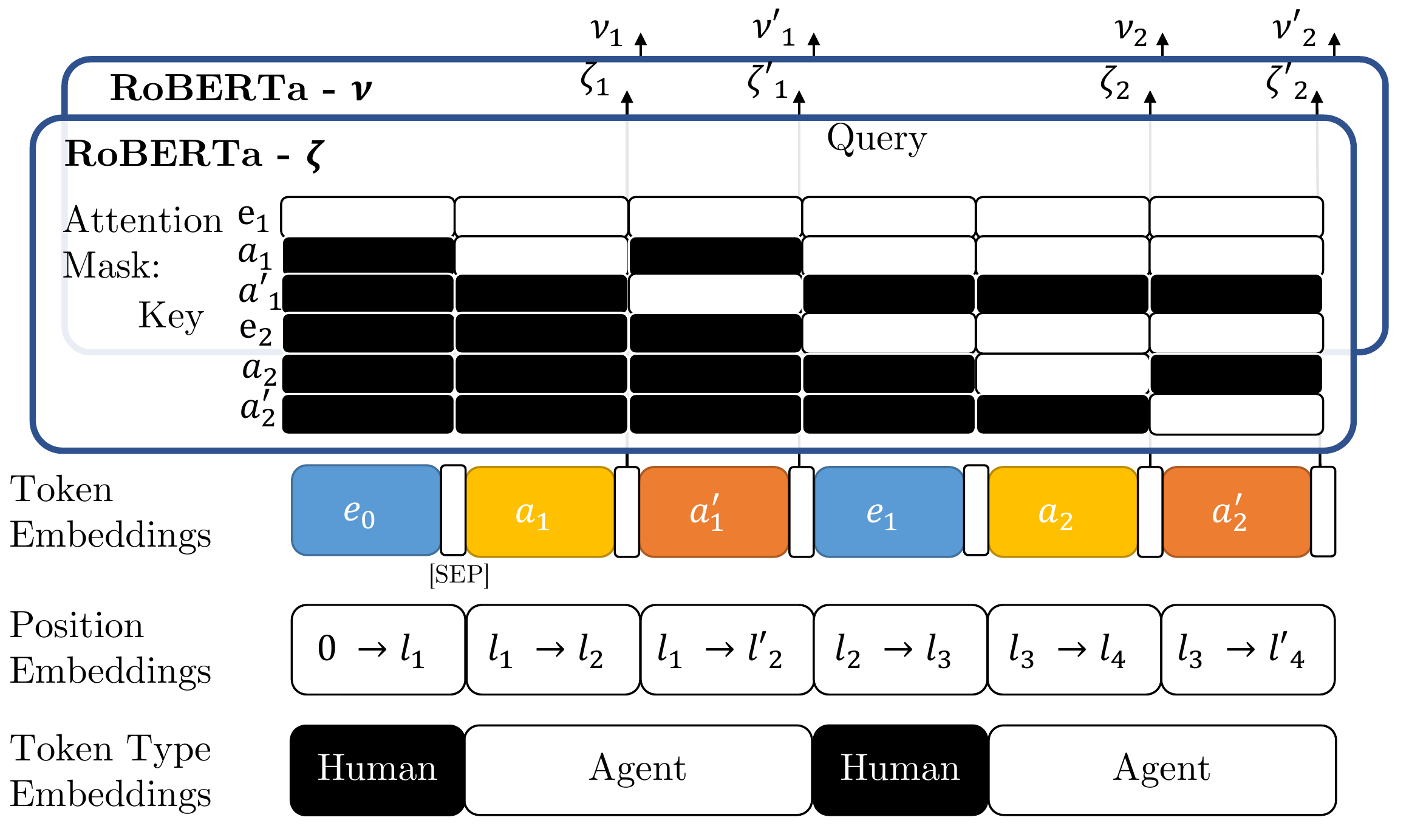}
  \caption{RoBERTa-$\zeta$ and RoBERTa-$\nu$}
  \label{fig:bert_par}
\end{figure}

\subsection{{\ours} with regularized DICE }
\label{app:algo}

\begin{algorithm}[htb]
	\caption{Dialog OPE using regularized DICE}\label{algo:main}
	\begin{algorithmic}[1]
		\INPUT Experience Dialog with rewards $\cD = \{(h_i= \{e^{(i)}_0,a^{(i)}_1,e^{(i)}_1,...,a^{(i)}_{T_i} \}, r^{(i)})\}_{i=1}^{N}$, Target Policy $\pi$, Padding Length $T_{\rm max}$, Regularization function $f$, DICE hyper-parameters $\alpha_\zeta$, $\alpha_R$ 
		\OUTPUT Performance Estimation $\hat \rho_{n}(\pi)$
		\PARAMETER $\zeta = \{$ RoBERTa-$\zeta, [\zeta_{{\rm pad},t}]_{1\leq t\leq T_{\rm max}}\}$,$\nu = \{$ RoBERTa-$\nu, [\nu_{{\rm pad},t}]_{1\leq t\leq T_{\rm max}}\}$, $\lambda$
		\item[\textbf{\textit{Generate OPE Data}}]
		\For{ $({h}_i, r^{(i)}) $ in $ \cD$}
		\For{$t$ in $1, \cdots, T_i$}
		\State $\tilde{a}_t^{(i)} \sim \pi(\{e^{(i)}_0,a^{(i)}_1,e^{(i)}_1,...,e^{(i)}_{t-1} \})$ // Sample Action From Target Policy
		\EndFor
		\EndFor
		\State $\tilde{\cD} = \{(\tilde{h}_i=e^{(i)}_0,a^{(i)}_1,e^{(i)}_1,...,a^{(i)}_{T_i} \}, r^{(i)}) \}_{i=1}^{N}$
		\item[\textbf{\textit{Estimate $\zeta$ by Regularized DICE}}]
		\While{ Not Converged }
		\State Sample Mini-Batch $\cB \subset \tilde{\cD}$
		\For{ $(\tilde{h}_i, r^{(i)})$ in $\cB$}
		\State $\zeta^{(i)}_{0} = \zeta_{{\rm pad},T_{\rm max}}$,~~ $\nu^{(i)}_{0} = \nu'^{(i)}_{0} = \nu_{{\rm pad},T_{\rm max}}$ // infinite-horizon concatenation
		\For{$t$ in $1, \cdots, T_i$}
		\State $\zeta_{t}^{(i)} = $ RoBERTa-$\zeta(e_0^{(i)},a_1^{(i)},...,a_{t-1}^{(i)},e_{t-1}^{(i)},a_{t}^{(i)})$
		\State $\nu_{t}^{(i)} = $ RoBERTa-$\nu(e_0^{(i)},a_1^{(i)},...,a_{t-1}^{(i)},e_{t-1}^{(i)},a_{t}^{(i)})$
		\State $\nu'^{(i)}_{t} = $ RoBERTa-$\nu(e_0^{(i)},a_1^{(i)},...,a_{t-1}^{(i)},e_{t-1}^{(i)},{\color{red}\tilde{a}_{t}^{(i)}})$
		\EndFor
		\For{$t$ in $T_i + 1, \cdots, T_{\rm max}$}
		\State $\zeta_{t}^{(i)} = \zeta_{{\rm pad},t}$,~~ $\nu_{t}^{(i)} = \nu'^{(i)}_{t} = \nu_{{\rm pad},t}$ // add padding state
		\EndFor
		\State $\ell_i = \frac{1}{T_{\rm max}} [\alpha_R \zeta^{(i)}_{T_i}r^{(i)}_{T_i} + \sum_{t=0}^{ T_{\rm max} -1 } [\zeta_t (\nu'^{(i)}_{t+1} - \nu^{(i)}_t ) + \lambda (\zeta^{(i)}_t-1)  -\alpha_\zeta f(\zeta^{(i)}_t) ]]$
		\EndFor
		\State $L_D(\zeta, \nu, \lambda) = \frac{1}{|\cB|}\sum_{i \in \cB} \ell_i$ 
		\State SGD update based on $\frac{\partial L_D}{\partial \nu}$,$\frac{\partial L_D}{\partial \lambda}$,$\frac{\partial -L_D}{\partial \zeta}$ (Gradient Reversal).
		\EndWhile
		\item[\textbf{\textit{Estimate Average Reward with Post-Normalization }}]
		
		\For{ $({h}_i, r^{(i)})$ in $\cD$}
		\State $\zeta_i = $ RoBERTa-$\zeta(e_0^{(i)},a_1^{(i)},...,a_{T_i-1}^{(i)},e_{T_i-1}^{(i)},a_{T_i}^{(i)})$
		\EndFor
		\State \textbf{Return} $\hat{\rho}_{n}(\pi) = \sum_{({h}_i, r^{(i)}) \in \cD} \zeta_i r^{(i)} \big/  \sum_{({h}_i, r^{(i)}) \in \cD} \zeta_i$
	\end{algorithmic}
\end{algorithm}

\clearpage
\section{Experiment Set-Up}
\label{app:exp-setup}

In the following experiments, we share the RoBERTa encoder for RoBERTa-$\zeta$ and RoBERTa-$\nu$. On the top of RoBERTa-$\zeta$ and RoBERTa-$\nu$, it is a two-layer fully connected neural network equipped with GeLU activation \citep{hendrycks2016gaussian} and the same hidden dimension as RoBERTa. The RoBERTa encoder is initialized from RoBERTa-base checkpoint \citep{liu2019roberta}. We simply use reverse gradients for the mini-max updates. We set learning rate as $2\times 10^{-4}$ and use inverse square root learning rate decay. We impose the gradient norm clipping with the maximum norm $\norm{\cdot}_{2} \leq 10$. We use $100$ times larger learning rate for optimizing $\lambda$, $2$ times larger learning rate for RoBERTa-$\nu$. In \eqref{eq:dice_minmax_obj}, we set $\alpha_\zeta = 1$, $f(x)=x^2$ as suggested in \citet{yang2020off}. We maintain $\zeta \geq 0$ by adding a square activation at the end of RoBERTa-$\zeta$. The source code is built based on Transformers \citep{Wolf2019HuggingFacesTS}, AirDialog \citep{wei2018airdialogue}, and ParlAI \citep{miller2017parlai}. All experiments are conducted on a machine with $8 \times$ V100 GPUs on Google Cloud.

\section{Transformer-Based Agents for AirDialog}
\label{app:air-trans}

\textbf{Seller Agent Transformer Architecture}
There are four components for the encoder: ticket encoder, reservation encoder, dialog encoder, and task-specific heads (intent classification head and name classification head). 
All tickets and reservation are converted to natural languages. Noticing that, we always append a pseudo ticket in the ticket database representing ``no ticket found'' situation. The architecture is illustrated in Figure~\ref{fig:air_seller_transformer}.

\begin{figure}[!htb]
  \centering
  \includegraphics[width=0.8\textwidth]{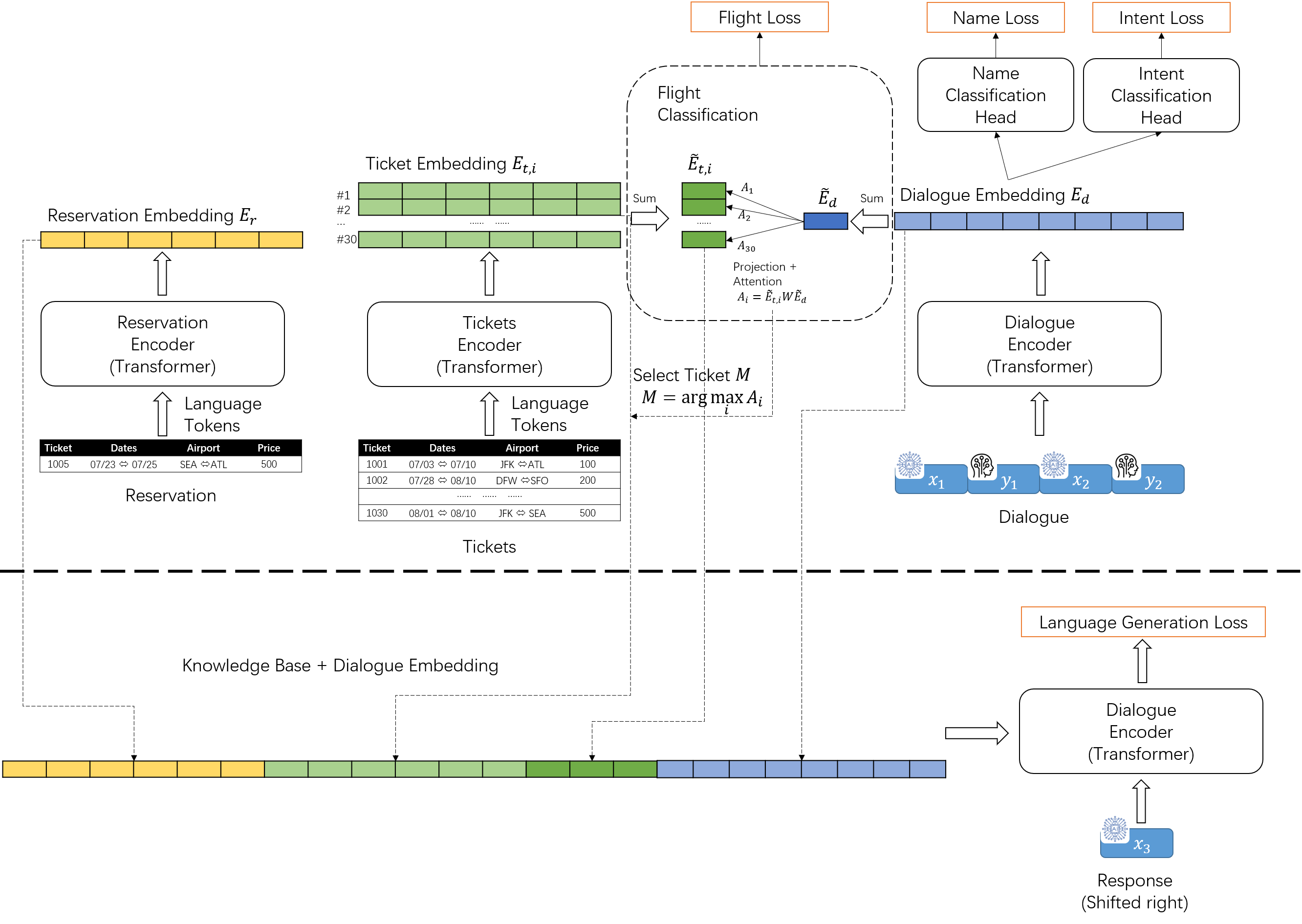}
  \caption{Transformer-based Seller Agent}
  \label{fig:air_seller_transformer}
\end{figure}

\textbf{Customer Agent Transformer Architecture}
There are two components for the encoder: intent encoder, reservation encoder.
All intents are converted to natural languages. The architecture is illustrated in Figure~\ref{fig:air_customer_transformer}.

\begin{figure}[!htb]
  \centering
  \includegraphics[width=0.5\textwidth]{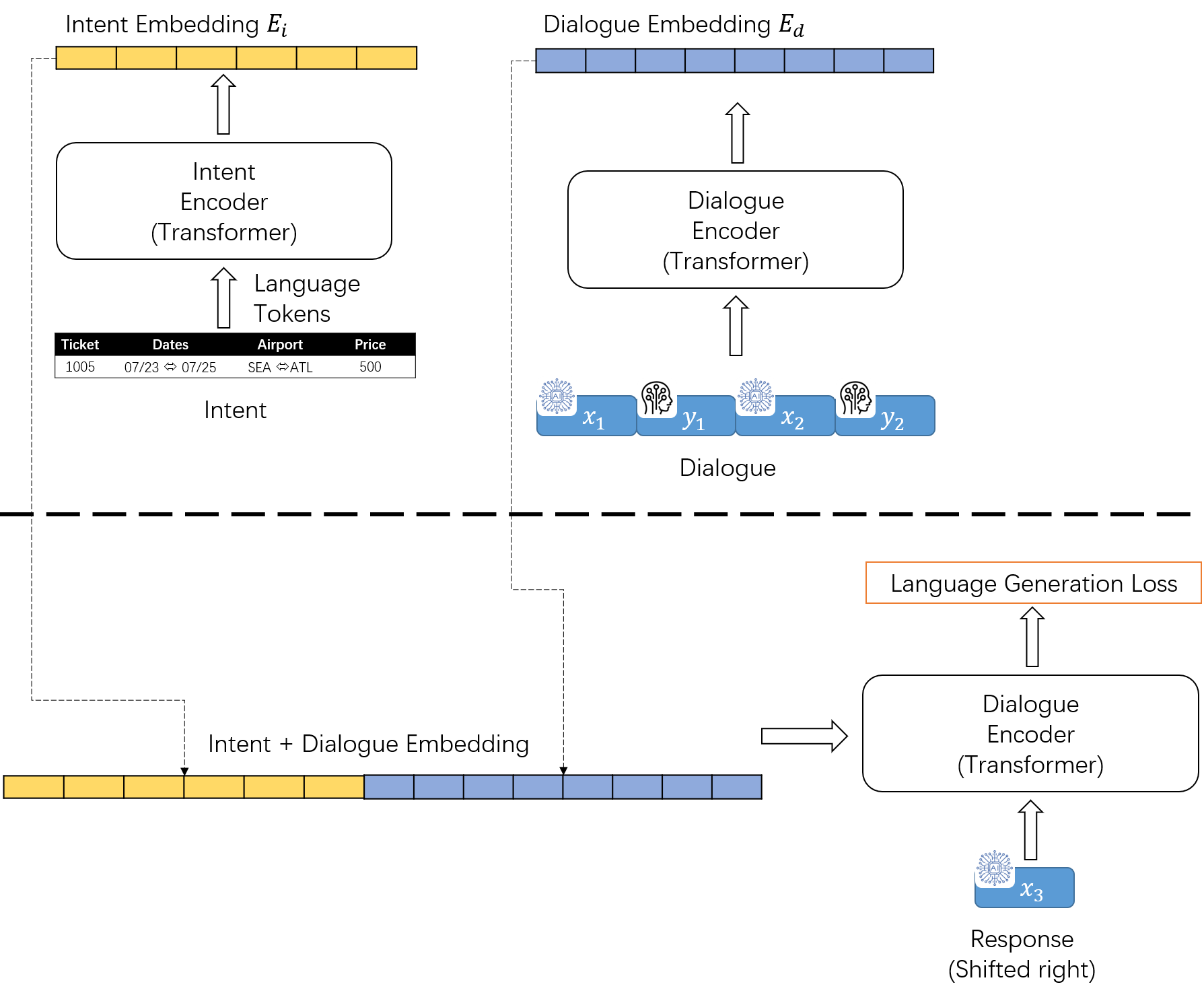}
  \caption{Transformer-based Customer Agent}
  \label{fig:air_customer_transformer}
\end{figure}


\textbf{Training Objective} Besides the language generation loss $\cL_{l}$, the training objective for seller consists of three parts: name loss, flight loss, intent loss:
\begin{align}
    \min_{\theta} \cL_s(\theta) =  \cL_l(\theta) +  \lambda_n \cL_{\textrm{name}}(\theta) +  \lambda_f \cL_{\textrm{flight}}(\theta)+  \lambda_i \cL_{\textrm{intent}}(\theta)
\end{align}

The customer agent is trained with normal language generation loss. 

\textbf{Benchmark} We compare the proposed model with the current AirDialog RNN baseline \citep{wei2018airdialogue}. As can be seen, the agent used in this paper are significantly stronger than  the baseline agent used in \citet{wei2018airdialogue}.

\begin{table}[htb!]
\caption{Benchmark of the proposed transformer based agent. `C' means customer, `S' means seller. Reward, Name, Flight, status are the task-specific scores obtained from self-play evaluation. }
\label{tab:benchmark_trans}
\begin{center}
\begin{tabular}{l|cccccccc}
\toprule \hline 
Model & BLEU (C) & BLEU (S) & PPL (C) & PPL (S) & Reward & Name & Flight & Status\\
\hline
RNN &  22.92 & 32.95 & - & - & 0.23 & 0.41 & 0.13 & 0.29 \\
Ours & 31.78 & 31.70 & 1.671 & 1.843 & 
0.702 & 1.00 & 0.547 & 0.761 \\
\hline \bottomrule 
\end{tabular}
\end{center}
\end{table}

\textbf{Hyper-Parameters} For training 24 seller agents used in Section~\ref{sec:exp}, we varies the size of training data (number of training dialogs) from $5K$ to the full size and varies $\lambda_i$ and $\lambda_f$ from $0.0001$ to $1$. For training the customer agent used in self-play evaluation, we use the full training data and tune the hyperparameters based on the BLEU score evaluated using the validation set. 

\textbf{Human Evaluation} The human evaluation is collected from 20 different Ph.D. students majored in Math/Stats/CS/IEOR. We provide detailed guidelines to the human evaluator that they have to speak to the agents with similar tone. Figure~\ref{fig:human_eval_screen} presents the screen shots of the human evaluation software. 

\begin{figure}
    \centering
    \includegraphics[width=\textwidth]{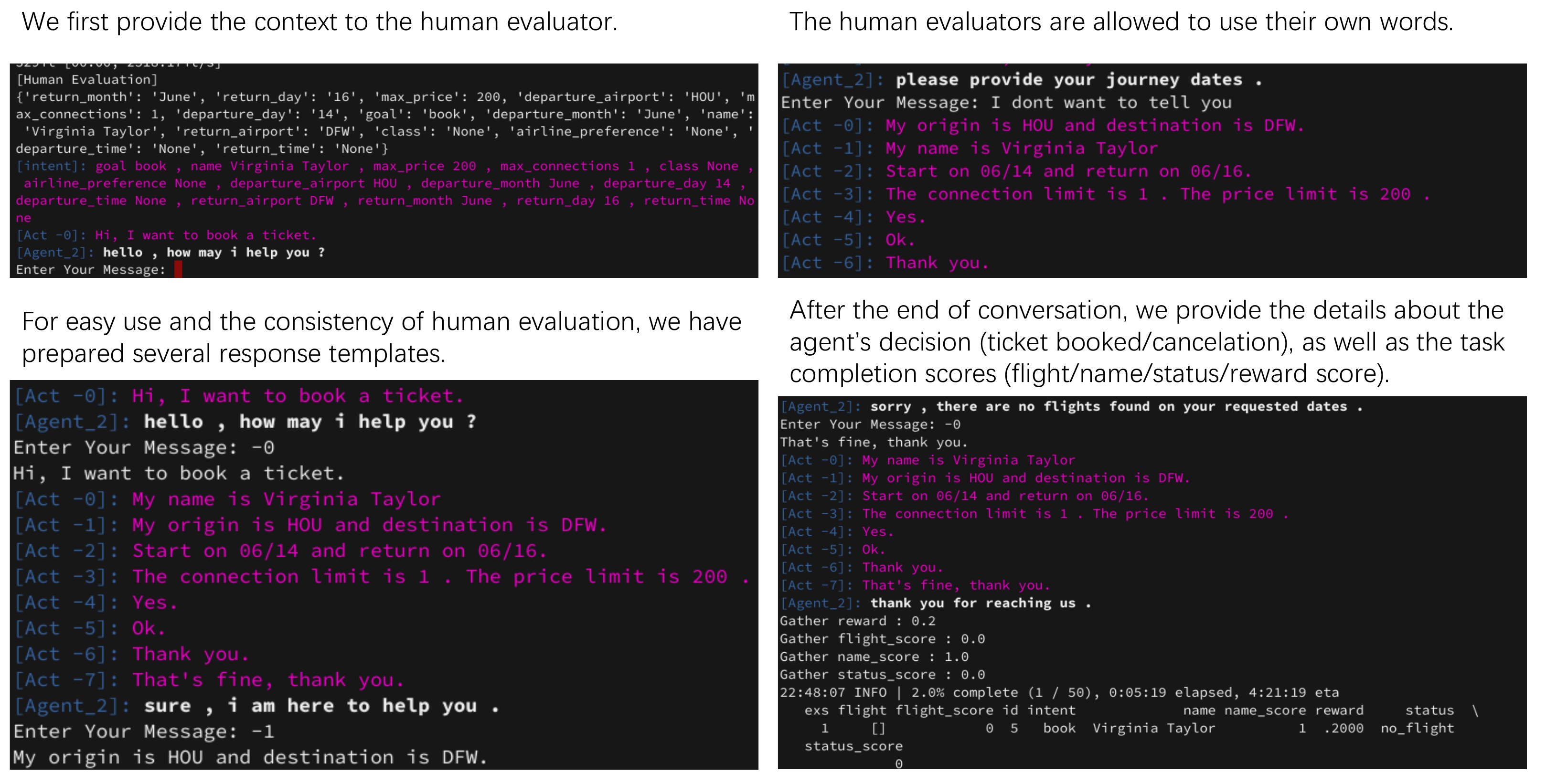}
    \caption{Screen Shots of Human Evaluation Software}
    \label{fig:human_eval_screen}
\end{figure}


\clearpage
\section{Additional Experiment}
\label{app}

\subsection{AirDialog}
\label{app:exp_air}

\textbf{Regression Plot}

We present the regression plot for the full setting in Figure~\ref{fig:air_ope_vs_auto} and for the selected agent in Figure~\ref{fig:air_ope_vs_auto_hard}.

\begin{figure}[!htb]
\begin{center}
\begin{subfigure}{0.49\textwidth}
  \centering
  \includegraphics[width=\textwidth]{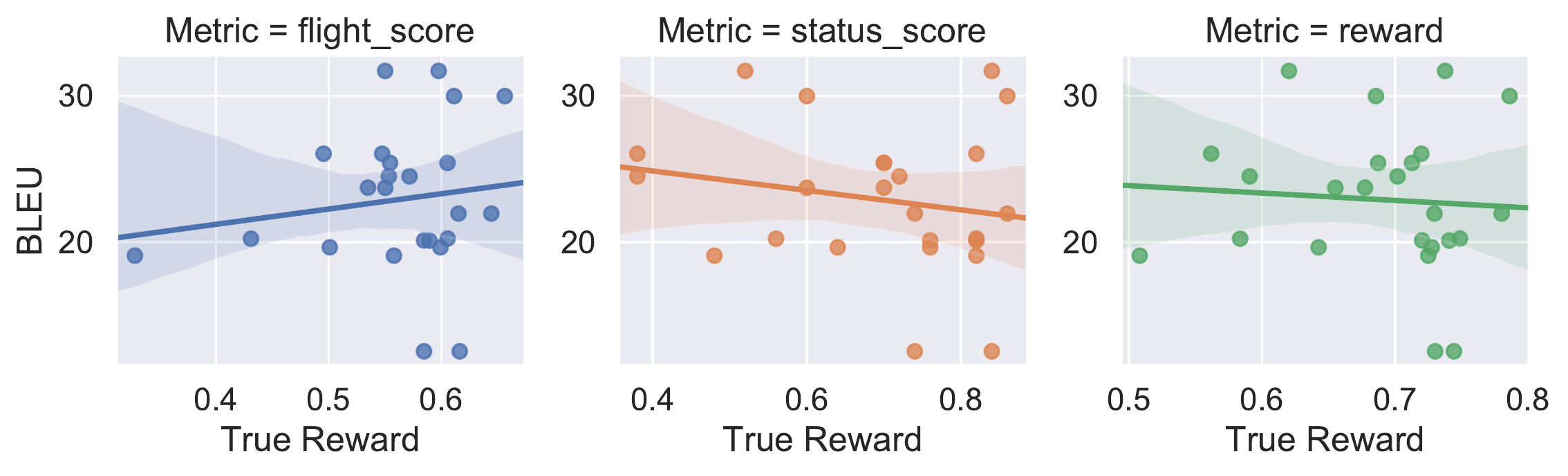}
  \caption{BLEU vs. Human Evaluation}
\end{subfigure}
\begin{subfigure}{0.49\textwidth}
  \centering
  \includegraphics[width=\textwidth]{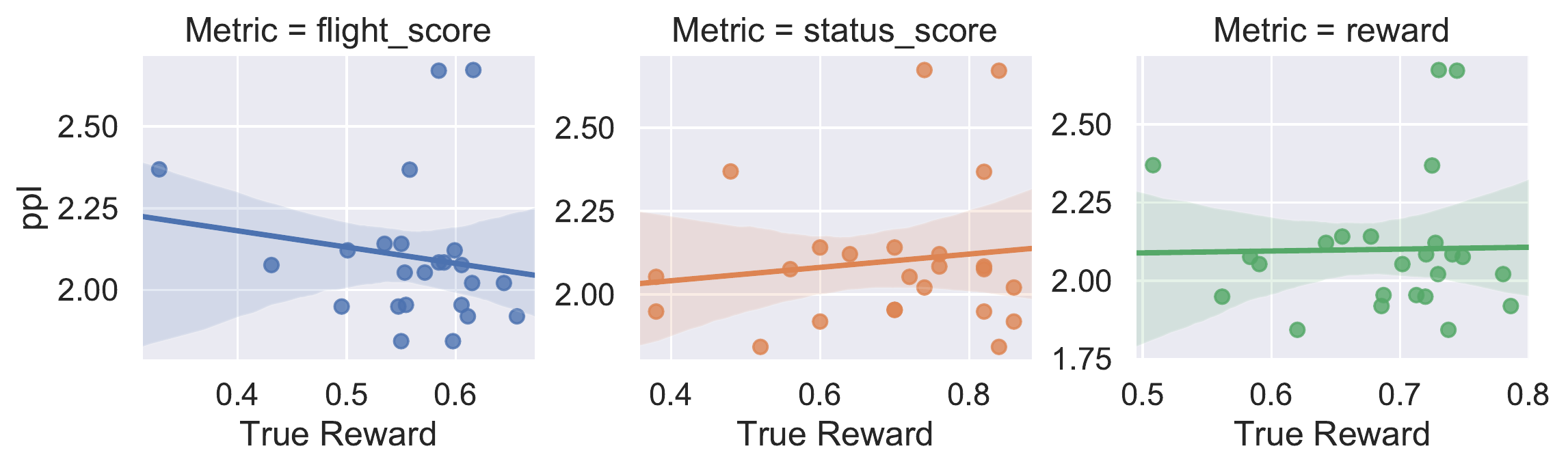}
  \caption{PPL vs. Human Evaluation}
\end{subfigure}
\begin{subfigure}{0.49\textwidth}
\includegraphics[width=\textwidth]{figure/air/human/selfplay_vs_human.pdf}
\vspace{-0.1in}
\caption{SPE vs. Human Evaluation}
\end{subfigure}
\begin{subfigure}{0.49\textwidth}
\includegraphics[width=\textwidth]{figure/air/human/ope_vs_human.pdf}
\vspace{-0.1in}
\caption{{\ours} vs. Human Evaluation}
\end{subfigure}

\end{center}
\caption{Regression Plot.  The x-axis is the average reward obtained by chatting with human. The y-axis is BLEU/PPL/the reward estimated by {\ours}. Different colors denotes different type of rewards (flight score, status score, and overall reward).  The solid line is obtained by linear regression and the shaded region indicates $95\%$ confidence interval. (see more in \textit{seaborn} packages). }
\label{fig:air_ope_vs_auto}
\end{figure}

\begin{figure}[!htb]
\begin{center}
\begin{subfigure}{0.49\textwidth}
  \centering
  \includegraphics[width=\textwidth]{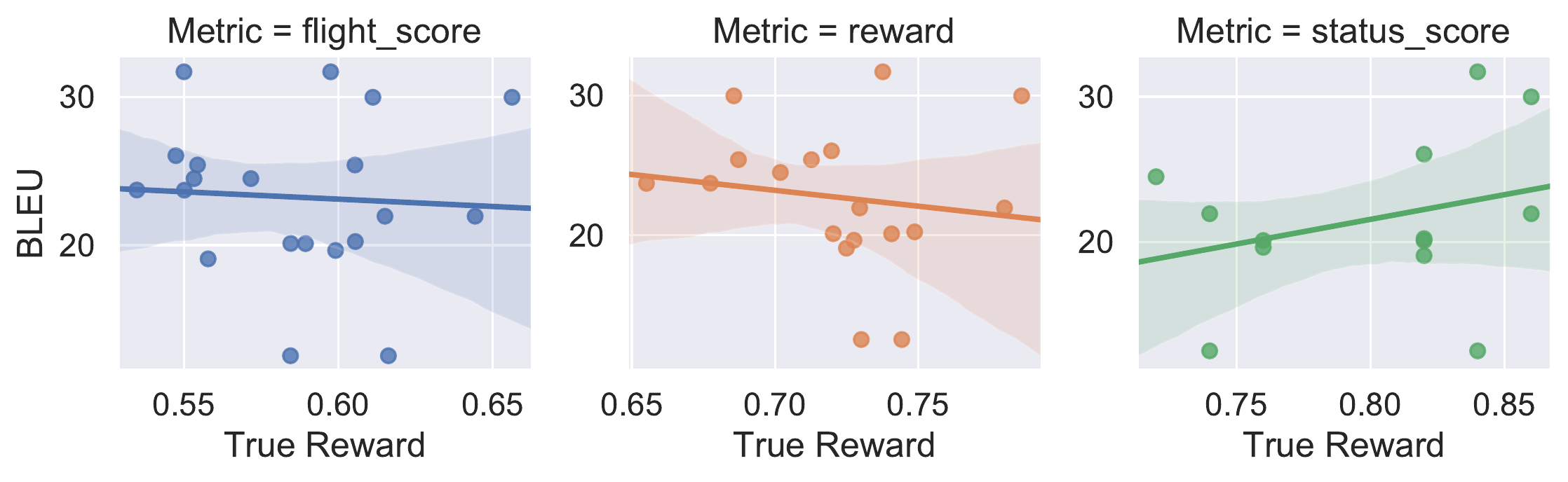}
  \caption{BLEU vs. Human Evaluation}
\end{subfigure}
\begin{subfigure}{0.49\textwidth}
  \centering
  \includegraphics[width=\textwidth]{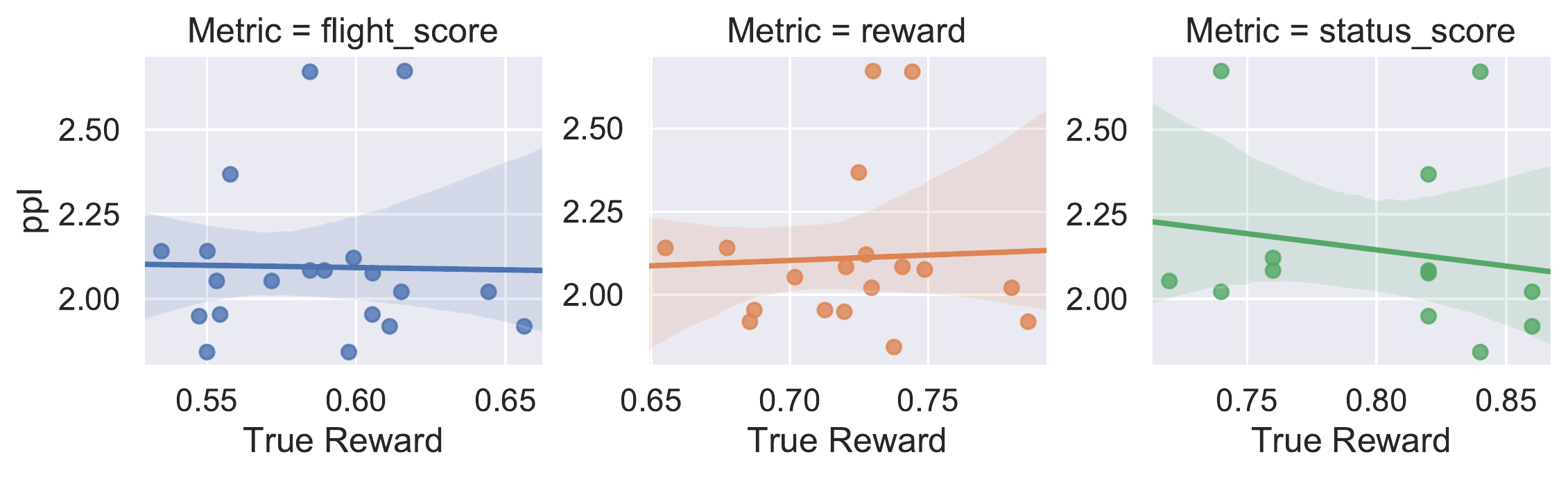}
  \caption{PPL vs. Human Evaluation}
\end{subfigure}
\begin{subfigure}{0.49\textwidth}
  \centering
  \includegraphics[width=\textwidth]{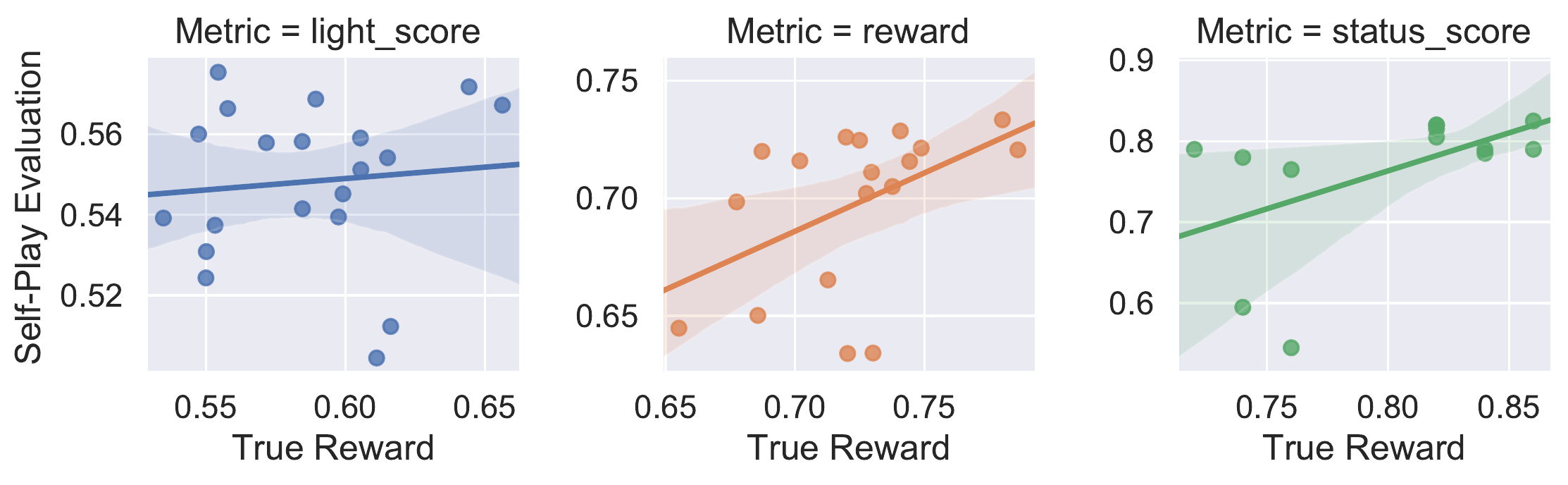}
  \caption{SPE vs. Human Evaluation}
\end{subfigure}
\begin{subfigure}{0.49\textwidth}
  \centering
  \includegraphics[width=\textwidth]{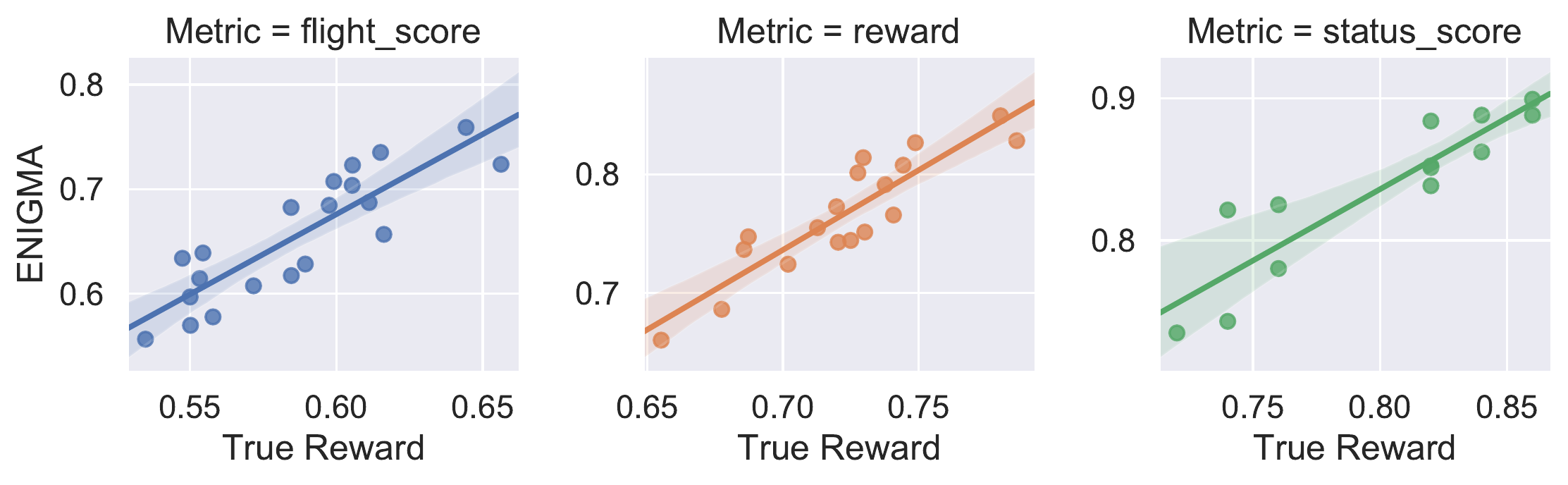}
  \caption{{\ours} vs. Human Evaluation }
\end{subfigure}

\end{center}
\caption{Regression Plot for ``selected agent'' Setting.  The x-axis is the average reward obtained by chatting with human. The y-axis is BLEU/PPL/the reward estimated by {\ours}/Self-Play Evaluation (SPE). Different colors denotes different type of rewards (flight score, status score, and overall reward).  The solid line is obtained by linear regression and the shaded region indicates $95\%$ confidence interval. (see more in \textit{seaborn} packages). }
\label{fig:air_ope_vs_auto_hard}
\end{figure}

\textbf{Training Curves}

We show the training curves of the {\ours} in Figure~\ref{fig:air_learncurve}. Here four models are presented, the best model (ranked $100\%$), model ranked as $50\%$, model ranked as $25\%$ and the worst model (ranked $0\%$). As can been seen the estimated reward estimation converges steadily to it's true values. 

\begin{figure}[h]
\begin{center}
  \includegraphics[width=\textwidth]{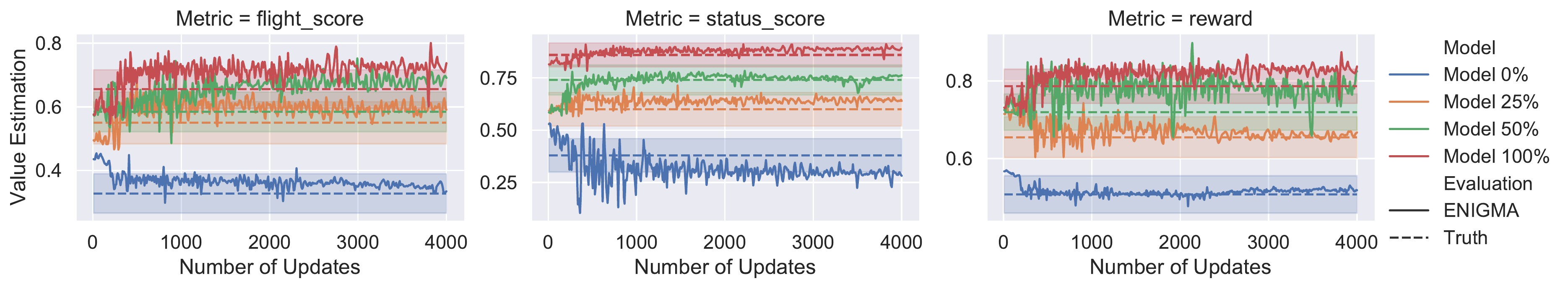}
\end{center}
\caption{Learning curve for AirDialog. The x-axis is the number of mini-max updates, while y-axis is the estimated values.  The straight line is the true reward, while the shaded region denotes the $90\%$ confidence interval. The true reward and the confidence interval is obtained via different evaluation chats between the agents and the environment (model/human). Different colors denotes different agents. }
\label{fig:air_learncurve}
\end{figure}

\textbf{Ablation Study}

Here we provide large figures (Figure~\ref{fig:prelim_exp_full} and Figure~\ref{fig:prelim_loss_exp_full}) for the ablation study mentioned in Section~\ref{sec:exp}.

\begin{figure*}[htb!]
	\begin{center}
		\includegraphics[height=8cm]{figure/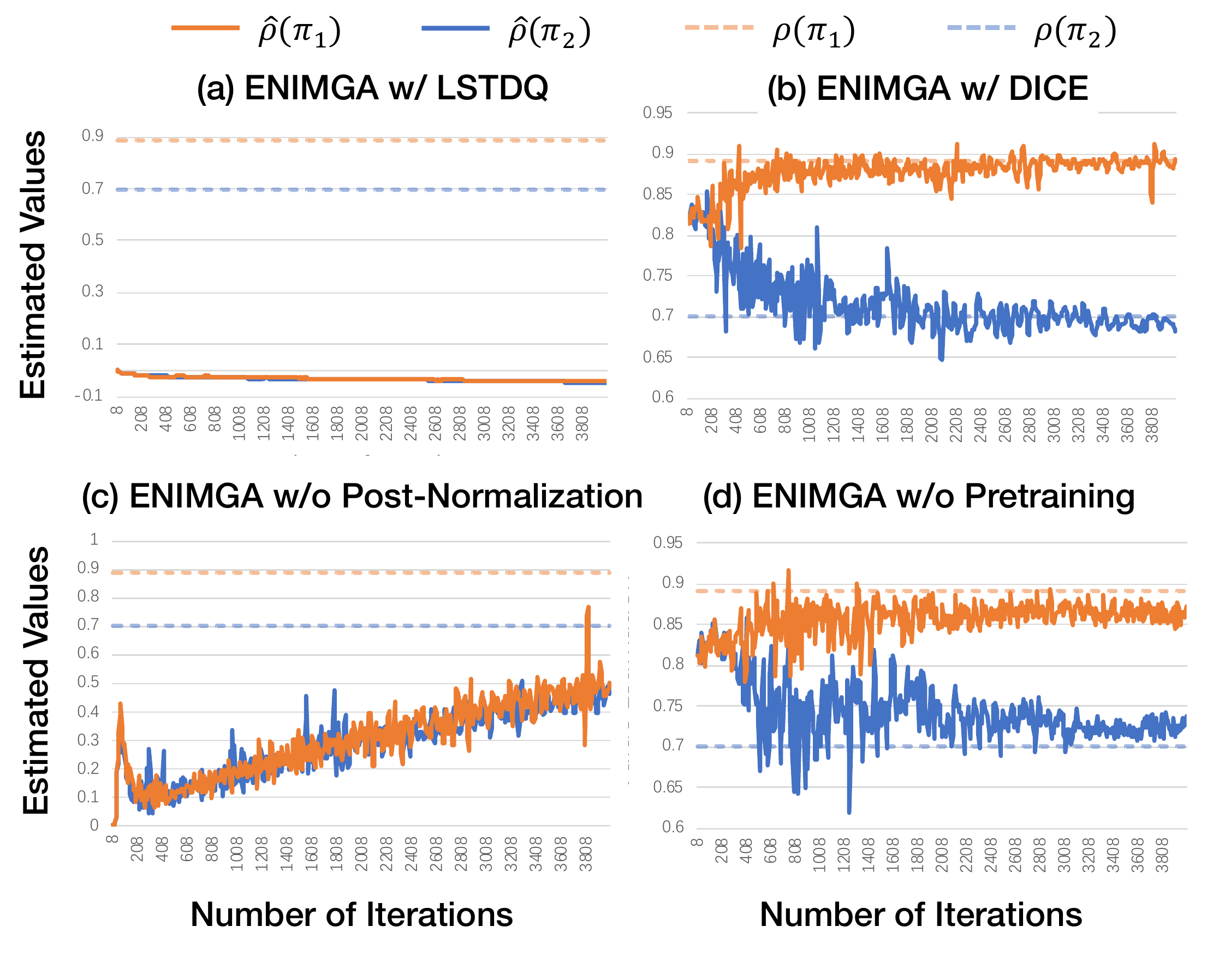}
	\end{center}
	\caption{Reward estimation of two target agents ($\pi_1$ and $\pi_2$) vs. \# of iterations. Dotted lines represents true rewards. }
	\label{fig:prelim_exp_full}
\end{figure*} 

\begin{figure}[!htb]
	\begin{center}
		\includegraphics[width=0.7\textwidth]{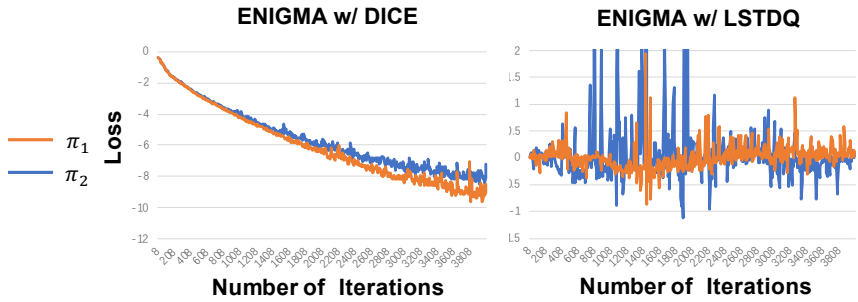}
	\end{center}
	\caption{Loss value of two target agents during mini-max optimization. }
	\label{fig:prelim_loss_exp_full}
\end{figure}

\subsection{Additional Results for Rule-Based Agents of AirDialog} \label{app:rule-air}

\noindent~$\bullet$ \textit{Rule-Rule (R-R)}: Both customer and seller agents are rule based. We fix the customer rule-based model and construct and evaluate 6 seller agents. The strongest agent can perfectly interpret the intent of rule-based customers. While the weaker agents interprets the intent with different levels of noise. The learning curve is presented in Figure~\ref{fig:air_rulecurve}. 

\begin{figure}[!htb]
  \centering
  \includegraphics[width=0.5\textwidth]{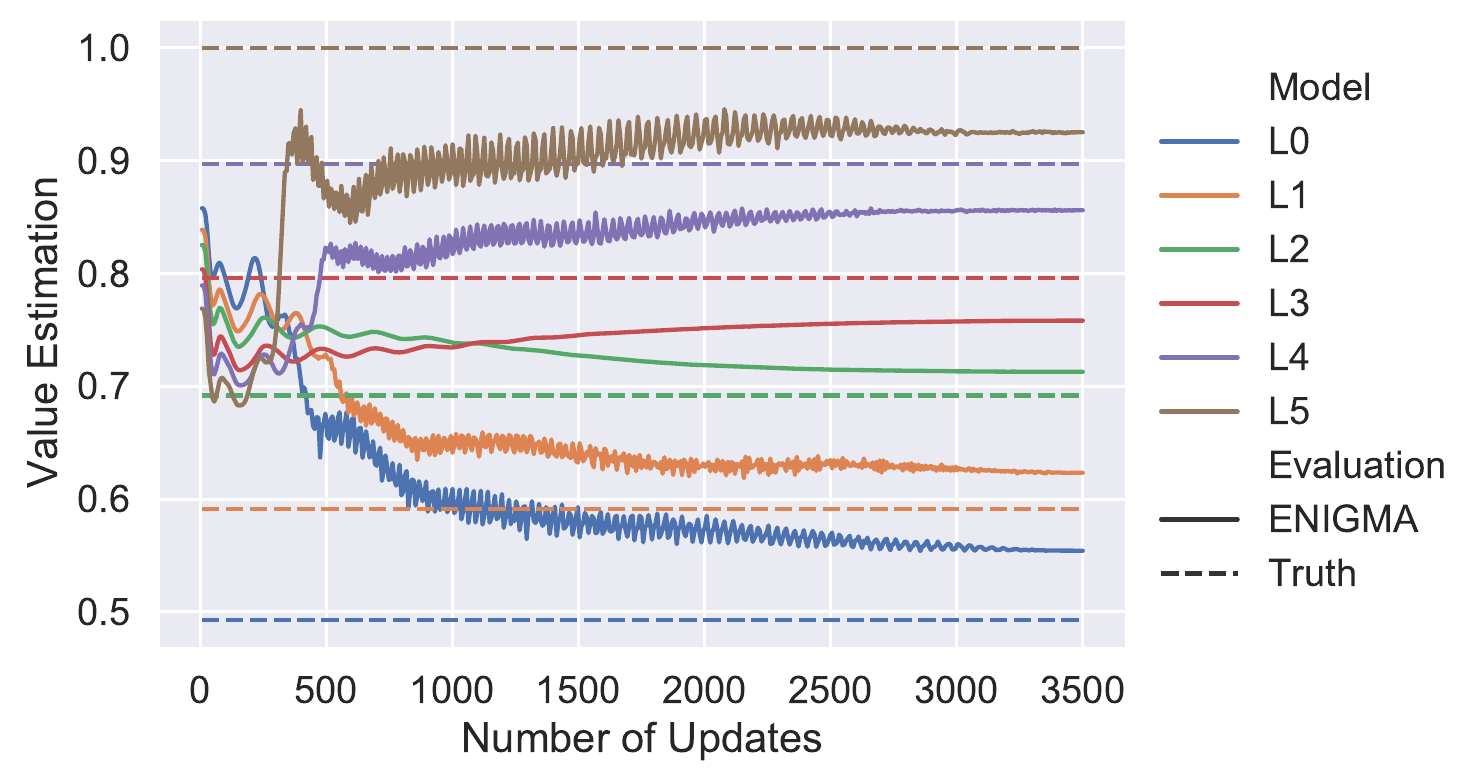}
  \caption{Learning Curve under Rule-Rule setting}
  \label{fig:air_rulecurve}
\end{figure}

\begin{table}[htb!]
\small
\caption{The correlation between two metrics. Each column is a task completion score obtained by interacting with the environments under R-R setting. Each row is an automatic metric. }
\label{tab:air_r2_r-r}
\vspace{-0.15in}
\begin{center}
\begin{tabular}{l|l|ccc|ccc}
\toprule \hline
\multicolumn{1}{c|}{\multirow{2}{*}{\bf Setting}}&\multicolumn{1}{c|}{\multirow{2}{*}{\bf Method}} & \multicolumn{3}{c|}{\bf Pearson Correlation}& \multicolumn{3}{c}{\bf Spearman's Rank Correlation } \\
\cline{3-8}
 &  &\multicolumn{1}{c}{\bf Flight Score } &\multicolumn{1}{c}{\bf Status Score } &\multicolumn{1}{c|}{\bf Reward } &\multicolumn{1}{c}{\bf Flight Score } &\multicolumn{1}{c}{\bf Status Score } &\multicolumn{1}{c}{\bf Reward } 
\\ \hline
\multirow{3}{*}{R-R} 
& BLEU        & 0.1981 & -0.0067 & 0.0980 & 0.1525 & 0.0009 & 0.0924\\
& ppl         & -0.1584 & -0.0610 & -0.1209 & -0.2475 & -0.1060 & -0.1178\\
& {\ours}         & \textbf{0.9687} & \textbf{0.9947} & \textbf{0.9874} & \textbf{0.8800} & \textbf{0.9872} & \textbf{0.9574}\\
\hline \bottomrule
\end{tabular}
\end{center}
\vspace{-0.15in}
\end{table}

\clearpage
\subsection{ConvAI2}
\label{app:exp_convai2}

\textbf{Training Curves.} 

Similar to the AirDialog dataset, we also show the training curves for the agents ranked at $100\%$, $50\%$, $25\%$, $0\%$ in Figure~\ref{fig:convai2_learncurve}. {\ours} also converges steadily to the true values within a resonable error.

\begin{figure}[h]
\begin{center}
\includegraphics[width=\textwidth]{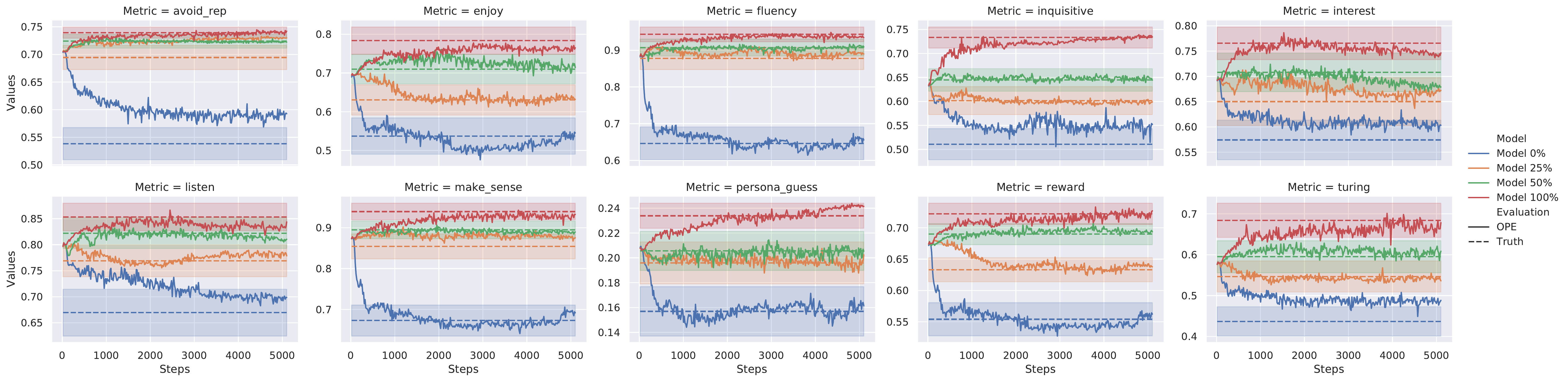}
\end{center}
\caption{Learning curve for ConvAI2. The x-axis is the number of mini-max updates, while y-axis is estimated values.  The straight line is the true reward, while the shaded area denotes the $95\%$ confidence interval. The true reward and the confidence interval is obtained via different evaluation chats between the agents and human. Different colors denotes different agents. }
\label{fig:convai2_learncurve}
\end{figure}

\textbf{Regression Plot.} 

We present the regression plot for the all 10 metrics in setting in Figure \ref{fig:convai2_opevshuman_full}. The corresponding corresponding correlation is presented in Table~\ref{tab:convai2_r2_full}. For comparison, we present the regression plot for self-play evaluation in Figure~\ref{fig:convai2_spvshuman_full}.

\begin{figure}[!h]
\begin{center}
\includegraphics[width=\textwidth]{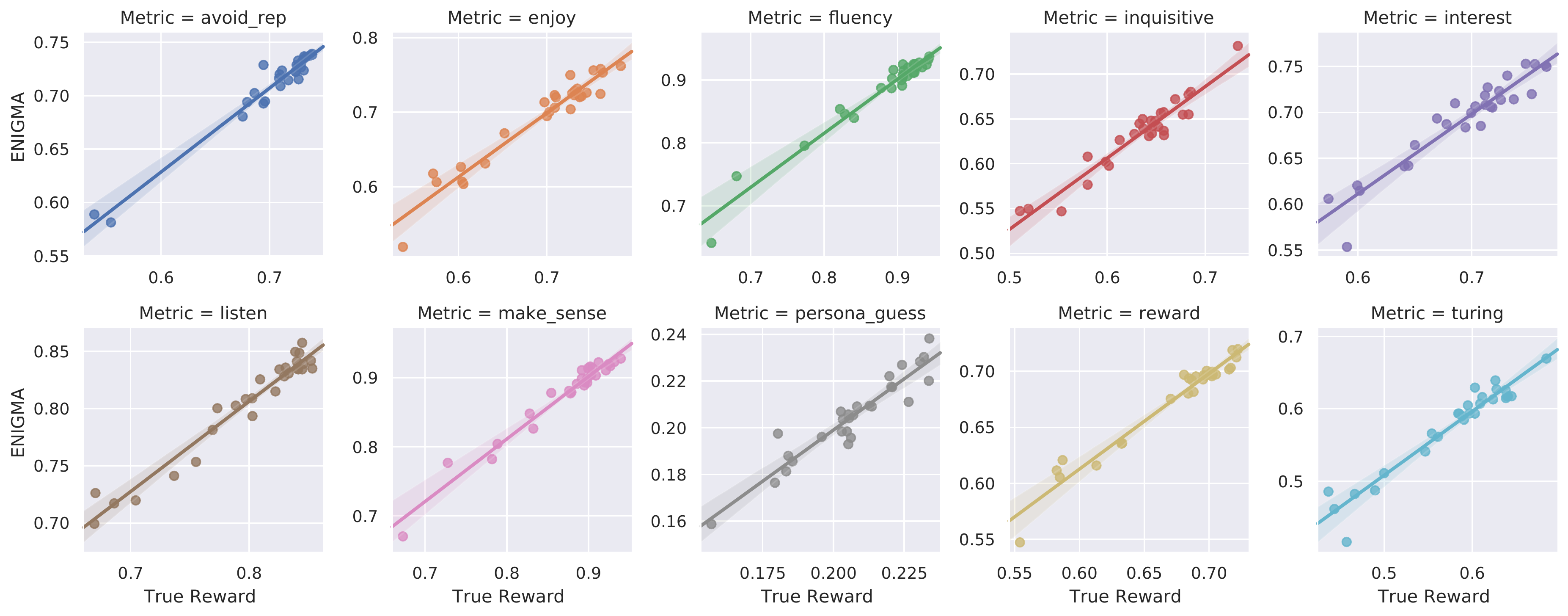}
\end{center}
\caption{ENIGMA vs. Human Evaluation for ConvAI2. The x-axis is the average reward obtained by chatting with human. The y-axis it the reward estimated by {\ours}. Different colors represent different language quality metrics. The solid line is obtained by simple linear regression.}
\label{fig:convai2_opevshuman_full}
\end{figure}

\begin{figure}[!h]
\begin{center}
\includegraphics[width=\textwidth]{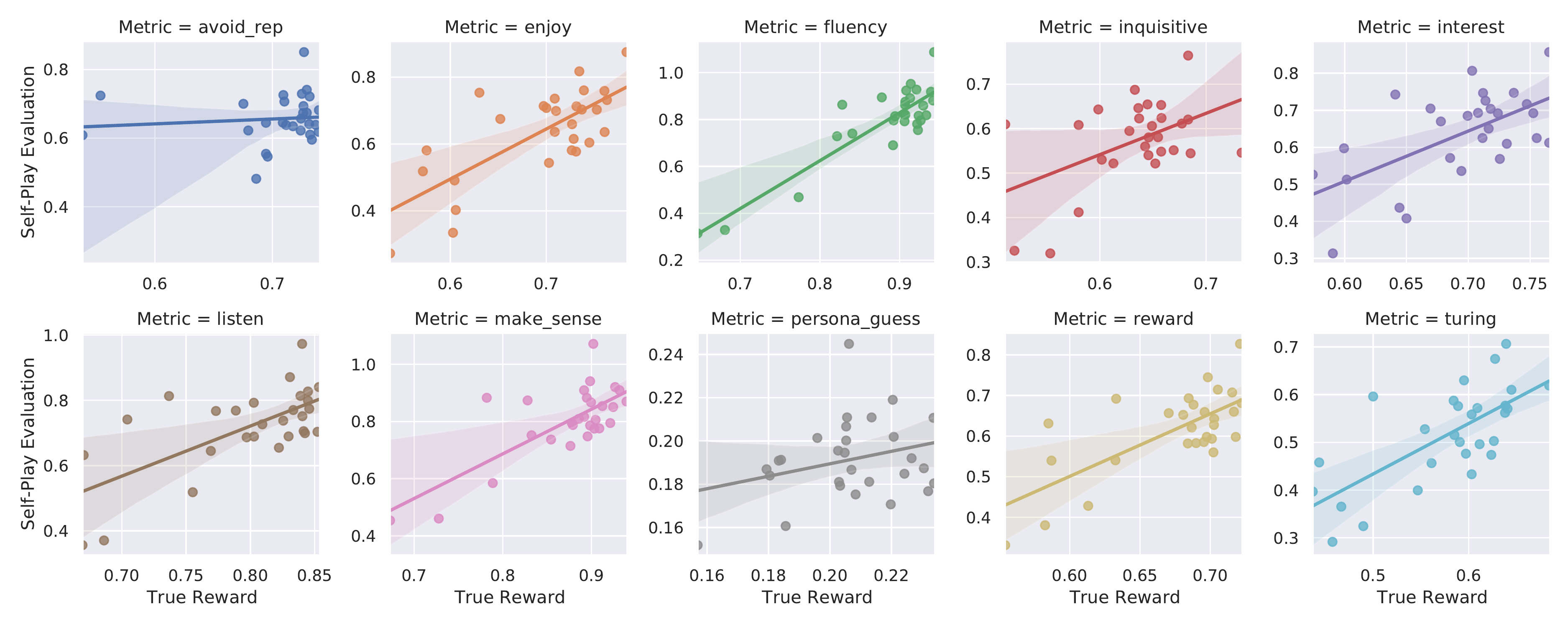}
\end{center}
\caption{Self-Play Evaluation vs. Human Evaluation for ConvAI2. The x-axis is the average reward obtained by chatting with human. The y-axis it the reward estimated by self-play evaluation Different colors represent different language quality metrics. The solid line is obtained by simple linear regression.}
\label{fig:convai2_spvshuman_full}
\end{figure}

\paragraph{Experience Data.} 

To analysis how many human-model evaluation dialogs are needed, we analysis {\ours} error under different sizes of the experience data. For ConvAI2, we compare the error for using $100\%$ data, $50\%$ data and $10\%$ data. As shown in Table~\ref{tab:convai2_r2_full} and Figure~\ref{fig:ope_error_size}, when we use half of the data, the error is similar to the one using full data. If we only use $10\%$ data, {\ours} becomes very inaccurate. OPE under low resource setting remains very challenging. 

In Figure~\ref{fig:ope_error_size}, we study the estimation error under different sizes of the experience data. As can be seen, when using $50\%$ data, the reward value estimation is very similar to the one of using full data. When using only $10\%$ data, the error is larger and {\ours} has lower correlation with the true reward. 

\begin{table}[htb!]
\caption{The correlation between different metrics and {\ours} estimation. Each column is each average language quality score obtained by chatting with human. Different rows represent different experience data {\ours} used. }
\label{tab:convai2_r2_full}
\begin{center}
\begin{tabular}{lccccc}
\toprule \hline
\multicolumn{6}{c}{\textbf{Pearson Correlation}}
\\
\hline
\multicolumn{1}{c}{\bf Setting} & \multicolumn{1}{c}{\bf Avoid Rep.}  &\multicolumn{1}{c}{\bf Enjoy } &\multicolumn{1}{c}{\bf Fluency } &\multicolumn{1}{c}{\bf Inquisitive } &\multicolumn{1}{c}{\bf Interest }
\\ \hline 
Full Data   & 0.9792 & 0.9661 & 0.9767 & 0.9584 & 0.9488 \\
$50\%$ Data & 0.9573 & 0.9046 & 0.9550 & 0.9237 & 0.8644\\
$10\%$ Data & 0.9266 & 0.6595 & 0.8910 & 0.8286 & 0.5052\\
Selected Data   & 0.6944 & 0.6759 & 0.7762 & 0.5605 & 0.5820\\
\hline
\multicolumn{1}{c}{\bf Setting} &\multicolumn{1}{c}{\bf Listen } &\multicolumn{1}{c}{\bf Make Sense }  &\multicolumn{1}{c}{\bf Persona } &\multicolumn{1}{c}{\bf Reward}  &\multicolumn{1}{c}{\bf Turing} 
\\ \hline 
Full Data   & 0.9754 & 0.9788 & 0.9415 & 0.9773 & 0.9637 \\
$50\%$ Data & 0.8971 & 0.9585 & 0.8770 & 0.9374 & 0.8506\\
$10\%$ Data & 0.7455 & 0.8100 & 0.4544 & 0.8240 & 0.6825\\
Selected Data   & 0.5520 & 0.7402 & 0.6879 & 0.6968 & 0.5394\\
\hline 
\hline
\multicolumn{6}{c}{\textbf{Spearman's rank correlation}}
\\
\hline
\multicolumn{1}{c}{\bf Setting} & \multicolumn{1}{c}{\bf Avoid Rep.}  &\multicolumn{1}{c}{\bf Enjoy } &\multicolumn{1}{c}{\bf Fluency } &\multicolumn{1}{c}{\bf Inquisitive } &\multicolumn{1}{c}{\bf Interest }
\\ \hline 
Full Data   & 0.8905 & 0.9070 & 0.9178 & 0.8717 & 0.9210 \\
$50\%$ Data & 0.7558 & 0.7980 & 0.6651 & 0.8482 & 0.7727 \\
$10\%$ Data & 0.4128 & 0.6147 & 0.6492 & 0.6335 & 0.4713 \\
Selected Data   & 0.5138 & 0.5561 & 0.4522 & 0.3168 & 0.6027 \\
\hline
\multicolumn{1}{c}{\bf Setting} &\multicolumn{1}{c}{\bf Listen } &\multicolumn{1}{c}{\bf Make Sense }  &\multicolumn{1}{c}{\bf Persona } &\multicolumn{1}{c}{\bf Reward}  &\multicolumn{1}{c}{\bf Turing} 
\\ \hline 
Full Data   & 0.9240 & 0.9448 & 0.9205 & 0.9485 & 0.9213 \\
$50\%$ Data & 0.7784 & 0.8647 & 0.8293 & 0.7750 & 0.7026 \\
$10\%$ Data & 0.3914 & 0.5096 & 0.3651 & 0.5774 & 0.5893 \\
Selected Data   & 0.4585 & 0.6126 & 0.5844 & 0.6672 & 0.4259 \\
\hline \bottomrule
\end{tabular}
\end{center}
\end{table}

\begin{figure}[!htb]
\begin{center}
\includegraphics[width=\textwidth]{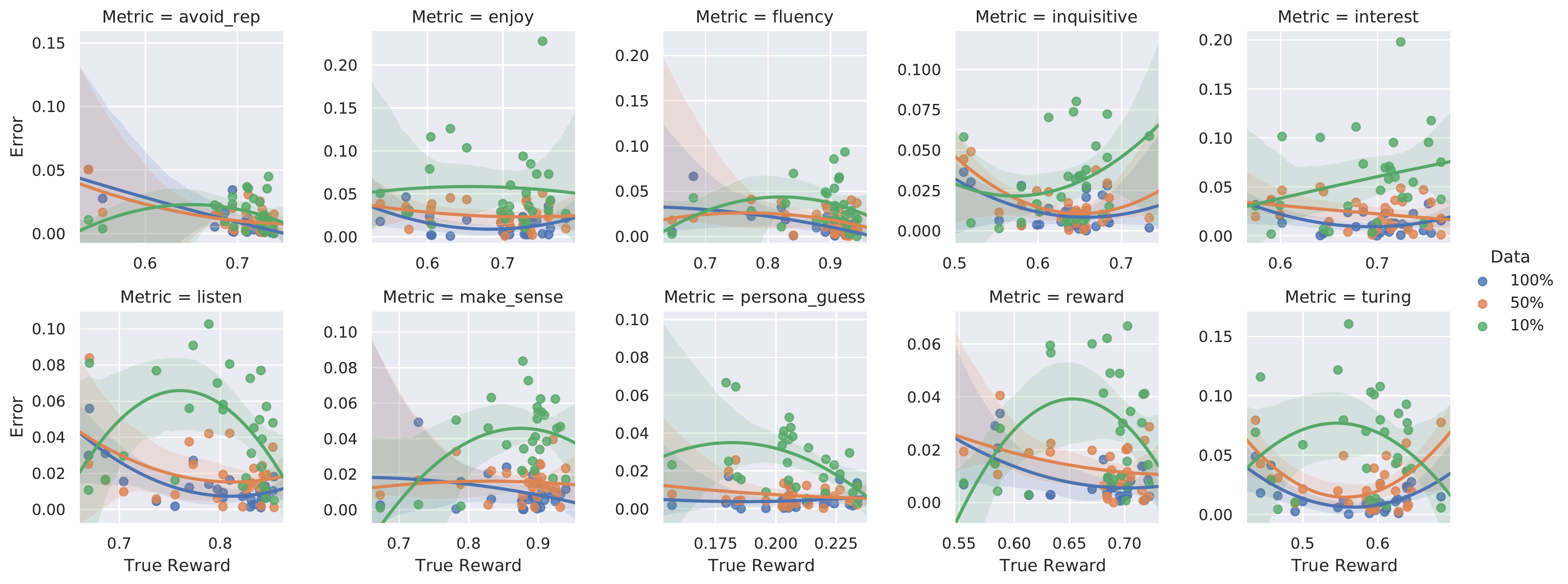}
\end{center}
\caption{Error Analysis on Convai2 under different data size. The x-axis is the true average reward. The y-axis is the {\ours} error. The solid line is the fitted quadratic function. Blue, orange, green colors represent $100\%$, $50\%$, $10\%$ datasets respectively. }
\label{fig:ope_error_size}
\end{figure}


\textbf{A More Challenging Setting.}

Considering that some target agents are similar to the behavior policies with only slight difference in the way of decoding, they might yield very the similar dialog when the human acts in the same way. 
Specifically, in the data collection process, the target model might yield the responses that are very similar to the ones of the behavior policy for all turns in the dialog: ${\rm EditDistance}(a_t,a'_t) \leq 15 ~~ \forall 0\leq t \leq T$. For a more realistic setting, we consider removing these highly overlapped dialogs after the data collection process. This setting is very challenging that the target policy behavior is less covered by the experience data and ENIGMA can only hopefully generalize via pre-trained RoBERTa. The results are shown in Figure~\ref{fig:convai2_opevshuman_hard} and Table~\ref{tab:convai2_r2_full}. As can be seen, this setting remains challenging as the Pearson correlation is between $0.5$ and $0.8$. For comparison, we present the regression plot for self-play evaluation using this challenging subset of the experience data in Figure~\ref{fig:convai2_spvshuman_hard}.

\begin{figure}[!htb]
\begin{center}
\includegraphics[width=\textwidth]{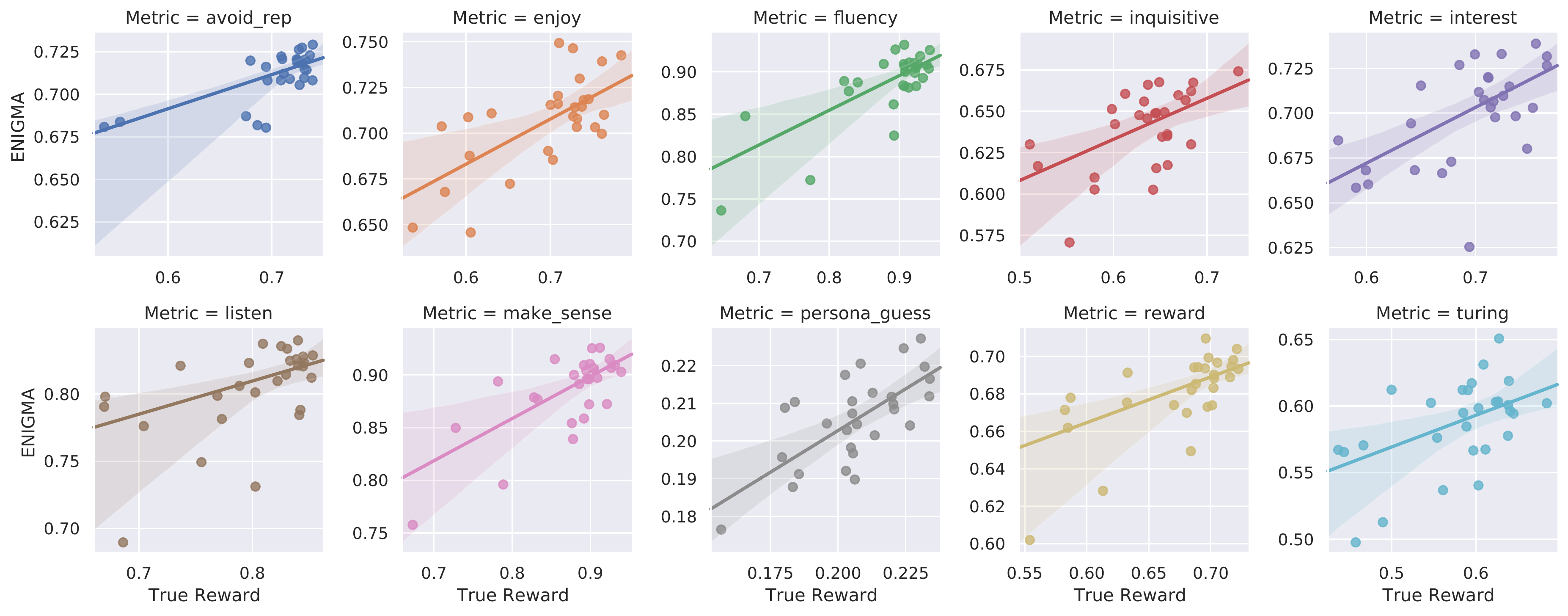}
\end{center}
\caption{{\ours} vs. Human Evaluation for ConvAI2 under the challenging setting. The x-axis is the average reward obtained by chatting with human. The y-axis it the reward estimated by {\ours}. Different colors represent different language quality metrics. The solid line is obtained by simple linear regression.}
\label{fig:convai2_opevshuman_hard}
\end{figure}

\begin{figure}[!htb]
\begin{center}
\includegraphics[width=\textwidth]{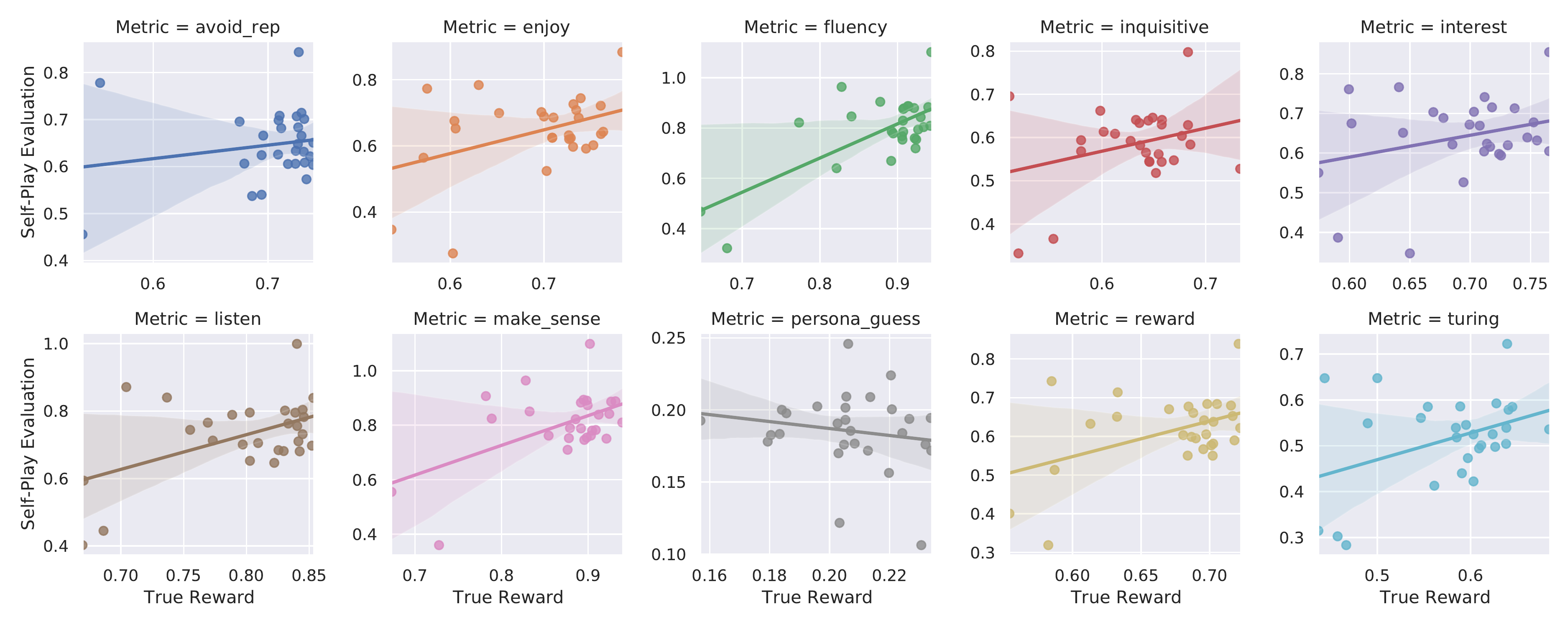}
\end{center}
\caption{Self-Play Evaluation vs. Human Evaluation for ConvAI2 under the challenging setting. The x-axis is the average reward obtained by chatting with human. The y-axis it the reward estimated by self-play evaluation. Different colors represent different language quality metrics. The solid line is obtained by simple linear regression.}
\label{fig:convai2_spvshuman_hard}
\end{figure}

\begin{figure}[!htb]
\begin{center}
\includegraphics[width=\textwidth]{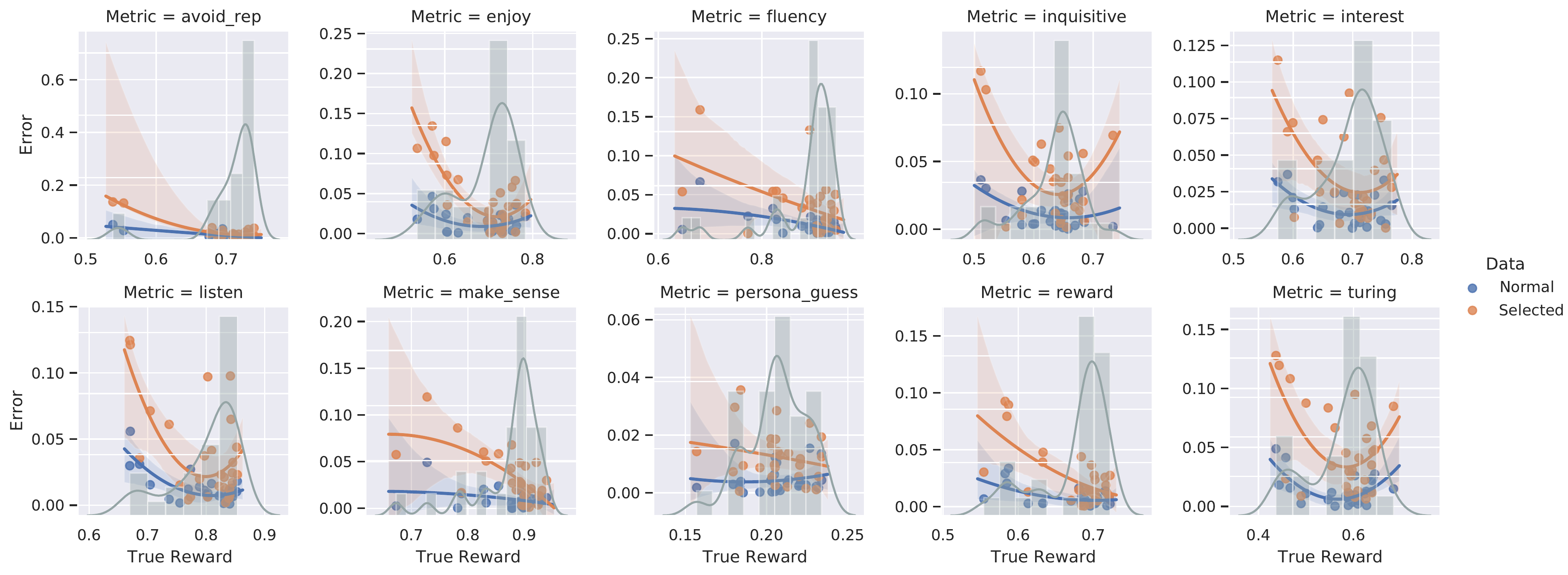}
\end{center}
\caption{{\ours} Error Comparison between using normal and selected challenging experience data on ConvAI2. The x-axis is the true average reward. The y-axis is the {\ours} error. The solid line is the fitted quadratic function. The histogram is the empirical distribution of the rewards of all the experience data. Orange represents challenging dataset, and blue represents normal dataset.}
\label{fig:convai2_error_hard}
\end{figure}

We remark that such experiments can also be done for AirDialog. However, due to the limitation that most agents are just learning template responses due to the goal-oriented nature, removing overlapped dialogs results in an extremely incomplete experience dataset. For example, most ``cancelation'' dialogs will be removed since they are very simple and basically the same for different agents. As a result {\ours} can not make a reasonable estimation due to the highly incomplete experience data.

Figure~\ref{fig:convai2_error_hard} compares the error of {\ours} between using the normal experience data and the selected challenging one. As can be seen, the error using the selected data is larger particularly for the agents with exceptionally low/high true reward. That indicates the problem of the lack of dialog coverage is exaggerated under the challenging setting, while the {\ours} estimation remains accurate when there is sufficient dialog coverage.

\textbf{Comparison to Automatic Hand-crafted Metrics.} 

We compare {\ours} with other automatic hand-crafted metrics proposed in \citet{see2019what}. For a more intuitive comparison, we use heat map and box plot to visualize the correlations between different automatic evaluation metrics and different human evaluation metrics. As can be seen in Figure~\ref{fig:heatmap_convai2} and Figure~\ref{fig:boxplot_convai2}, most hand-crafted metrics have relatively low correlation to human evaluation metrics. The only exception is the ``question marks'' automatic metrics for inquisitive human evaluation metric. Some hand-crafted metrics have high Pearson correlation to some human evaluation metrics, while the corresponding Spearman's rank correlation is low. The reason is that they can easily identify some extremely good/bad agents while they are less effective for identifying agents with similar performance. 

\textbf{Comparison to BLEU, BLEURT, and BERTscore.} 
We compare {\ours} with other automatic single-turn language quality metrics in Figure~\ref{fig:heatmap_convai2}: BLEU, BLEURT \citep{sellam2020bleurt}, and BERTscore \citep{zhang2019bertscore}. As can be seen, these metrics only have high correlation to certain human evaluation metrics and low correlation to other metrics. 
Note that, we do not compare the perplexity as the agents rely on complicated decoding methods \citep{see2019what} and perplexity does not take decoding into consideration.

\begin{figure}[!htb]
\begin{center}

\begin{subfigure}{\textwidth}
  \centering
  \includegraphics[width=\textwidth]{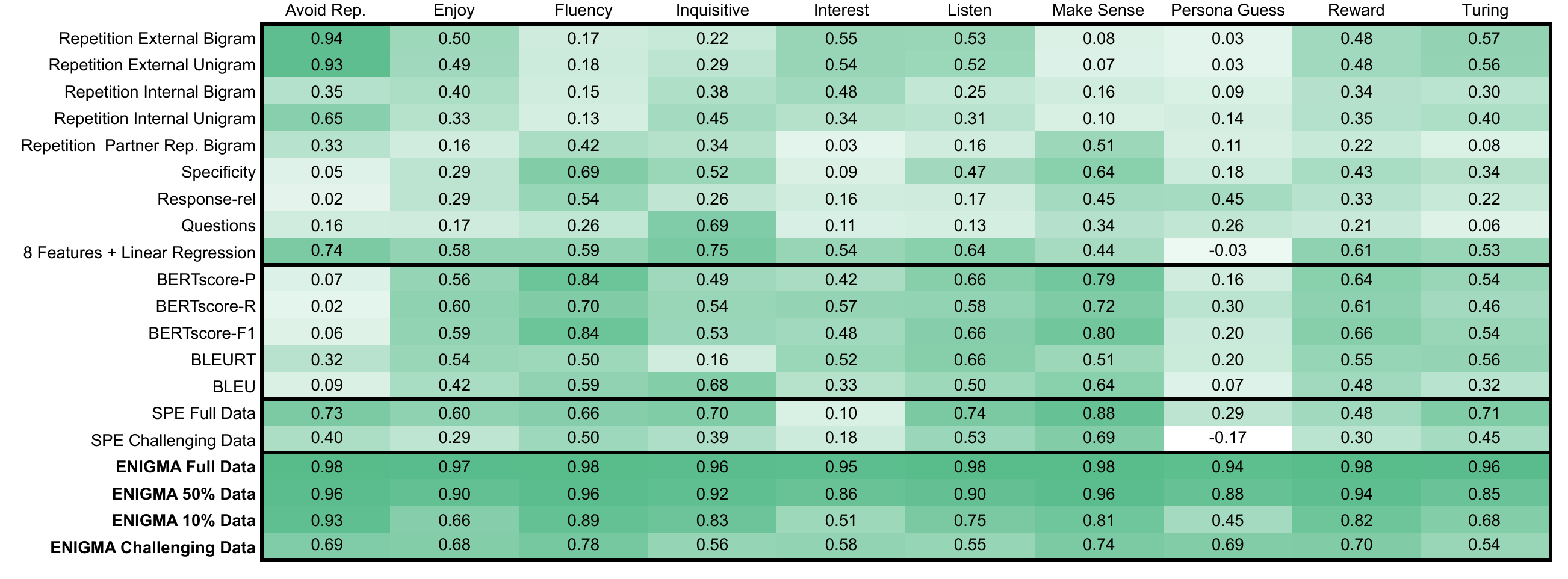}
  \caption{Pearson Correlation}
\end{subfigure}

\begin{subfigure}{\textwidth}
  \centering
  \includegraphics[width=\textwidth]{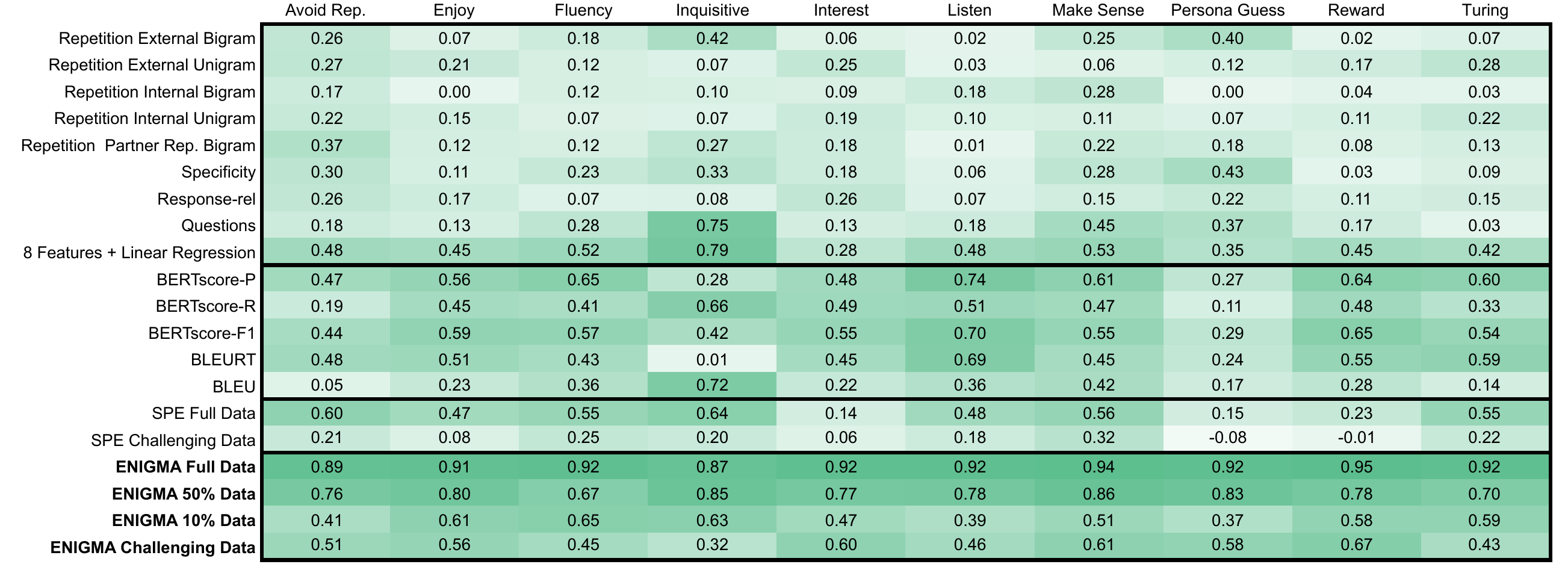}
  \caption{Spearman's Rank  Correlation}
\end{subfigure}

\end{center}
\caption{Heat map for correlation between different automatic evaluation metrics and different human evaluation metrics. Different rows represent different automatic metrics. Different column represent different human evaluation metrics.}
\label{fig:heatmap_convai2}
\end{figure}

\begin{figure}[!htb]
\begin{center}
\begin{subfigure}{0.7\textwidth}
  \centering
  \includegraphics[width=\textwidth]{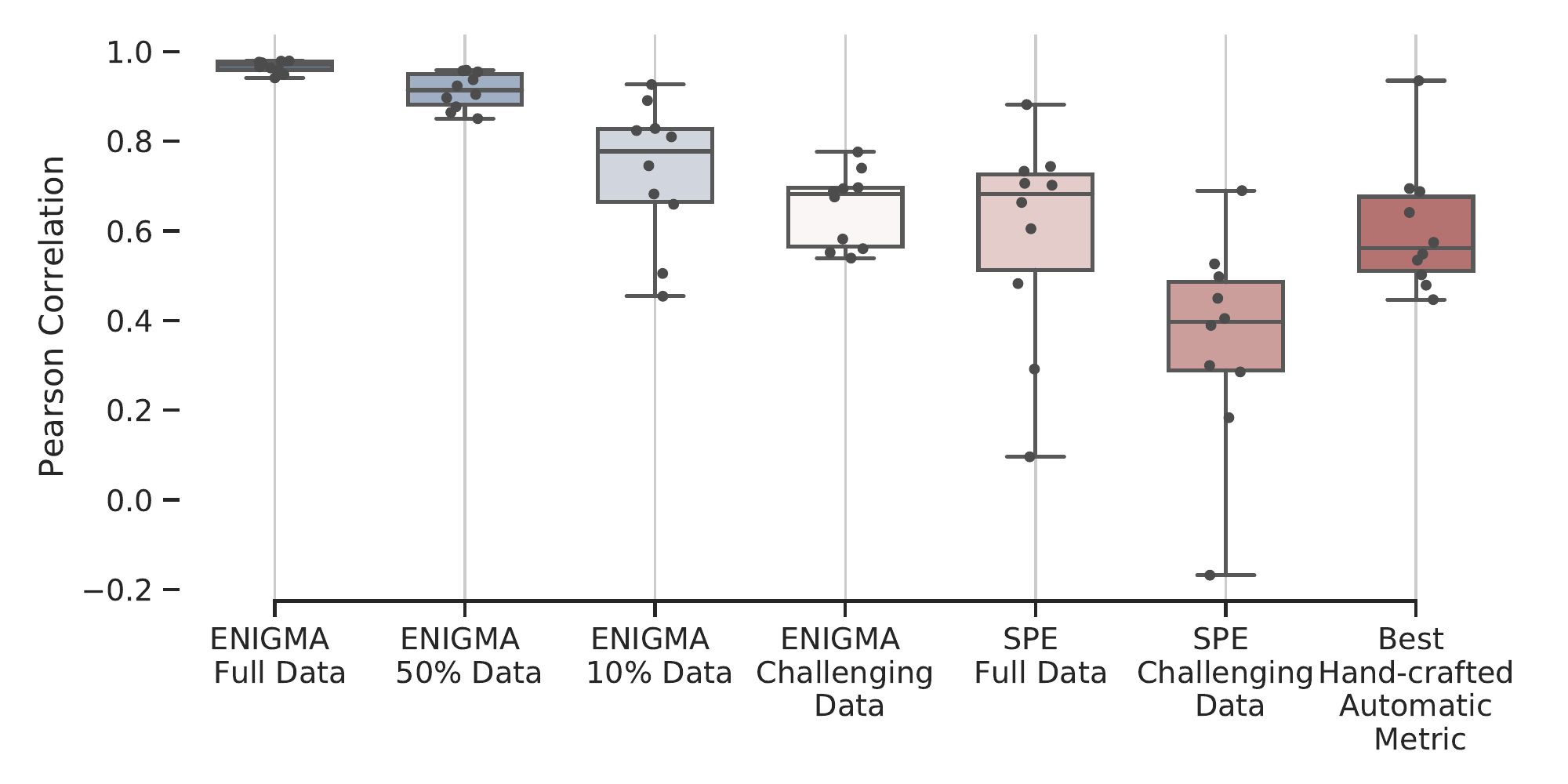}
  \caption{Pearson Correlation}
\end{subfigure}
\begin{subfigure}{0.7\textwidth}
  \centering
  \includegraphics[width=\textwidth]{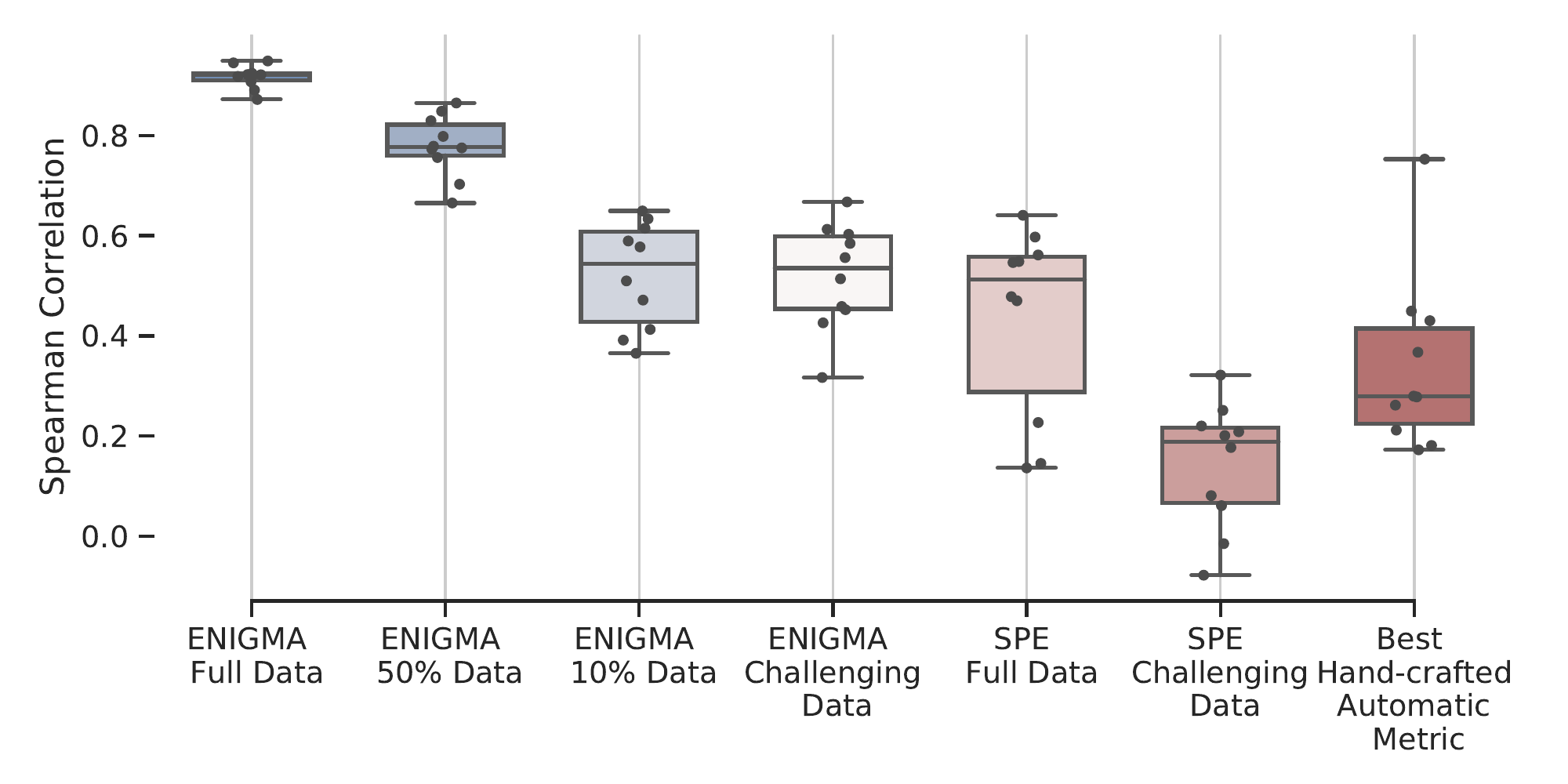}
  \caption{Spearman's Rank Correlation}
\end{subfigure}

\end{center}
\caption{Box plot of performance. Each box corresponds to each method. There are 10 points for each box representing correlations to 10 different human evaluation metrics.}
\label{fig:boxplot_convai2}
\end{figure}

\clearpage

\subsection{Error Analysis}


We analyze the detailed errors to identify the error pattern for better understand the limit of {\ours}. 
We calculate the absolute difference between the estimation and the true average reward. The results are summarized in Figure~\ref{fig:error_analysis}. A common pattern we see in ConvAI2 is that, when the true average reward is too high or too low, the {\ours} becomes less accurate. One possible reason for that is the lack of samples of dialogs with the extreme rewards in the experience data. We empirically verify this conjecture by comparing the the error with the reward distribution in the experience data in Figure~\ref{fig:error_analysis}. 
For AirDialog, such pattern is not obvious. That is because the quality of the decision module is more important to the agent performance for this task completion scores. As a result, even performance of the target agent is much higher/lower than the experience data, as long as they share similar languages, {\ours} can estimate the performance accurately.

\begin{figure}[!hbt]
\begin{center}
\begin{subfigure}{0.49\textwidth}
  \centering
    \includegraphics[width=\textwidth]{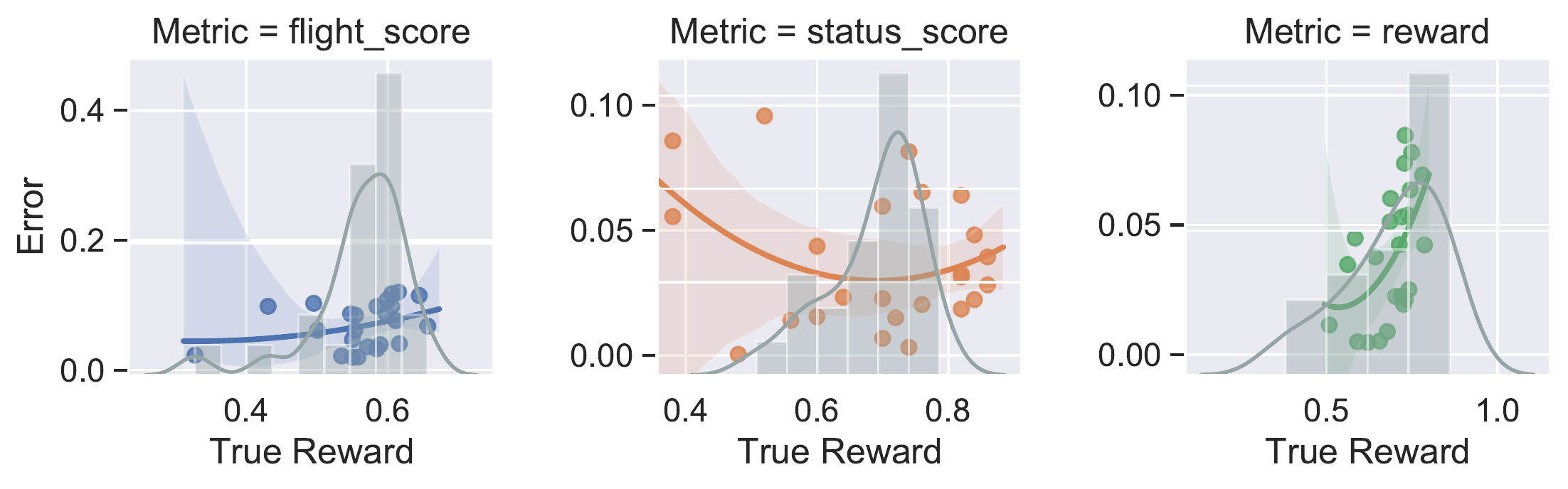}
      \vspace{-0.25in}
  \caption{AirDialog}
\end{subfigure}
\begin{subfigure}{0.8\textwidth}
  \centering
    \includegraphics[width=\textwidth]{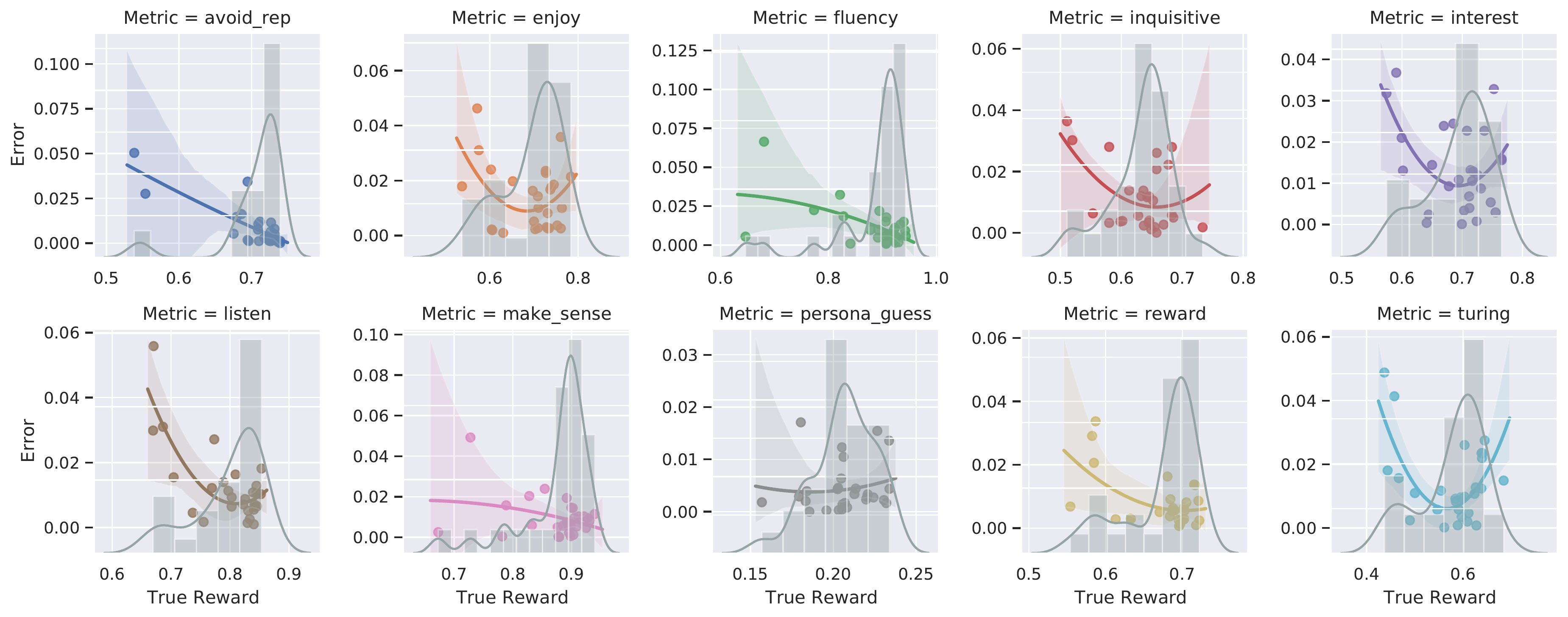}
      \vspace{-0.25in}
  \caption{ConvAI2}
\end{subfigure}
\end{center}
\caption{Error Analysis on AirDialog and ConvAI2. The x-axis is the true reward. The y-axis is the Estimation error. The solid line is the fitted quadratic function. The histogram is the empirical distribution of the true rewards of all the experience data.}
\label{fig:error_analysis}
\end{figure}

\subsection{Embedding Visualization}
\label{app:embed_vis}
In Figure~\ref{fig:embedding}, we present the t-SNE plots for the embedding of the state-action pairs from the behavior experience data and the target policy. The two sets of embeddings provided by the pre-trained language models are largely overlapped with rich semantic information. On the other hand, the embeddings provided by a randomly initialized model spread over the entire high-dimensional space.

\begin{figure}[!htb]
\begin{center}
\begin{subfigure}{0.7\textwidth}
  \centering
  \includegraphics[trim=4cm 1cm 3cm 1cm, clip=true,width=\textwidth]{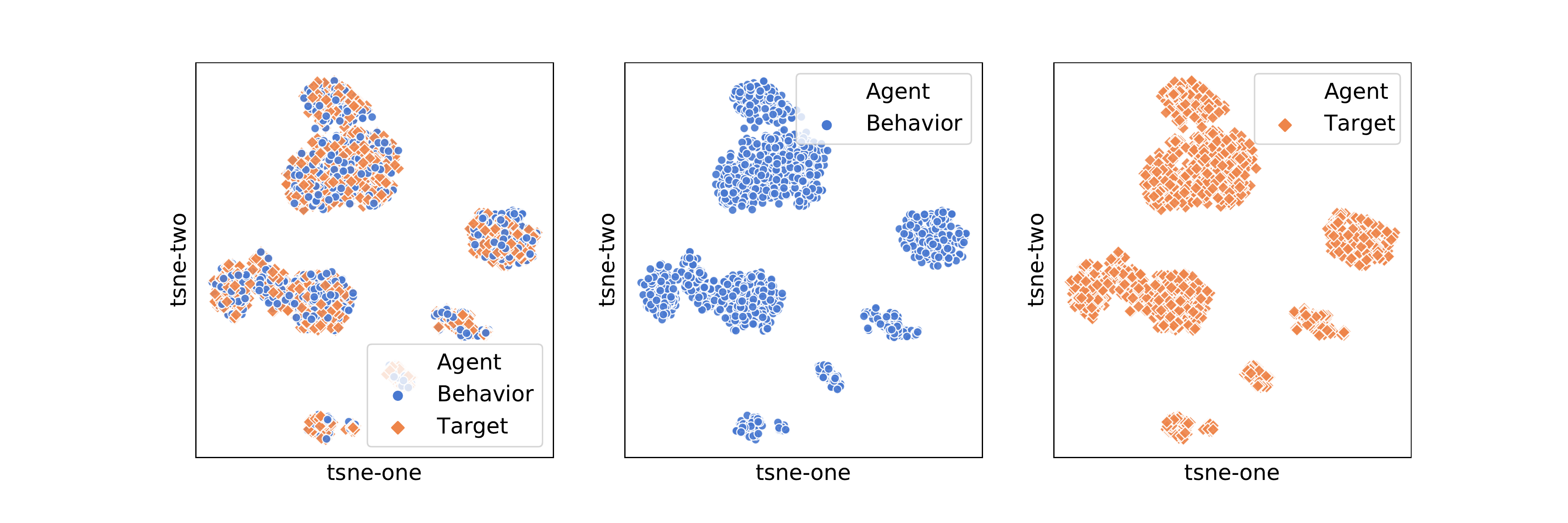}
  \caption{Initialized from RoBERTa-base.}
\end{subfigure}
\begin{subfigure}{0.7\textwidth}
  \centering
  \includegraphics[trim=4cm 1cm 3cm 1cm, clip=true,width=\textwidth]{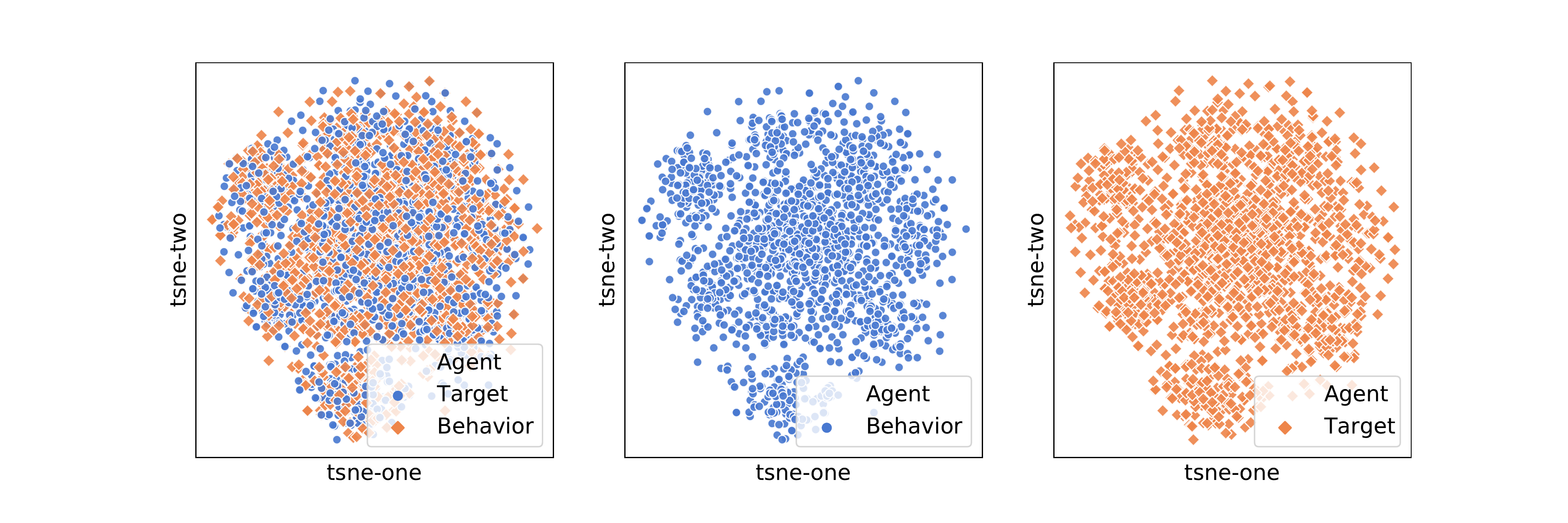}
  \caption{Random initialization.}
\end{subfigure}

\end{center}
\caption{t-SNE Plots for contextual embedding extracted from RoBERTa-$\zeta$ and  RoBERTa-$\nu$ on AirDialog.}
\label{fig:embedding}
\end{figure}

\clearpage
\section{Automatic Dialog Evaluation Comparison}
\label{app:dialog_cmp}

\newcommand{\notshortstack}[2][l]{%
  \begin{tabular}{@{}#1@{}}#2\end{tabular}%
}

\begin{table}[htb!]
    \tiny
	\centering
	\caption{Comparison between current automatic evaluation approaches. Part of the table is collected from two comprehensive surveys \cite{finch2020towards,deriu2020survey}. \textbf{\color{red} Red: Drawback}; \textbf{\color{green} Green: Advantage}. }
    \begin{tabularx}{\textwidth}{ >{\hsize=0.2\hsize\centering}X  *{1}{|>{\hsize=0.07\hsize\centering\arraybackslash}X} 
     *{2}{|>{\hsize=0.05\hsize\centering\arraybackslash}X} 
    *{2}{|>{\hsize=0.11\hsize\centering\arraybackslash}X} *{1}{|>{\hsize=0.06\hsize\centering\arraybackslash}X}
    | >{\hsize=0.26\hsize\centering\arraybackslash}X}
    \toprule \hline
    	Method & Criterion  & Dynamic (RL)
    	& Model Free& Experience Data &  Behavior Policy Similar to Target Policy  & Behavior Agnostic & Description / Examples \\
    	\hline
    	 {BLEU, Perplexity,METEOR,ROUGE \\ \cite{papineni2002bleu,brown1992estimate,banerjee2005meteor,lin2004rouge,galley2015deltableu} } & Language Quality  Score & \cellcolor[HTML]{F6DDCC} No  & N/A & Human-Human & \cellcolor[HTML]{F6DDCC} Yes & N/A & 
    	 The most widely use metrics: 
    	 Given a \textbf{\color{red}  fixed} dialog history, 
    	 they compute heuristic scores / 
    	 statistics based on comparing
    	 \textbf{\color{red} single turn}
    	 response given by the model 
    	 and reference human  responses.  
    	 E.g., BLEU, perplexity \\  
    	 \hline
    	  \citet{mitchell2008vector,rus2012optimal,forgues2014bootstrapping,
    	  higashinaka2014evaluating,xiang2014problematic,wieting2015towards,
    	  gandhe2016semi,tao2017ruber,shimanaka2019machine,zhang2019bertscore,
    	  ghazarian2019better,li2020task,mehri2020unsupervised,gao2020dialogue,
    	  lan2020pone,pang2020towards,zhang2020modeling,yuma2020ubleu,
    	  zhao2020designing,sai2020improving}
    	   &  {Language  Quality  Score}  & \cellcolor[HTML]{F6DDCC} {No}  & N/A & Human-human experience data or specially designed data.  & \cellcolor[HTML]{F6DDCC}  Yes (Implicitly) ~~~~ 
    	   \textit{Although they do not explicitly require such similarity, the single-turn responses of models trained from the same data are usually similar to human responses. } & N/A & 
    	  {
    	 Given a \textbf{\color{red}  fixed} dialog history, 
    	 they compute some scores for \textbf{\color{red}  single-turn} response given by the model 
    	 using \textbf{\color{green} an evaluator}, e.g., 
    	 pretrained word embeddings 
    	 and pretrained language models. 
    	 These method are the so-called ``embedding-based metrics''.
    	 The evaluator usually require training on a large-scale text dataset.
    	 They may or may not depends on 
    	 reference human responses. E.g. RUBER \citep{tao2017ruber}. }
    	 \\
    	 \hline
    	 \citet{lowe2017towards,huang2020grade,sellam2020bleurt}
    	 &  {Language  Quality  Score} & \cellcolor[HTML]{F6DDCC}  No & N/A & Human-Human and Human-Model & \cellcolor[HTML]{F6DDCC} Yes & N/A & 
    	 Mostly the same as above. In addition, the data for training the evaluator 
    	 includes human-model experience data to improve performance.
    	 E.g., ADEM \cite{lowe2017towards}. 
    	 \\
    	 \hline
    	 \citet{hemphill1990atis,williams-etal-2013-dialog} & {Task Completion Score} & \cellcolor[HTML]{F6DDCC}  No  & N/A & Human-Human & No & N/A & {
    	 They compute task related score 
    	 of task-specific actions 
    	 (e.g., intent detection) 
    	 given by the model for 
    	 a \textbf{\color{red} fixed complete} dialog. 
    	 These can only be used to test classification / information retrieval module.
    	 E.g., Intent Detection Accuracy.}
    	 \\ 
    	 \hline
    	 \citet{wei2018airdialogue} & Task Completion Score & \cellcolor[HTML]{D4EFDF} Yes  & \cellcolor[HTML]{F6DDCC} No & Human-Human and/or Human-Model & \cellcolor[HTML]{F6DDCC} Yes (Implicitly)  & N/A &
    	 {
    	 They compute task related score 
    	 of task-specific actions 
    	 (e.g., intent detection) 
    	 given by the model for 
    	 a dialog that is obtained by 
    	 \textbf{\color{green} interaction} with a \textbf{\color{red} user simulator}. 
    	 E.g., Self-Play Evaluation \cite{wei2018airdialogue}.} \\
    	 \hline
    	 \citet{ghandeharioun2019approximating} & Language Quality Score & \cellcolor[HTML]{D4EFDF} Yes  & \cellcolor[HTML]{F6DDCC}  No  & Human-Model & \cellcolor[HTML]{F6DDCC} Yes &  N/A &  {Basically the same as above. 
    	 In addition to modeling human 
    	 responses, they usually require 
    	 \textbf{modeling human reward function}. 
    	 E.g., Self-Play Evaluation \cite{ghandeharioun2019approximating}.}
    	 \\ 
    	 \hline
    	  Inverse Proportional Score E.g., \citet{horvitz1952generalization,wang2020reliable,precup2000eligibility} (not practical for dialog ) 
    	 & \cellcolor[HTML]{D4EFDF} Both & \cellcolor[HTML]{D4EFDF} Yes  &  \cellcolor[HTML]{D4EFDF}  Yes  & Human-Model & \cellcolor[HTML]{F6DDCC} Yes &  \cellcolor[HTML]{F6DDCC} No (not practical for dialog) & {
    	 \textbf{\color{green} Directly model the performance} 
    	 under the \textbf{\color{green} interaction} environment 
    	 using experience collected from 
    	 \textbf{\color{red} known} probabilistic models. 
    	 E.g., Inverse Proportional Score.}
    	 \\ 
    	 \hline
    	 \textbf{\ours} & \cellcolor[HTML]{D4EFDF} Both & \cellcolor[HTML]{D4EFDF} Yes  & \cellcolor[HTML]{D4EFDF} Yes  & Human-Model &  \cellcolor[HTML]{F6DDCC} Yes & \cellcolor[HTML]{D4EFDF} Yes & {
    	 \textbf{\color{green} Directly model the performance} 
    	 under \textbf{\color{green}  interaction} environment 
    	 using experience collected from 
    	 \textbf{\color{green} unknown} distribution. 
    	 E.g., Q-Learning, ENIGMA.}
    	 \\ 
    \hline \bottomrule
    \end{tabularx}
\end{table}

\subsection{Static Methods}
As can be seen, most previous methods only focus on evaluating \textit{language quality} for \textbf{\color{red} single-turn} response of a \textbf{\color{red} fixed} context. These methods can not evaluate agents under interactive context. As a result, they can not be extended to \textit{goal-oriented} dialogs. 

For goal-oriented dialogs, the static evaluation methods are very limited. The static methods can only evaluate the model actions to a \textbf{\color{red} fixed complete} dialog, e.g., intent detection.

\textbf{Comparison to Meena Paper \citep{adiwardana2020towards}}: 1. They only show that PPL correlates with \textbf{one specific} metric: Sensibleness and Specificity Average. We consider a wide range of metrics for both task-completion scores and dialog quality scores (listed in Table 7). No evidence shows
PPL correlates well with most metrics. 
2. They draw the conclusion using \textbf{only 7} chatbots. This conclusion is not statistically reliable, i.e. for $R^2=0.93$ with 7 data points, the $95\%$ confident interval is $0.64\leq R^2 \leq 0.99$. On the other hand, we use \textbf{24/29} agents. With 24 data points, the $95\%$ CI is $0.87\leq R^2 \leq 0.96$, which is much more reliable.

\subsection{Dynamic Methods}
Previous dynamic methods under RL framework are based on self-play evluation, which requires learning the environment, i.e, human. As discussed in the main paper, learning a human model is significantly beyond the current technical limit. 

{\ours} overcome learning the environment by directly modeling the performance of agents.

\subsection{Information Theoretic Limit}
The common limitation of all existing methods is that they require similarity between the target policy and behavioral policies, so that the experience data can cover sufficient interaction patterns between the target policy and human.

For example, BLEU score requires the agent response being similar to the reference response.
Another example is ADEM \citep{lowe2017towards}, they include the target policy into the experience data collection to achieve decent performance ($0.37$ Pearson correlation to human ratings). If the target policy is excluded from the behavior policies, ADEM only achieves $0.13$ Pearson correlation, which is even lower than the one between dialog length and human ratings $0.27$.

For static single-turn evaluation for language quality, one might satisfy the requirement by just using human as the behavior policy and large-scale diverse experience data.
That is because the single-turn responses of the target model have a very similar pattern to the human responses, as they are usually trained to mimic one-turn human response. 
However, high similarity of responses between the target model and human requires a very strong target model trained with large-scale data, which is not practical in most settings. Some existing work try to alleviate such requirement and increase the coverage of experience data by external knowledge graph \citep{huang2020grade} and synthetic samples \citep{sellam2020bleurt}. We remark that although the static methods only require single-turn similarity between behavior and target policies, their empirical performance is unsatisfactory comparing with multi-turn interactive human evaluation \citep{ghandeharioun2019approximating}. 

In multi-turn interactive evaluation, we can not just use human as the behavior policy especially for goal-oriented dialogs. That is because the multi-turn behavior of the target model is very different from the human behavior. Take Airdialog as an example, human agents can always book the correct tickets while the target model may fail for many times. 

Such a limitation is the theoretical requirement of bounded state-action density ratio between target and behavior policies, which
has been discussed in many off-policy evaluation literature \cite{wang2020statistical,xie2019towards}.

Due to such theoretical limitation, a large amount of \textbf{human-model} interactive evaluation data is needed to study automatic interactive evaluation. However, most evaluation logs are not publicly available, and research in this direction has largely lagged behind. 
To the best of our knowledge, ConvAI2
\citep{see2019what} 
is the only public comprehensive human-model interactive evaluation data. \footnote{Our human-model evaluation data on Airdialog will also be released soon.} Therefore, we recommend that the research community release human-model interaction evaluation data to promote dialog evaluation/learning research and benefit the entire community.


\end{document}